\documentclass{article}
\usepackage{iclr2027_conference,times}
\iclrfinalcopy

\usepackage[utf8]{inputenc} %
\usepackage[T1]{fontenc}    %
\usepackage[hidelinks]{hyperref}      %
\usepackage{url}            %
\usepackage{booktabs}       %
\usepackage{amsfonts}       %

\usepackage{nicefrac}       %
\usepackage{microtype}      %
\usepackage[table]{xcolor}  %
\usepackage{amsmath}        %
\usepackage{amssymb}        %
\usepackage{algorithm}      %
\usepackage{algpseudocode}  %

\usepackage{graphicx}
\usepackage{wrapfig}
\usepackage{amsthm}   %

\usepackage{multirow}
\newtheorem{theorem}{Theorem}[section]
\newtheorem{proposition}[theorem]{Proposition}
\newtheorem{lemma}[theorem]{Lemma}
\newtheorem{corollary}[theorem]{Corollary}
\newtheorem{property}[theorem]{Property}
\newtheorem{definition}[theorem]{Definition}
\newtheorem{assumption}[theorem]{Assumption}

\newtheoremstyle{remarkitalic}%
  {3pt}%
  {3pt}%
  {\itshape}%
  {}%
  {\bfseries}%
  {.}%
  {0.5em}%
  {}%

\theoremstyle{remarkitalic}

\newtheorem*{remark*}{Remark}

\newcommand{\appsectionline}[2]{%
  \vspace{1.1em}%
  \noindent
  \makebox[2.2em][l]{\textbf{\hyperref[#1]{\ref*{#1}}}}%
  \textbf{\hyperref[#1]{#2}}%
  \hfill
  \textbf{\hyperref[#1]{\pageref*{#1}}}%
  \par
}

\newcommand{\appsubsectionline}[2]{%
  \noindent\hspace*{2.5em}%
  \makebox[3.5em][l]{\hyperref[#1]{\ref*{#1}}}%
  \hyperref[#1]{#2}%
  \nobreak\leaders\hbox to 0.6em{\hss.\hss}\hfill
  \hyperref[#1]{\pageref*{#1}}%
  \par
}

\usepackage{comment}
\title{Hierarchy-GBP: Accelerating Factor Graph\\
Inference via Abstraction and Recovery}

\author{%
\parbox{\dimexpr\textwidth-2\tabcolsep\relax}{%
\centering\normalfont
\begin{tabular}{@{}c@{}}
Yuzhou Cheng\textsuperscript{1} \quad Tom Yates\textsuperscript{1} \quad Ignacio Alzugaray\textsuperscript{1} \quad Danyal Akarca\textsuperscript{1} \\[0.2em]
Pedro A. M. Mediano\textsuperscript{1,\(\dagger\)} \quad Andrew J. Davison\textsuperscript{1,\(\dagger\)}
\end{tabular}%
}%
}

\usepackage{enumitem}

\begin{document}

\maketitle
\lhead{} %
\footnotetext[1]{Imperial College London, London, United Kingdom.}
\begingroup
\renewcommand{\thefootnote}{\fnsymbol{footnote}}
\footnotetext[2]{Co-supervised this work.}
\endgroup

\begin{abstract}
Gaussian Belief Propagation (GBP) is a distributed inference algorithm that passes messages in graphical models, making it attractive for scalable spatial intelligence. 
However, we find GBP most effective locally: it rapidly smooths message errors that vary sharply between neighbor variables, but corrects global errors across distant graph regions incrementally through long-range message propagations. 
We propose \emph{Hierarchy-GBP (H-GBP)}, an iterative, two-stage framework that accelerates GBP by first solving these global errors with a coarse graph approximation (abstraction) and projecting the results back to the original graph (recovery), then refining the remaining local errors with GBP. We prove H-GBP convergence to the optimum by deriving the combined matrix operator of our abstraction and recovery steps and analyzing its spectral radius.
Experiments on linear sparse graphs show that H-GBP converges fundamentally faster than standard GBP. Moreover, we validate H-GBP on two important spatial problems: Pose Graph Optimization (PGO) and Bundle Adjustment (BA). H-GBP markedly accelerates large-scale PGO and achieves state-of-the-art runtime across all tested BA scales.

\end{abstract}

\begin{figure}[!h]
    \centering
\includegraphics[width=\linewidth]{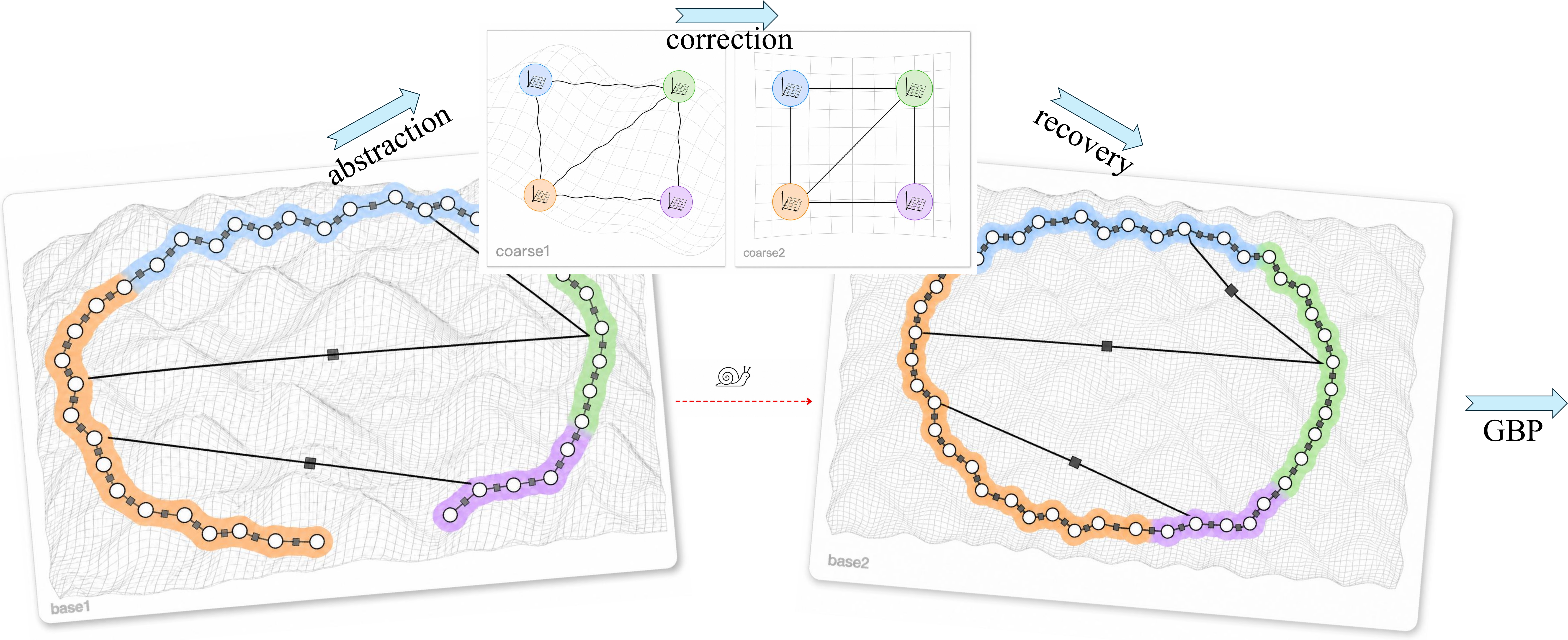}
        \vspace{-10pt}
\setlength{\abovecaptionskip}{1pt}
    \caption{Illustration of Hierarchy-GBP: A fine pose graph is partitioned along odometry order into local groups, where each group extracts a low-rank subspace for coarse abstraction. The hierarchical cycle first removes coherent coarse error and then lets GBP damp the remaining local error.}
    \label{fig:teaser_image}
\end{figure}

\vspace{-5pt}
\begin{figure}[t]
    \centering
    \includegraphics[width=\linewidth]{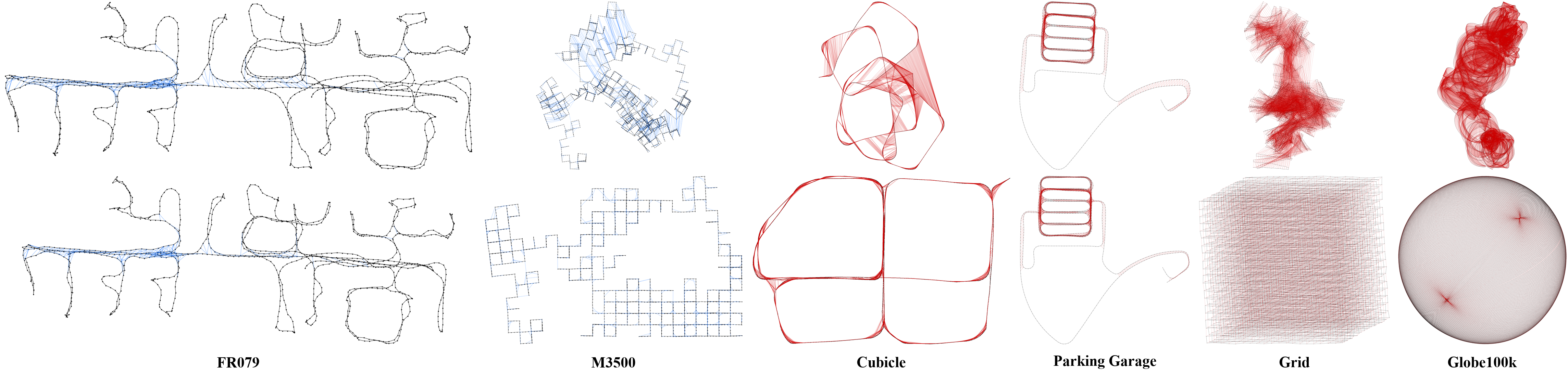}
        \vspace{-10pt}
    \caption{
    Pose graph's initial odometry trajectories versus trajectories after H-GBP optimization.
    }
    \label{fig:experiment_figure}
\end{figure}

\vspace{-5pt}
\begin{figure}[t]
    \centering
    \includegraphics[width=\linewidth]{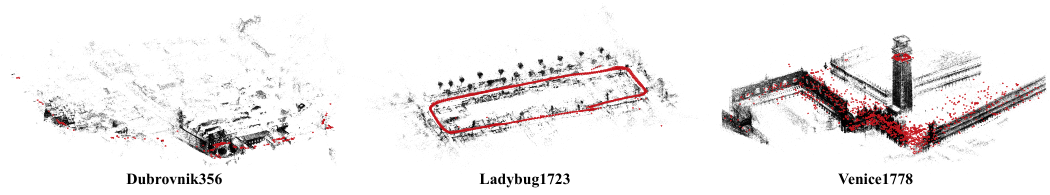}
        \vspace{-10pt}
    \caption{
    Bundle adjustment reconstructions with H-GBP optimization.
    }
\label{fig:experiment_figure2}
\end{figure}

\section{Introduction}
Gaussian Belief Propagation (GBP) is an efficient distributed graphical inference algorithm \citep{pearl2022reverend, yates2026belief} that has been successfully applied in error-correcting decoding \citep{frey1997revolution}, multiuser detection \citep{bickson2008multiuser}, and solving large-scale linear systems \citep{shental2008gaussian}. 
Inference proceeds by passing Gaussian distributions between nodes in a probabilistic graphical model, iteratively updating each variable's marginal belief.
This local, decoupled, and scalable computational model makes GBP a highly promising inference and reasoning framework for future spatial intelligence systems \citep{davison2019futuremapping}.

GBP has the desirable property that, if it converges on a Gaussian graph, it converges to the global optimal mean \citep{weiss1999correctness}. 
However, its convergence rate is compromised by graph structures like local loops or long graph diameters \citep{johnson2005walksum,malioutov2006walksums,du2018vectorvalued}, and GBP can often require many iterations \citep{elkurdi2012relaxed,liu2012feedback}. 
Algebraically, GBP behaves as a smoother: it quickly damps local (high-frequency) errors that vary sharply between neighboring nodes, while global (low-frequency) errors across distant graph regions require many iterations to reduce. 
We observe the local-versus-global error separation in many spatial inference problems, including SLAM, 3D computer vision, and sensor networks.

This motivates a two-level view of GBP (see Fig.\ref{fig:teaser_image}). The \emph{fine space} is the original variable space where GBP performs local message passing, while the \emph{coarse space} is a low-dimensional graph approximation suitable for removing low-frequency errors. We solidify this idea with a localized \emph{low-rank abstraction}. Variables are grouped locally and abstracted via their most prominent eigenvalues, forming a coarse problem that captures low-frequency errors \citep{briggs2000multigrid,nataf2011dtn,spillane2014geneo}. 

Our algorithm \emph{Hierarchy-GBP (H-GBP)} alternates coarse residual correction with fine-level synchronous GBP. 
For linear Gaussian factor graphs, we derive an operator describing a `V-cycle' of alternating coarse and fine steps, which lets us compute the spectral radius and predict a large improvement in convergence rate compared to standard GBP on the fine graph. Experiments on sparse spatial factor graphs \citep{dellaert2017factorgraphs, davison2018futuremapping,papadimitriou2025spatial,jiao2026opennavmap} verify this predicted acceleration in linear problems. We also show that the abstraction-and-recovery principle extends to practically important, non-linear Pose Graph Optimization (PGO) (see Fig.\ref{fig:experiment_figure}) and Bundle Adjustment (BA) (see Fig.\ref{fig:experiment_figure2}).

Specifically, our main contributions are:
\begin{itemize}

\item \textbf{H-GBP framework} We introduce a multilevel GBP framework for general sparse factor graphs. H-GBP builds a low-rank coarse space from low-eigenvalue modes, and alternates coarse corrections with GBP updates on the original fine graph.

\item \textbf{Empirical validation} H-GBP converges faster than standard GBP on sparse linear factor graphs empirically. Extensive experiments on PGO and BA benchmarks show H-GBP's speed and accuracy benefits, and demonstrate the potential for broader spatial intelligence applications.

\item \textbf{Convergence theory} For linear factor graphs, we establish a global precision convergence rate and derive a reduced cycle operator whose spectral radius is a geometric global convergence criterion. We prove the local convergence of H-GBP on manifolds.

\end{itemize}

\section{Related Work}

GBP on linear-Gaussian graphs has a strong theoretical foundation.
\citet{weiss1999correctness} shows that when loopy GBP converges on a Gaussian
graphical model, it recovers the correct posterior means, though the variances can be
overconfident. Subsequent work connected GBP to walk-sum expansions and
linear/quadratic solvers, clarifying that its behavior is governed by global coupling
structure rather than by any single local loop in isolation
\citep{malioutov2006walksums,shental2008gaussian, moallemi2009convergence}. More recent analyses extended
these perspectives to vector-valued Gaussian models, which arise in estimation problems \citep{du2018vectorvalued, yates2026belief}.
Our H-GBP analysis takes a \textbf{matrix-operator view} of the underlying dynamics.

A second line of work improves GBP efficiency while remaining on a flat graph. Representative
ideas include residual scheduling \citep{elidan2006residual}, relaxation or damping \citep{elkurdi2012relaxed}, and structural
modifications such as feedback message passing
\citep{liu2012feedback}. These methods can
reduce oscillation or avoid unnecessary updates, but they do so by
reordering or partially restructuring message passing on the original graph.
They do not build a coarse representation to divide and counter
long-range, low-frequency error components.

H-GBP is inspired by domain-decomposition methods, where local smoothers remove high-frequency errors and coarse spaces correct slowly varying components \citep{briggs2000multigrid}. The closest prior work combines GBP with multigrid for PDE/FEM systems \citep{elkurdi2014parallel,elkurdi2015parallel}. In contrast, H-GBP constructs the hierarchy directly on sparse factor graphs and derives the coupled coarse/fine message dynamics. To our knowledge, it is the first multilevel GBP framework formulated and analyzed for \textbf{general sparse factor graphs} beyond PDE/FEM discretizations.

Our work is also related to the factor-graph optimization literature in robotics \citep{tazaki2022spanning}. Factor graphs
have become a standard abstraction for large-scale estimation and SLAM, while sparse nonlinear
solvers such as g2o and Ceres remain the dominant practical baselines
\citep{dellaert2017factorgraphs,kuemmerle2011g2o, ceres}. Recently, there has
been renewed interest in GBP as a distributed inference engine for spatial estimation, including multi-robot localization, continuous-time SLAM, and bundle adjustment on graph processors
\citep{murai2024distributed,hug2024hyperion, ortiz2020bundle}. H-GBP
sits between these threads: it preserves the locality and parallelism of GBP with an abstract-then-recover acceleration mechanism. 

\section{Preliminaries}
\subsection{Factor Graphs} \label{sec:factor_graphs}
A factor graph $\mathcal G=(\mathcal V,\mathcal F,\mathcal E)$ is a bipartite graph with variable nodes $\mathcal V$, factor nodes $\mathcal F$ and edges $\mathcal E$ that represent the factorisation of a joint probability distribution.
\begin{equation}
p(x)\propto\prod_{f\in\mathcal F}\psi_f(x_{\mathcal N(f)})
\end{equation}
where $x=(x_1, ..., x_n)^T$ is the graph's state vector and $x_{\mathcal N(f)}$ is the subset of variables connected to factor $f$.

Variables are associated with the marginal posterior distribution (so-called belief) of their state $x_i$, and each factor $f \in \mathcal F$ defines a joint Gaussian potential $\psi_f$ over the states of its neighboring variables. 
Note that a factor's dimensionality depends on the number and dimension of its neighbours. 
In our work, all Gaussian potentials are defined in their canonical form, using the precision matrix $\Lambda_f$ and information vector $\eta_f$:
\[
\psi_f(x_{\mathcal N(f)})\propto
\exp\left\{
-\frac12 x_{\mathcal N(f)}^\top\Lambda_f x_{\mathcal N(f)}+\eta_f^\top x_{\mathcal N(f)}
\right\}
\]

Collecting these parameters from all factors then yields the global linear system $Ax=b$, where $A$ is the global precision matrix $\mathbf{\Lambda}$, $b$ is the global information vector $\boldsymbol{\eta}$, and $x$ is the global mean vector. 
The factor graph's unique Maximum A Posteriori (MAP) estimate is then given by $x_\star=A^{-1} b$.

\subsection{Gaussian Belief Propagation} \label{sec:GBP}

We provide here a brief overview of Gaussian Belief Propagation (GBP), and direct interested readers to \citet{ortiz2021visual} for further detail.

\begin{figure}[ht]
    \centering
    \includegraphics[width=\linewidth]{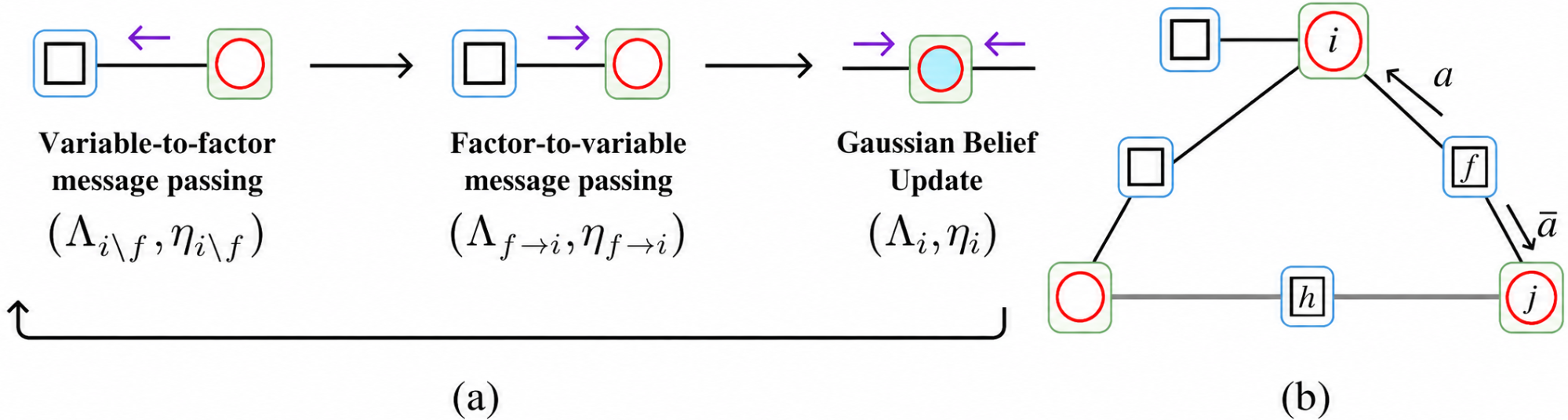}
        \vspace{-10pt}
        \setlength{\abovecaptionskip}{1.5pt}
    \caption{
    (a) Illustration of one synchronous iteration of Gaussian belief propagation. 
    (b) Local factor-variable indexing illustration for the formulation of Eqs. \eqref{eq:q-transfer} of message evolvement.}
    \label{fig:gbp_overview}
\end{figure}

Inference for GBP takes place through iterative message passing between neighboring nodes, where each message represents a marginal probability distribution over the state space of a single variable node.
Variable beliefs $(\widehat\Lambda_i, \widehat\eta_i)$ are then updated based on their most recently received messages (Fig. \ref{fig:gbp_overview} (a)). 
We detail GBP's 3 update equations below, showing only factors with degree 2 or less, which are applied iteratively for inference. 
The Schur complement handles higher-degree factor nodes in the same way, so all steps extend exactly.

\textbf{Variable-to-factor message update}:
Messages from variables to factors are the variable's belief with the most recent message from the receiving factor subtracted.
\begin{equation}
\label{eq:variable-to-factor}
\Lambda_{i\setminus f}=\widehat\Lambda_i-\Lambda_{f\to i} ,\qquad \eta_{i\setminus f}=\widehat\eta_i-\eta_{f\to i}.
\end{equation}

\textbf{Factor-to-variable message update}: 
Messages from factors to variables sum the messages from all other neighboring variables and marginalise over them to calculate the factor’s belief in terms of the receiving variable.
To do this, we block partition the factor's joint precision matrix over its neighboring variables $i$ and $j$:
\(
\Lambda_f
=
\begin{bmatrix}
\Lambda_{ii}^f & \Lambda_{ij}^f \\
\Lambda_{ji}^f & \Lambda_{jj}^f
\end{bmatrix}
\).
From here, we can calculate updated parameters for our factor-to-variable message, $\Lambda^+$ \& $\eta^+$:

\begin{equation}
\label{eq:schur-message}
\Lambda_{f\to i}^{+}=\Lambda_{ii}^f-\Lambda_{ij}^f(\Lambda_{jj}^f+\Lambda_{j\setminus f})^{-1}\Lambda_{ji}^f,\qquad
\eta_{f\to i}^{+}=\eta_i^f-\Lambda_{ij}^f(\Lambda_{jj}^f+\Lambda_{j\setminus f})^{-1}\bigl(\eta_j^f+\eta_{j\setminus f}\bigr)
\end{equation}

\textbf{Belief update}: 
Given an incoming message on each edge to variable $i$, where $i\in N(f)$, the belief in the variable is obtained by summing all incoming messages. 
\begin{equation}
\label{eq:variable-update}
\widehat\Lambda_i=\sum_{f\in N(i)}\Lambda_{f\to i},\qquad \widehat\eta_i=\sum_{f\in N(i)}\eta_{f\to i}, 
\end{equation}

In this work, we only consider synchronous GBP, in which one iteration consists of all variables in a graph updating their messages, all factors in the graph updating their messages, followed by all variables updating their belief. 

\section{Hierarchy-GBP: Algorithm and Matrix Operators}

Algorithm~\ref{alg:fresh-hgbp} summarizes H-GBP as an iterative linear solver for the Gaussian system \(Ax=b\). 
The prolongation matrix \(P\) is constructed once and kept fixed. 
Each cycle runs \(\boldsymbol{k}\) synchronous GBP sweeps, solves the coarse residual problem
\(P^\top A P\,\delta z=P^\top(b-Ax)\), and injects \(P\delta z\) using the message-preserving rule below \eqref{inconsistent_reconver}. The resulting two-level iteration combines global coarse correction with local message passing.

To better explain and analyze linear H-GBP, we derive the matrix operator for each step and combine them into the operator of \textbf{one full cycle}. 
The steps include:

\begin{enumerate}
    \item Abstraction \& Recovery of a coarse graph approximation
    \item The first GBP iteration after Recovery
    \item The remaining GBP iterations
\end{enumerate}

Note that separating steps 2 \& 3 is necessary because we only project variable beliefs, not graph messages, to the coarse graph approximation. 
Once this approximation is solved and the residuals recovered, the variable beliefs are updated but inconsistent with the non-updated graph messages. 
We show that this inconsistency occurs only for one GBP iteration, so we compute the resulting adjusted operator that represents the first iteration after abstraction and recovery. 

Let $x_\star=A^{-1}b$. We define the mean vector residual $v:=x-x_\star$. Concatenating all factor-to-variable information vectors yields a big information vector $\eta$. Let $\eta^\star$ be a fixed point of the H-GBP system. We define the information residual $m:=\eta-\eta^\star$. Note that $\eta^\star$ exists by construction (Appendix \ref{app:canonical-reference}).

\subsection{Operator 1: Abstraction \& Recovery of a Coarse Graph Approximation}
\label{sec:abs_rec_operator}

We derive the matrix operator for the abstraction and recovery step of H-GBP. 
To start, we group the variable nodes together, forming $G$ "supernodes".
For each supernode group $g$, we combine the precision matrices from factors wholly contained in the group into a single matrix $H_g$, where
\begin{equation}
\label{eq:fresh-local-operator}
(H_{g})_{i,j}=\sum_{ N(f)\subset g}(\Lambda_f)_{i,j}
\end{equation}

To reduce supernode dimensionality, for each group $g$ we perform eigenvalue decomposition on $H_g = U \Sigma U^\top$.
We then approximate $H_g$ with its \(\boldsymbol{r}\) smallest eigenvalues and corresponding eigenvectors $U_r$.

If we concatenate all groups' $U_{r}$ matrices blockwise into a graph-wide matrix $P$, we get a transformation from the original graph's space to an approximated, coarse space. 
This global prolongation matrix is assembled blockwise as
\[
\label{eq:fresh-prolongation}
P = 
\vspace{-0.5em}
\begin{bmatrix}
U_{r,1} & ... & \mathbf 0 \\
\vdots & \ddots & \vdots \\
\mathbf 0 & ... & U_{r,G} \\
\end{bmatrix}
\]

where $U_{r,g}$ represents $U_r$ for group $g$ and $\mathbf 0$ is the zero matrix.

The coarse correction solves 
\((P^\top AP)\delta z=P^\top(b-Ax)\). To inject this correction into the variable beliefs, we first add increment \(P\delta z\) to $x$. So, $x^+=x+P\delta z$. And then, we update each variable's information vector by this new mean \begin{equation}
\label{inconsistent_reconver}
\widehat\eta_i=\widehat\Lambda_i x_i^+.
\end{equation}

\begin{property}
   The mean error $v^+ $ after abstraction and recovery  is 
\begin{equation}
\label{eq:exact_coarse_correction}
v^+=C v, \quad C=I-P(P^\top A P)^{-1}P^\top A
\end{equation}
\end{property}

\subsection{Operator 2: The First GBP Iteration After Recovery}
\label{sec:first_gbp_operator}

To derive the operator for Synchronous GBP, we stack all information vector messages from factor-to-variable, which allows Eq.~\eqref{eq:schur-message} to be written in matrix form that represents the entire graph.
After the graph's precision matrices have converged, we can substitute Section \ref{sec:GBP}'s update equations (\eqref{eq:variable-update} into \eqref{eq:variable-to-factor} into \eqref{eq:schur-message}) and study the properties of the resulting matrix.

However, once the original graph's variable beliefs are updated from Eq. \eqref{inconsistent_reconver}, they are inconsistent with the graph's messages, which did not change under abstraction and recovery. 

As a result, the variable belief's information vector will not equal the sum of incoming information vectors. One synchronous GBP iteration will repair this, so we calculate the operator that represents the first iteration after abstraction and recovery in this section.

\begin{property}
After precision matrices have converged, we consider the errors in the variables' mean vector $v=x-x_\star$ and information vector $m=\eta-\eta^\star$. 
The first GBP iteration with inconsistent beliefs \& messages takes the form $m^+=Jv+\mathcal U m$, where for factor $f$ sending to variable $i$ with other neighboring variable $j$ (as in Fig \ref{fig:gbp_overview}b), these operators are given by:
\begin{equation}
\label{eq:first-sweep-repair}
    (Jv)_{f \to i}=-\Lambda_{i j}^f(\Lambda_{j j}^f+\widehat\Lambda_{j}^\star-\Lambda_{f \to j}^\star)^{-1}\widehat\Lambda_{j}^\star v_{j}, \qquad (\mathcal U m)_{f \to i}=\Lambda_{i j}^f(\Lambda_{j j}^f+\widehat\Lambda_{j}^\star-\Lambda_{f \to j}^\star)^{-1} m_{f \to j}.
\end{equation} 
\end{property}

$J$ and $\mathcal U$ are therefore our operators for the first iteration of GBP after abstraction and recovery.

\subsection{Operator 3: The Remaining Synchronous GBP iterations}
\label{sec:sync_gbp_operator}
We next derive the matrix operator for synchronous GBP iterations once Section \ref{sec:first_gbp_operator}'s inconsistency has been resolved.
As messages and variable beliefs are now consistent, we can substitute all update equations in Section \ref{sec:GBP} without trouble.

\begin{property}
Let 
\( \eta=\bigl[ 
\eta_{f \to i}^{\top}\;
\eta_{f \to j}^{\top}\;
\cdots\;
\bigr]^{\top} \)
denote the stacked column vector of all factor-to-variable information vector messages in graph $\mathcal G$, letting us write Eq.~\eqref{eq:schur-message} in matrix form.
When precisions have converged, we substitute Section \ref{sec:GBP}'s equations (\eqref{eq:variable-to-factor} into \eqref{eq:schur-message} into \eqref{eq:variable-update}) and show that the information vector's message update under one synchronous GBP iteration is of the affine form $\eta^+=Q\eta+s$, where the block entries of $Q$ are given by:
\vspace{-0.5em}
\begin{equation}
\label{eq:q-transfer}
Q_{(f \to i), (h \to j)} = -\Lambda_{i j}^f(\Lambda_{j j}^f+\widehat\Lambda_{j}^\star-\Lambda_{f \to j}^\star)^{-1}
\end{equation}
To demonstrate this property, a toy example is given in Appendix \ref{app:example_illustration}. Here, $j$ is the other neighboring node of $f$ than $i$ (see Fig. \ref{fig:gbp_overview} (b)). $Q_{(f \to i), (h \to j)}$ indexes $Q$ for the block corresponding to messages from factor $f$ to variable $i$, and factor $h$ to variable $j$, and the rest of the block entries of $Q$ are $\mathbf0$. 
\end{property}

The converged value $\eta^\star$ of GBP is unique and satisfies $\eta^\star=Q\eta^\star+s$. 
Recall that we define the error between our current eta-message and the fixed-point eta-message as $m=\eta-\eta^\star$. 
Changing variables from $\eta$ to $m$ removes the $s$ term, and we arrive at our matrix operator, $Q$, for one iteration of GBP:
\begin{equation}
\label{eq:m=Qm}
m^+=Qm.
\end{equation}
The message error after k-1 synchronous GBP iterations is then given by $m_{k+1}=Q^{k+1}m_0$, where $m_0$ is the initial mean error values.

Finally, H-GBP is iterative, and needs accurate variable means for the next abstraction and recovery. 
We therefore define a readout operator $\mathcal R$ that updates all errors in the variable means $v$ using the variables' information vector error values $m$. 

\vspace{-0.5em}
\begin{equation}
\label{readout}
v_i = (\mathcal Rm)_i=(\widehat\Lambda_i^\star)^{-1}\sum_{f \in N(i)}{m_{f\to i}}
\end{equation}
\vspace{-0.5em}

Combining all operators yields a single matrix-operator description of one full H-GBP cycle. 

\begin{property}
From operators $C$, $Jv + \mathcal U m$, the remaining $k-1$ iterations of operator $Q$ and $\mathcal R$, we get

\vspace{-0.5em}
\begin{equation}
\label{eq:taug}
\begin{bmatrix}
v^{+}\\
m^{+}
\end{bmatrix}
=
T_{{\rm aug},k}
\begin{bmatrix}
v\\
m
\end{bmatrix},
\qquad
T_{{\rm aug},k}
=
\begin{bmatrix}
C\mathcal R Q^{k-1}J
&
C\mathcal R Q^{k-1}\mathcal U 
\\[1mm]
Q^{k-1}J
&
Q^{k-1}\mathcal U
\end{bmatrix}.
\end{equation}
\vspace{-0.5em}

\end{property}

\subsection{Extension to manifolds}
\label{sec:manifold-hgbp}

H-GBP naturally extends to nonlinear factor graph optimization by operating on linearizations in the tangent space.  We use the standard
right-perturbation convention from Lie theory \citep{sola2018micro} and factor-graph
optimization~\citep{dellaert2017factorgraphs}: at the current
estimate \(\bar X\), each residual admits the first-order approximation
\(
r_f(\bar X_f\oplus\xi_f)\approx r_f(\bar X_f)+J_f\xi_f .
\)
This yields the linear Gaussian system in the tangent space that we solve using H-GBP, before retracting
\(\bar X_i\leftarrow \bar X_i\operatorname{Exp}(\xi_i)\). Optimization on manifolds involves repeating the above process continuously until \(\bar X_i\) converges.

\section{Empirical Results}
All experiments are on an Intel 13th Gen Core i9-13900HX CPU with 64 GB RAM, without GPU acceleration. Experimental details are provided in Appendix~\ref{app:additional}.

\subsection{Linear Hierarchy-GBP}

\begin{figure}[h]
    \centering
    \includegraphics[width=\linewidth]{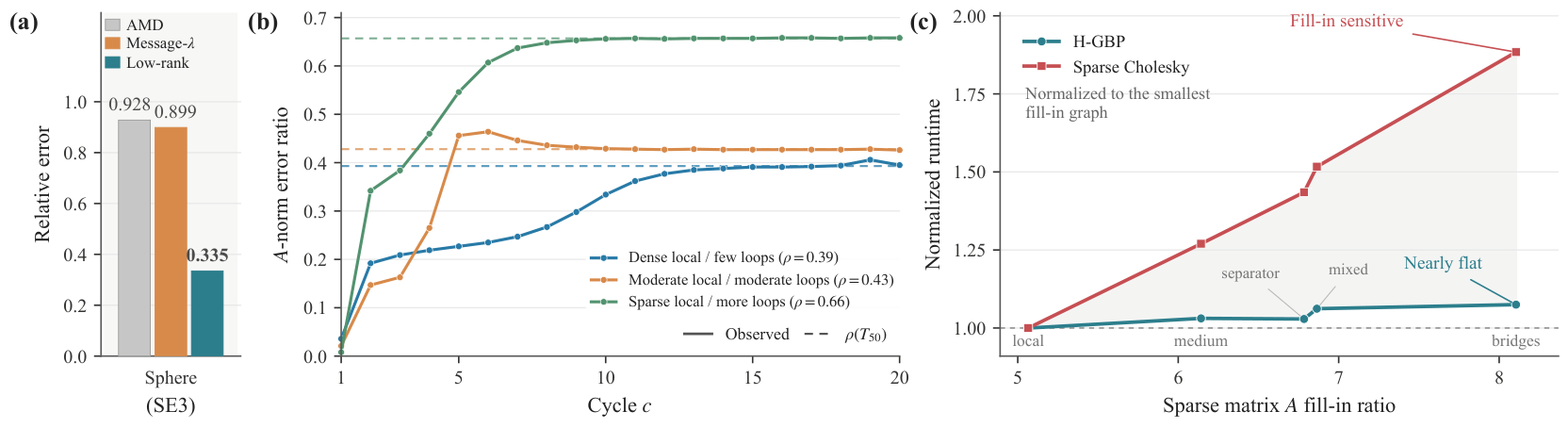}
    \setlength{\abovecaptionskip}{2pt}
    \caption{(a) Low-rank abstraction reduces error more effectively than alternative coarse spaces. (b) Successive-cycle error ratios approach the spectral-radius prediction as precision messages stabilize. (c) H-GBP runtime is less sensitive to sparse-Cholesky fill-in.}
    
    \vspace{-10pt}
    \label{fig:all}
\end{figure}

\begin{wrapfigure}{l}{0.49\textwidth}
    \vspace{-\baselineskip}
    \centering
    \includegraphics[width=\linewidth]{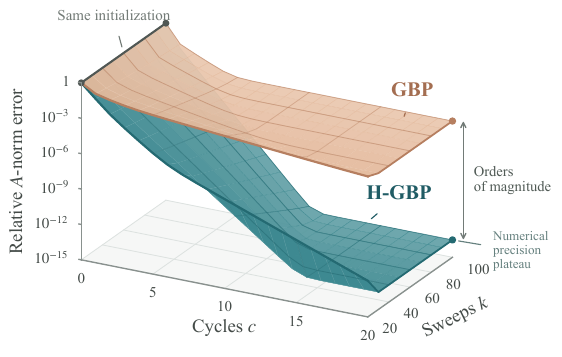}
    \setlength{\abovecaptionskip}{2pt}
    \caption{H-GBP versus GBP performance}
    \label{fig:linear_hgbp_vs_gbp}
    \vspace{-10pt}
\end{wrapfigure}

We study linear H-GBP on fixed Gaussian pose-graph systems, using \((x,y)\) variables for convergence comparisons and a fixed SE(3) linearization for the coarse-space ablation.
Figure~\ref{fig:linear_hgbp_vs_gbp} shows that H-GBP removes the low-frequency error that local message passing alone struggles to eliminate.
Varying both the number of cycles and the GBP iterations per cycle yields stable reduction, demonstrating H-GBP's consistent behavior across iteration budgets.

Figure~\ref{fig:all}(a) shows that local low-rank approximation yields substantially stronger error reduction than a basic multigrid-style abstraction under the same GBP budget, making our coarse-space choice important for sparse factor graphs. We find retaining four local modes per group gives a stable coarse correction with little sensitivity above this threshold. Figure~\ref{fig:all}(b) validates the correctness of the theory: as precision messages rapidly stabilize, the ratio of errors in successive cycles approaches \(\rho(T_{{\rm aug},50})\), matching the predicted asymptotic contraction factor.

A complementary experiment (Fig.~\ref{fig:all}(c)) evaluates solver performance across different levels of Cholesky fill-in. Sparse-Cholesky runtime grows \(1.88\times\), whereas H-GBP runtime grows only \(1.08\times\), revealing much weaker sensitivity to elimination fill-in.

\subsection{Hierarchy-GBP on \texorpdfstring{$\mathrm{SE}(2)$ and $\mathrm{SE}(3)$}{SE(2) and SE(3)} PGO}

\begin{table*}[h]
\centering
\caption{Pose-graph benchmarks: runtime (s) $\downarrow$ / final cost $\downarrow$.}
\label{tab:pgo_20260924}
\begingroup
\scriptsize
\setlength{\tabcolsep}{1.5pt}
\setlength{\fboxsep}{0.6pt}
\renewcommand{\arraystretch}{1.12}
\definecolor{benchfirst}{HTML}{FFCCCC}
\definecolor{benchsecond}{HTML}{FFE6C5}
\definecolor{benchthird}{HTML}{FFF8CF}
\newcommand{\pgoentry}[3]{%
  \ifcase#1\relax #2%
  \or\colorbox{benchfirst}{\textbf{#2}}%
  \or\colorbox{benchsecond}{#2}%
  \or\colorbox{benchthird}{#2}%
  \or\textcolor{red}{#2}%
  \fi\textcolor[HTML]{777777}{/}\textcolor[HTML]{444444}{#3}}
\resizebox{\textwidth}{!}{%
\begin{tabular}{@{}lclllllll@{}}
\toprule
& & \multicolumn{2}{c}{1 thread} & \multicolumn{5}{c}{16 threads} \\
\cmidrule(lr){3-4}\cmidrule(l){5-9}
\textbf{Dataset} & \textbf{Space}
& \shortstack[l]{\strut g2o\\\strut Cholesky\\{\tiny\bfseries\strut (ICRA 2011)}}
& \shortstack[l]{\strut\textbf{H-GBP}\\\strut\textbf{PGO}\\{\tiny\strut}}
& \shortstack[l]{\strut g2o\\\strut PCG\\{\tiny\bfseries\strut (ICRA 2011)}}
& \shortstack[l]{\strut Cycle\\\strut PGO\\{\tiny\bfseries\strut (T-RO 2021)}}
& \shortstack[l]{\strut AMM\\\strut PGO\\{\tiny\bfseries\strut (T-RO 2024)}}
& \shortstack[l]{\strut Schwarz\\\strut PGO\\{\tiny\bfseries\strut (FE Symposium 2026)}}
& \shortstack[l]{\strut\textbf{H-GBP}\\\strut\textbf{PGO}\\{\tiny\strut}} \\
\midrule
\textbf{FR079} & \(\mathrm{SE}(2)\)
  & \pgoentry{1}{0.02}{18.79}
  & \pgoentry{0}{0.29}{18.79}
  & \pgoentry{4}{1.09}{18.79}
  & \pgoentry{2}{0.04}{18.79}
  & \pgoentry{0}{0.17}{\textcolor{red}{129.13}}
  & \pgoentry{0}{0.14}{18.79}
  & \pgoentry{3}{0.11}{18.79} \\
\textbf{FRH} & \(\mathrm{SE}(2)\)
  & \pgoentry{1}{0.04}{9.66e-5}
  & \pgoentry{0}{0.35}{9.66e-5}
  & \pgoentry{0}{0.41}{9.66e-5}
  & \pgoentry{2}{0.09}{9.66e-5}
  & \pgoentry{0}{0.38}{1.02e-4}
  & \pgoentry{0}{0.13}{9.66e-5}
  & \pgoentry{3}{0.12}{9.66e-5} \\
\textbf{M3500} & \(\mathrm{SE}(2)\)
  & \pgoentry{1}{0.13}{68.96}
  & \pgoentry{0}{0.90}{68.96}
  & \pgoentry{0}{1.87}{68.96}
  & \pgoentry{2}{0.23}{68.96}
  & \pgoentry{0}{0.91}{70.45}
  & \pgoentry{0}{1.47}{68.96}
  & \pgoentry{3}{0.27}{68.96} \\
\textbf{ParkingGarage} & \(\mathrm{SE}(3)\)
  & \pgoentry{1}{0.38}{0.63}
  & \pgoentry{0}{7.56}{0.63}
  & \pgoentry{4}{19.72}{0.63}
  & \pgoentry{2}{0.85}{0.63}
  & \pgoentry{3}{1.37}{0.65}
  & \pgoentry{4}{49.78}{0.63}
  & \pgoentry{0}{1.77}{0.63} \\
\textbf{Sphere} & \(\mathrm{SE}(3)\)
  & \pgoentry{0}{2.02}{675.70}
  & \pgoentry{0}{4.76}{675.70}
  & \pgoentry{3}{1.35}{675.70}
  & \pgoentry{1}{1.08}{675.70}
  & \pgoentry{1}{1.08}{695.09}
  & \pgoentry{0}{2.72}{675.70}
  & \pgoentry{2}{1.18}{675.70} \\
\textbf{Cubicle} & \(\mathrm{SE}(3)\)
  & \pgoentry{2}{4.22}{1,378.53}
  & \pgoentry{0}{22.63}{1,378.53}
  & \pgoentry{4}{93.53}{1,378.53}
  & \pgoentry{1}{4.17}{1,609.14}
  & \pgoentry{0}{7.67}{\textcolor{red}{233,482.27}}
  & \pgoentry{0}{6.21}{1,378.53}
  & \pgoentry{3}{4.86}{1,378.53} \\
\textbf{Grid} & \(\mathrm{SE}(3)\)
  & \pgoentry{4}{493.90}{42,179.98}
  & \pgoentry{0}{16.16}{42,179.98}
  & \pgoentry{2}{4.38}{42,179.98}
  & \pgoentry{4}{621.95}{42,179.98}
  & \pgoentry{3}{7.94}{42,192.23}
  & \pgoentry{0}{44.55}{42,179.98}
  & \pgoentry{1}{3.90}{42,179.98} \\
\textbf{Globe10k} & \(\mathrm{SE}(3)\)
  & \pgoentry{0}{93.45}{3,039.93}
  & \pgoentry{0}{20.44}{3,039.93}
  & \pgoentry{2}{7.04}{3,039.93}
  & \pgoentry{0}{42.04}{3,039.93}
  & \pgoentry{3}{12.78}{3,040.00}
  & \pgoentry{0}{13.33}{3,039.93}
  & \pgoentry{1}{5.34}{3,039.93} \\
\textbf{Globe100k} & \(\mathrm{SE}(3)\)
  & \pgoentry{4}{1,288.89}{20,090.74}
  & \pgoentry{3}{177.88}{20,090.74}
  & \pgoentry{0}{662.81}{20,090.74}
  & \pgoentry{4}{OOM}{--}
  & \pgoentry{2}{132.07}{20,112.14}
  & \pgoentry{4}{\ensuremath{>2000}}{--}
  & \pgoentry{1}{42.71}{20,090.74} \\
\bottomrule
\end{tabular}%
}
\par\vspace{2pt}
{\fontsize{7}{8}\selectfont
\colorbox{benchfirst}{\textbf{1st}}\;
\colorbox{benchsecond}{2nd}\;
\colorbox{benchthird}{3rd}: fastest runtimes per dataset.\quad
\textcolor{red}{Red}: runtime $>20\times$ the fastest; cost $>2\times$ the lowest; out of memory (OOM).
\par}
\endgroup
\end{table*}

\begin{table*}[t]
\centering
\caption{Bundle adjustment benchmarks: runtime (s) $\downarrow$ / final sum-of-squares reprojection cost $\downarrow$.}
\label{tab:ba_20260924}
\begingroup
\scriptsize
\setlength{\tabcolsep}{1.5pt}
\setlength{\fboxsep}{0.6pt}
\renewcommand{\arraystretch}{1.12}
\definecolor{benchfirst}{HTML}{FFCCCC}
\definecolor{benchsecond}{HTML}{FFE6C5}
\definecolor{benchthird}{HTML}{FFF8CF}
\newcommand{\baentry}[3]{%
  \ifcase#1\relax #2%
  \or\colorbox{benchfirst}{\textbf{#2}}%
  \or\colorbox{benchsecond}{#2}%
  \or\colorbox{benchthird}{#2}%
  \or\textcolor{red}{#2}%
  \fi\textcolor[HTML]{777777}{/}\textcolor[HTML]{444444}{#3}}
\resizebox{\textwidth}{!}{%
\begin{tabular}{@{}llllllll@{}}
\toprule
& \multicolumn{2}{c}{1 thread} & \multicolumn{5}{c}{16 threads} \\
\cmidrule(lr){2-3}\cmidrule(l){4-8}
\textbf{Dataset}
& \shortstack[l]{\strut Ceres\\\strut Sparse\\{\tiny\bfseries\strut (Google Inc 2012)}}
& \shortstack[l]{\strut\textbf{H-GBP}\\\strut\textbf{BA}\\{\tiny\strut}}
& \shortstack[l]{\strut Ceres\\\strut Sparse\\{\tiny\bfseries\strut (Google Inc 2012)}}
& \shortstack[l]{\strut Ceres\\\strut Iterative\\{\tiny\bfseries\strut (Google Inc 2012)}}
& \shortstack[l]{\strut RootBA\\\strut QR\\{\tiny\bfseries\strut (CVPR 2021)}}
& \shortstack[l]{\strut PowerBA\\\strut\\{\tiny\bfseries\strut (CVPR 2023)}}
& \shortstack[l]{\strut\textbf{H-GBP}\\\strut\textbf{BA}\\{\tiny\strut}} \\
\midrule
\textbf{Ladybug49}
  & \baentry{0}{0.98}{26,690}
  & \baentry{2}{0.18}{26,705}
  & \baentry{0}{0.33}{26,690}
  & \baentry{0}{0.54}{26,691}
  & \baentry{0}{0.63}{26,691}
  & \baentry{3}{0.23}{27,299}
  & \baentry{1}{0.07}{26,705} \\
\textbf{Venice89}
  & \baentry{4}{19.22}{567,243}
  & \baentry{3}{2.68}{576,781}
  & \baentry{0}{5.91}{567,243}
  & \baentry{0}{5.03}{567,055}
  & \baentry{0}{6.22}{572,780}
  & \baentry{2}{1.91}{580,416}
  & \baentry{1}{0.59}{576,781} \\
\textbf{Final93}
  & \baentry{0}{7.39}{291,090}
  & \baentry{2}{1.59}{291,092}
  & \baentry{3}{1.69}{291,090}
  & \baentry{0}{4.29}{291,090}
  & \baentry{0}{6.84}{291,090}
  & \baentry{0}{1.79}{291,159}
  & \baentry{1}{0.38}{291,092} \\
\textbf{Trafalgar257}
  & \baentry{0}{7.21}{193,577}
  & \baentry{3}{1.31}{197,378}
  & \baentry{0}{2.38}{193,577}
  & \baentry{4}{14.41}{193,671}
  & \baentry{4}{22.72}{193,398}
  & \baentry{2}{1.11}{205,427}
  & \baentry{1}{0.45}{197,378} \\
\textbf{Dubrovnik356}
  & \baentry{4}{50.15}{987,859}
  & \baentry{3}{6.83}{1,021,102}
  & \baentry{0}{13.96}{987,859}
  & \baentry{4}{49.54}{987,859}
  & \baentry{4}{119.17}{987,866}
  & \baentry{2}{4.14}{1,059,446}
  & \baentry{1}{1.63}{1,021,102} \\
\textbf{Final394}
  & \baentry{4}{38.57}{596,506}
  & \baentry{3}{2.87}{598,296}
  & \baentry{0}{8.62}{596,506}
  & \baentry{0}{8.01}{596,536}
  & \baentry{4}{29.59}{596,578}
  & \baentry{2}{2.50}{611,101}
  & \baentry{1}{0.83}{598,296} \\
\textbf{Ladybug1723}
  & \baentry{4}{60.56}{778,095}
  & \baentry{3}{4.79}{759,600}
  & \baentry{0}{19.05}{778,095}
  & \baentry{0}{5.51}{770,444}
  & \baentry{0}{9.18}{763,070}
  & \baentry{2}{2.72}{797,704}
  & \baentry{1}{1.22}{759,599} \\
\textbf{Venice1778}
  & \baentry{4}{330.93}{3,277,387}
  & \baentry{3}{25.63}{3,363,149}
  & \baentry{0}{87.54}{3,277,387}
  & \baentry{0}{35.27}{\textcolor{red}{6,932,878}}
  & \baentry{0}{85.45}{3,369,978}
  & \baentry{2}{20.32}{3,391,089}
  & \baentry{1}{5.89}{3,363,149} \\
\textbf{Final3068}
  & \baentry{4}{586.17}{3,348,655}
  & \baentry{3}{12.40}{3,240,217}
  & \baentry{4}{142.78}{3,348,655}
  & \baentry{0}{13.17}{3,452,711}
  & \baentry{4}{102.23}{3,254,156}
  & \baentry{2}{6.15}{3,520,615}
  & \baentry{1}{3.03}{3,240,217} \\
\textbf{Final4585}
  & \baentry{4}{\ensuremath{>2000}}{--}
  & \baentry{3}{48.55}{10,331,432}
  & \baentry{4}{526.06}{10,783,988}
  & \baentry{0}{72.54}{13,021,324}
  & \baentry{4}{1,278.84}{10,158,450}
  & \baentry{2}{33.72}{10,441,192}
  & \baentry{1}{11.20}{10,331,432} \\
\bottomrule
\end{tabular}%
}
\par\vspace{2pt}
{\fontsize{7}{8}\selectfont
\colorbox{benchfirst}{\textbf{1st}}\;
\colorbox{benchsecond}{2nd}\;
\colorbox{benchthird}{3rd}: fastest runtimes per dataset.\quad
\textcolor{red}{Red}: runtime $>20\times$ the fastest; cost $>2\times$ the lowest.
\par}
\endgroup
\end{table*}

We evaluate H-GBP on \(\mathrm{SE}(2)\) and \(\mathrm{SE}(3)\) pose-graph benchmarks \citep{carlone2014fast, olson2006fast, carlone2015initialization, tazaki2022spanning} (Fig.~\ref{fig:experiment_figure}).
We compare against g2o-Cholesky and g2o-PCG~\citep{kuemmerle2011g2o}, AMM-PGO~\citep{fan2024mm_pgo}, Cycle-PGO~\citep{bai2021cycle}, and Schwarz-PGO~\citep{tee2026schwarz}.
Table~\ref{tab:pgo_20260924} reports runtime and final objective cost.
H-GBP groups every \(20\) consecutive nodes in odometry order, retaining the \(4\) smallest eigenmodes for \(\mathrm{SE}(2)\) and \(12\) for \(\mathrm{SE}(3)\).
Each relinearization is followed by \(3\) cycles of \(100\) GBP iterations for \(\mathrm{SE}(2)\), or \(5\) cycles of \(50\) GBP iterations for \(\mathrm{SE}(3)\).

H-GBP delivers consistent solution quality across all nine benchmarks: both its single-thread and \(16\)-thread final costs match g2o-Cholesky at the reported precision.
With \(16\) threads, it ranks among the three fastest methods on eight benchmarks and is the fastest on the three largest graphs, Grid, Globe10k, and Globe100k.
On these graphs, it achieves approximately \(\boldsymbol{126.6\times}\), \(\boldsymbol{17.5\times}\), and \(\boldsymbol{30.2\times}\) speedups over g2o-Cholesky, respectively.
The corresponding speedups over the fastest competing baseline on each graph are \(\boldsymbol{1.12\times}\), 
\(\boldsymbol{1.32\times}\), and \(\boldsymbol{3.09\times}\), showing that the large-scale advantage extends beyond comparison with a direct solver.

\vspace{-0.5em}

\subsection{Hierarchy-GBP on Bundle Adjustment}

We evaluate H-GBP BA on ten bundle-adjustment benchmarks against Ceres Solver~\citep{ceres} with either direct sparse linear solve or iterative PCG, RootBA-QR~\citep{demmel2021rootba}, and PowerBA~\citep{weber2023poba}.
Table~\ref{tab:ba_20260924} reports runtime and final sum-of-squares reprojection cost for the \(16\)-thread comparison, with single-thread Ceres-Sparse and H-GBP BA included as references.

H-GBP operates on the camera system \(S\Delta c=b\) obtained by Schur elimination of points at each linearization. For each pairwise potential, we simply fill in the corresponding off-diagonal blocks. The diagonal contributions define the unary potentials. (Note that this construction does not admit the assumptions. Therefore, the results of the BA are not intended to validate the theory, but rather to demonstrate the capabilities of H-GBP as an efficient solver.) Cameras are aggregated into connected groups of target size \(\max(2,\lceil n_c/24\rceil)\), and each group retains nine modes when abstracted.

H-GBP BA is the fastest on all ten datasets, achieving approximately \(\boldsymbol{2.0\times}\) \textbf{to} \(\boldsymbol{4.4\times}\) speedups over the fastest competing \(16\)-thread baseline on each dataset, and \(2.0\times\) to \(4.7\times\) over PowerBA.
This runtime advantage comes with consistently strong solution quality: H-GBP achieves lower final costs than PowerBA on every dataset, by up to \(8.0\%\), and remains within \(3.4\%\) of the lowest reported cost throughout the benchmark suite.
It also attains the lowest costs on Ladybug1723 and Final3068.
On the largest instance, Final4585, H-GBP finishes in \(11.20\)~s, approximately \(3.0\times\) faster than PowerBA and \(\boldsymbol{47.0\times}\) faster than \(16\)-thread Ceres-Sparse, while remaining within \(1.8\%\) of the best reported cost.
These results show that H-GBP sustains its \textbf{speed advantage} from small to large reconstructions \textbf{without} the large \textbf{cost outliers} observed for some baselines.

\section{Main theoretical results}
\label{theory_section}
We give the main results here, while all proof details are included in the Appendix \ref{app:theories}. We make the same assumptions as in vector-valued standard GBP literature \citep{du2018vectorvalued}. 

\begin{assumption}
    The factor-to-variable precision messages~\eqref{eq:schur-message} are initialized as positive semi-definite matrices:
\(
  \Lambda_{f\to i}^{(0)}\succeq 0
\quad\text{for all}~(f\to i).  
\)
\end{assumption}

\begin{assumption}
1. Each variable has a prior with positive definite precision.
2. Each binary factor precision admits
\(
\Lambda_f
=
\begin{bmatrix}
A_{f,i} & A_{f,j}
\end{bmatrix}^\top
W_f^{-1}
\begin{bmatrix}
A_{f,i} & A_{f,j}
\end{bmatrix},~
W_f\succ0,
\)
where \(A_{f,i}\) and \(A_{f,j}\) are full column rank.
\end{assumption}

A useful property of H-GBP, shared with standard GBP, is that the precision-message typically converges rapidly in practice, often well before the eta-message. The following theorem proves global geometric convergence of the precision, providing a tool that links dynamics from initialization to the limiting fixed point.

\begin{theorem} 
\label{global converge exp}
    There exist constants \(C_\Lambda<\infty\) and \(\theta_\Lambda\in(0,1)\),
depending on the initialization, such that
\[
\|\Lambda^{(\ell)}-\Lambda^\star\|\le C_\Lambda\theta_\Lambda^\ell,
\qquad \ell\ge0,
\]
for any fixed matrix norm after changing constants.
\end{theorem}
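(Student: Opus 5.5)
The plan is to treat the precision-message update of synchronous GBP as a nonlinear map $\Phi$ on the cone of tuples of positive semidefinite matrices, $\Lambda^{(\ell+1)}=\Phi(\Lambda^{(\ell)})$, and to show that $\Phi$ is a strict contraction in a suitable metric on an invariant set. The first observation is that in H-GBP the abstraction and recovery step only touches information vectors and means (Eq.~\eqref{inconsistent_reconver} uses $\widehat\Lambda_i$ but never changes it). The precision sequence of H-GBP is therefore exactly that of standard synchronous GBP, and it suffices to prove the theorem for standard GBP under the two assumptions. Each component of $\Phi$ is, by Eqs.~\eqref{eq:variable-to-factor}--\eqref{eq:variable-update},
\[
\Lambda^{+}_{f\to i}=\Lambda^f_{ii}-\Lambda^f_{ij}\bigl(\Lambda^f_{jj}+\textstyle\sum_{h\in N(j)\setminus f}\Lambda_{h\to j}\bigr)^{-1}\Lambda^f_{ji}.
\]
Using the factorization in the second assumption, this is a Schur complement, equivalently a Riccati-type map of the form $A_{f,i}^\top\bigl(W_f+A_{f,j}X^{-1}A_{f,j}^\top\bigr)^{-1}A_{f,i}$, where $X=A_{f,j}^\top W_f^{-1}A_{f,j}+\sum_{h}\Lambda_{h\to j}$.

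The steps are as follows. (i) \emph{Invariance and boundedness.} Starting from PSD initialization (first assumption), every iterate stays PSD and is bounded above by $\Lambda^f_{ii}$. Because unary prior factors send constant positive definite messages (first part of the second assumption), after one iteration every cavity precision $X$ is bounded below by some $\epsilon I\succ0$. Combined with full column rank of $A_{f,i},A_{f,j}$, each binary message after the first iteration lies in a compact set $\{aI\preceq\cdot\preceq bI\}$ with $a>0$. (ii) \emph{Monotonicity.} $\Phi$ is order-preserving in the Loewner order, since $X\mapsto -X^{-1}$ is monotone. (iii) \emph{Strict contraction in Thompson's part metric.} On the positive definite cone, the map $X\mapsto (W+AX^{-1}A^\top)^{-1}$ pulled back by $A_{f,i}$ is non-expansive in the Thompson metric $d_T(X,Y)=\|\log(X^{-1/2}YX^{-1/2})\|$. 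The additive positive definite terms coming from priors and $W_f\succ0$ make it strictly contractive on the compact set from (i), with a uniform factor $\theta<1$. This follows Du et al.'s vector-valued analysis, where $\Lambda\mapsto \Lambda+P$ with $P\succ0$ contracts $d_T$ by $\beta/(\beta+\lambda_{\min}(P))$ for $\|\Lambda\|\le\beta$. (iv) \emph{Conclusion.} By Banach's fixed point theorem on the complete metric space (compact set, $d_T$) there is a unique fixed point $\Lambda^\star$ and $d_T(\Lambda^{(\ell)},\Lambda^\star)\le\theta^{\ell-1}d_T(\Lambda^{(1)},\Lambda^\star)$. On a compact subset of the PD cone, $d_T$ is Lipschitz-equivalent to any matrix norm. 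Converting, and absorbing the first step and the norm equivalence into $C_\Lambda$, gives $\|\Lambda^{(\ell)}-\Lambda^\star\|\le C_\Lambda\theta_\Lambda^\ell$ for all $\ell\ge0$ and any norm.

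The main obstacle is step (iii): obtaining a uniform strict contraction rather than mere non-expansiveness. The messages $\Lambda_{f\to i}$ can be only PSD initially, and a Schur complement of a rank-deficient block structure need not contract. I would handle this by working with the cavity precisions $X$, which contain the strictly positive prior. First I would show that the composite map cavity $\to$ message $\to$ next cavity has the form $X\mapsto P+\sum_h \mathcal{S}_h(X)$, with $\mathcal S_h$ $d_T$-non-expansive and $P\succ0$, and then apply the additive-shift contraction bound with $\beta$ taken from the upper bound in (i). If uniformity fails, the fallback is to iterate a fixed number of steps (the graph diameter) to get a contractive power $\Phi^{D}$, which still yields geometric decay after adjusting $C_\Lambda$.
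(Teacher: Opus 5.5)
\textbf{Verdict: correct, but by a different route.} Your proof works, and it is not the paper's argument.

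\textbf{How the paper argues.} The paper never builds a global contraction. It imports from Du et al.\ that the precision recursion has a unique PD fixed point $\Lambda^\star$ and converges from every PSD start, so the trajectory enters any ball around $\Lambda^\star$ in finite time. It then proves only a \emph{local} contraction:
\begin{itemize}
\item It computes the Fr\'echet derivative $DF(\Lambda^\star)$ and shows it is a positive linear map with $DF(\Lambda^\star)[\Lambda^\star]\preceq\alpha\Lambda^\star$, $\alpha<1$. This uses $\Lambda_j^{\rm self}\succ0$ and $W_f\succ0$.
\item It deduces $\|DF(\Lambda^\star)\|_{\star\to\star}\le\alpha$ in the weighted norm $\|H\|_\star=\max_a\|(\Lambda_a^\star)^{-1/2}H_a(\Lambda_a^\star)^{-1/2}\|_2$.
\item It uses $C^1$ continuity and the mean-value formula to get a contraction on a small ball, and absorbs the finite pre-entrance prefix into $C_\Lambda$.
\end{itemize}

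\textbf{How the two relate.} Your Thompson-metric argument is the global, integrated version of the same mechanism. $\|\cdot\|_\star$ is exactly the infinitesimal (Finsler) norm of the Thompson metric at $\Lambda^\star$. The strict inequality the paper extracts from the self-precision is the linearized counterpart of your Lee--Lim additive-shift bound.

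\textbf{What each route buys.}
\begin{itemize}
\item Your route is self-contained. Existence and uniqueness of $\Lambda^\star$, global convergence and the geometric rate all come from one Banach argument. It needs no differentiability. After a single step the factor is explicit and independent of initialization: $\theta=\max_a\beta_a/(\beta_a+\lambda_{\min}(\Lambda_j^{\rm self}))$, where $\beta_a=\sum_{h\in N_2(j)\setminus f}\|\Lambda^h_{jj}\|$ works because $\Lambda_{h\to j}^{+}\preceq\Lambda^h_{jj}$ for every PSD input.
\item The paper's route gives a potentially sharper asymptotic rate: the derivative bound at $\Lambda^\star$, rather than a worst case over the whole invariant set.
\item It also produces the norm $\|\cdot\|_\star$, which the paper reuses in its Lipschitz estimates for $J,\mathcal U,\mathcal R,Q$. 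Its cost is reliance on the imported theorem.
\item Your fallback to $\Phi^{D}$ is unnecessary, since the self-precision shift enters at every step.
\end{itemize}

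\textbf{One slip to fix.} With $X=A_{f,j}^\top W_f^{-1}A_{f,j}+\sum_h\Lambda_{h\to j}$, Woodbury does not give your Riccati form. The identity $F_a=A_{f,i}^\top\bigl(W_f+A_{f,j}X^{-1}A_{f,j}^\top\bigr)^{-1}A_{f,i}$ holds only when $X$ is the cavity $\mathcal C_a=\sum_{h\in N(j)\setminus f}\Lambda_{h\to j}$, unaries included and $\Lambda^f_{jj}$ excluded. That is the object your later steps actually use.
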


\vspace{-0.5em}
Let \(\mathcal M\) be the stacked directed \(\eta\)-message error space. At the end of every cycle, the coarse correction is applied to the readout~\eqref{readout} of the current message state while the messages are preserved. Hence, the cycle-boundary states satisfy
\(
v=C\mathcal Rm.
\)
Define the cycle-boundary subspace
\(
\mathcal Y_{\rm cyc}
:=
\{(C\mathcal Rm,m): m\in\mathcal M\}.
\)
Define the embedding and projection
\(
\iota_{\rm cyc}(m):=(C\mathcal Rm,m),
~
\pi_{\rm cyc}(v,m):=m.
\)
Then the exact reduced cycle operator on message space is
\begin{equation}
\label{eq:main-Mk}
M_k
:=
\pi_{\rm cyc}T_{{\rm aug},k}\iota_{\rm cyc}
=
Q^{k-1}(JC\mathcal R+\mathcal U).
\end{equation}

\begin{proposition}[Exact cycle-boundary reduction]
\label{prop:main-exact-reduction}
The subspace \(\mathcal Y_{\rm cyc}\) is \(T_{{\rm aug},k}\)-invariant. Moreover,
\[
T_{{\rm aug},k}\iota_{\rm cyc}
=
\iota_{\rm cyc}M_k.
\]
\end{proposition}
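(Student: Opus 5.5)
The identity $T_{{\rm aug},k}\iota_{\rm cyc}=\iota_{\rm cyc}M_k$ can be checked directly by block multiplication, using the block form of $T_{{\rm aug},k}$ in \eqref{eq:taug}; invariance of $\mathcal Y_{\rm cyc}$ then follows at once. Fix $m\in\mathcal M$ and put $v=C\mathcal R m$, so that $\iota_{\rm cyc}(m)=(v,m)$.

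The first step is to compute the bottom block of $T_{{\rm aug},k}(v,m)$:
\[
m^+=Q^{k-1}Jv+Q^{k-1}\mathcal U m=Q^{k-1}(JC\mathcal R+\mathcal U)m=M_k m .
\]
This also confirms that $\pi_{\rm cyc}T_{{\rm aug},k}\iota_{\rm cyc}$ equals the stated formula \eqref{eq:main-Mk}, since $\pi_{\rm cyc}$ simply extracts this component.

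The second step is the top block. Here
\[
v^+=C\mathcal R Q^{k-1}Jv+C\mathcal R Q^{k-1}\mathcal U m .
\]
Factoring $C\mathcal R$ on the left gives
\[
v^+=C\mathcal R\,Q^{k-1}(JC\mathcal R+\mathcal U)m=C\mathcal R\,m^+ .
\]
Therefore $T_{{\rm aug},k}(v,m)=(C\mathcal R m^+,m^+)=\iota_{\rm cyc}(M_k m)$, which is the claimed intertwining identity. Since every element of $\mathcal Y_{\rm cyc}$ has the form $\iota_{\rm cyc}(m)$, its image lies in $\iota_{\rm cyc}(\mathcal M)=\mathcal Y_{\rm cyc}$, so the subspace is invariant.

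The only real subtlety is the \emph{top-row factorization}. It relies on the first row of $T_{{\rm aug},k}$ being exactly $C\mathcal R$ composed with the second row. I would justify this from how \eqref{eq:taug} was assembled: the new mean error is obtained by first running the message updates $J,\mathcal U$ and $Q^{k-1}$, then applying the readout $\mathcal R$, then the coarse correction $C$. Because the pieces are composed in that order, $v^+$ is a function of $m^+$ alone.

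To make the later spectral statements meaningful, I would also note that $\iota_{\rm cyc}$ is injective, since $\pi_{\rm cyc}\iota_{\rm cyc}=\mathrm{id}$. It follows that $T_{{\rm aug},k}$ restricted to $\mathcal Y_{\rm cyc}$ is similar to $M_k$. This is not needed for the proposition itself, but it is the intended use of the result.
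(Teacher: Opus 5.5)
Your proof is correct and is essentially the paper's argument: substitute $v=C\mathcal R m$ into the block form of $T_{{\rm aug},k}$, read off the result as $\bigl(C\mathcal R M_k m,\,M_k m\bigr)=\iota_{\rm cyc}(M_k m)$, and invariance follows immediately. The top-row factorization you flag needs no appeal to the order of assembly, because each block in the first row of \eqref{eq:taug} is literally $C\mathcal R$ times the corresponding second-row block. Your closing injectivity/similarity remark is exactly what the paper uses next, in its spectral-equivalence lemma.
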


Thus \(M_k\) is the cycle-boundary reduction of the operator
\(T_{{\rm aug},k}\). Appendix~\ref{app:limiting-phase} shows that the reduced dynamics satisfy
\(
m_{t+1}=\widetilde M_{k,t}m_t=(M_k+E_t)m_t,~\text{where}~\sum_{t=0}^\infty\|E_t\|<\infty .
\)
Thus, the full H-GBP trajectory is a summably perturbed version of the limiting
linear cycle.

\begin{theorem}[Necessary and sufficient condition]
\label{thm:main-iff}
For the full H-GBP iteration, the following statements are equivalent:
\begin{enumerate}
\vspace{-0.5em}
\item for every admissible initialization, \(m_t\to0\);
\vspace{-0.5em}
\item for every compatible cycle-boundary state \(v_0=C\mathcal R_0m_0\),
      \((v_t,m_t)\to(0,0)\);
\vspace{-0.5em}
\item \(\rho(M_k)<1\);
\refstepcounter{enumi}\quad \theenumi. \(\rho(T_{{\rm aug},k})<1\).
\end{enumerate}
Whenever these conditions hold, for every fixed precision initialization and
every \(q\in(\rho(M_k),1)\), there exists \(C_q<\infty\) such that
\(
\|m_t\|\le C_q q^t\|m_0\|,~
\|v_t\|_A\le C_q q^t\|m_0\|.
\) The corresponding fine means converge to \(x_\star=A^{-1}b\).
\end{theorem}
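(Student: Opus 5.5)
I will prove the cycle of implications (3)$\Leftrightarrow$(4), (3)$\Rightarrow$(1),(2) together with the rate bounds, and (1)$\Rightarrow$(3), (2)$\Rightarrow$(1). Two earlier results carry most of the weight: the exact reduction of Proposition~\ref{prop:main-exact-reduction}, and the summable-perturbation representation $m_{t+1}=(M_k+E_t)m_t$ with $\sum_t\|E_t\|<\infty$. The latter comes from Theorem~\ref{global converge exp}, since $Q,J,\mathcal U,\mathcal R$ depend continuously on the precision messages and those converge geometrically.

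\textbf{Spectral equivalence (3)$\Leftrightarrow$(4).} Write $T_{{\rm aug},k}$ as a product. Since $C\mathcal R Q^{k-1}J$ and $Q^{k-1}J$ share the right factor, we have
\[
T_{{\rm aug},k}=\begin{bmatrix} C\mathcal R\\ I\end{bmatrix}Q^{k-1}\begin{bmatrix} J & \mathcal U\end{bmatrix}=:BD.
\]
Hence $T_{{\rm aug},k}$ has the same nonzero spectrum as $DB=Q^{k-1}J\,C\mathcal R+Q^{k-1}\mathcal U$, and this is exactly $M_k$. By the Sylvester-type identity $\sigma(BD)\setminus\{0\}=\sigma(DB)\setminus\{0\}$, we get $\rho(T_{{\rm aug},k})=\rho(M_k)$. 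The same conclusion follows from Proposition~\ref{prop:main-exact-reduction}: $\mathcal Y_{\rm cyc}$ is invariant and contains the range of $T_{{\rm aug},k}$, so the induced quotient map is nilpotent of order one.

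\textbf{Sufficiency and rates.} Assume $\rho(M_k)<1$ and fix $q\in(\rho(M_k),1)$. Choose an adapted norm $\|\cdot\|_\ast$ with $\|M_k\|_\ast\le q'<q$. Then
\[
\|m_{t+1}\|_\ast\le (q'+\|E_t\|_\ast)\|m_t\|_\ast,
\]
and since $\prod_t(1+\|E_t\|_\ast/q')<\infty$ by summability, we get $\|m_t\|\le C_q q^t\|m_0\|$. The constant $C_q$ depends on the precision initialization through $\sum\|E_t\|$. For the means, $v_t=C\mathcal R_t m_t$ with $\mathcal R_t$ uniformly bounded, because the beliefs $\widehat\Lambda_i^{(\ell)}$ converge to positive definite limits by Assumption 2. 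Therefore $\|v_t\|_A\le C_q q^t\|m_0\|$, and $x_t\to x_\star$. This gives (3)$\Rightarrow$(1) and (3)$\Rightarrow$(2).

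\textbf{Necessity.} (2)$\Rightarrow$(1) is immediate by projecting onto $m$. For (1)$\Rightarrow$(3), I argue by contradiction. Suppose $\rho(M_k)\ge1$ and take an eigenvector $w$ with $M_kw=\lambda w$ and $|\lambda|\ge1$; in the real case, use the real invariant 2-plane. The main obstacle is that the actual iteration is $M_k+E_t$, not $M_k$, so non-decay of $M_k^t w$ does not immediately transfer. Since the $E_t$ are summable, I plan a Levinson/asymptotic-equivalence argument. Run the perturbed iteration from a late cycle $t_0$ with $\sum_{t\ge t_0}\|E_t\|$ small, and compare with the unperturbed trajectory via variation of constants,
\[
m_t=\Phi(t,t_0)m_{t_0},\qquad \Phi(t,t_0)-M_k^{t-t_0}=\sum_{s=t_0}^{t-1}\Phi(t,s+1)E_sM_k^{s-t_0}.
\]
This shows that some solution stays bounded away from zero along the dominant eigendirection. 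Nilpotent Jordan growth only helps.

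The remaining point is that such an $m_{t_0}$ must be reachable as an admissible state. Linearity in $\eta$, with precisions fixed independently, lets the initial $\eta$-message be arbitrary. Also, $\Phi(t_0,0)$ is invertible on the relevant generalized eigenspace, or at least its image meets it. If invertibility fails, I would instead pick the initialization directly by perturbing along $w$ at cycle $0$, using that the $E_t$ can be made uniformly small by starting precisions near $\Lambda^\star$. This contradicts (1), which completes the equivalence.
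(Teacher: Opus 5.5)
Your proof of (3)$\Leftrightarrow$(4) via $T_{{\rm aug},k}=BD$, $M_k=DB$ is correct. It is somewhat slicker than the paper's invariant-subspace/similarity argument. Your sufficiency proof (adapted norm, summable $E_t$, uniformly bounded $C\mathcal R_t$) is essentially the paper's.

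The gap is in (1)$\Rightarrow$(3), which as written rests on two unproved claims.

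First, the variation-of-constants identity does not by itself produce a solution bounded away from zero. The correction term involves the unknown perturbed propagator $\Phi(t,s+1)$. When an eigenvalue of maximal modulus has a nontrivial Jordan block, the available bounds on $\|\Phi(t,s+1)\|$ carry polynomial factors in $t-s$. Summability of $E_s$ does not tame those factors, so the correction need not be small relative to $\lambda^{t-t_0}w$. In this estimate Jordan growth hurts rather than helps, and you would need a genuine discrete Levinson-type theorem.

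Second, reachability is left open. $\Phi(t_0,0)$ need not be invertible: for instance, $Q$ has a zero block row for every message $f\to i$ whose other endpoint has no further binary factor. The claim ``or at least its image meets it'' is not justified. Your fallback of starting the precisions \emph{near} $\Lambda^\star$ still depends on the same unproved perturbation estimate.

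The missing observation is that (1) quantifies over \emph{all} admissible initializations, including the precision initialization. So you can make the perturbation vanish identically, not merely make it small.

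Take $\Lambda^{(0)}=\Lambda^\star$. It is positive definite, hence admissible. Since $F(\Lambda^\star)=\Lambda^\star$ and the coarse injection never changes precision messages, the precision stays frozen for all time. Hence $\widetilde M_{k,t}=M_k$ and $E_t\equiv0$.

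Because $\eta^{(0)}$ is arbitrary, $m_0$ ranges over all of $\mathcal M$, with $v_0=C\mathcal R m_0$ automatically compatible. The dynamics are then exactly $m_t=M_k^t m_0$. If $M_kw=\lambda w$ with $|\lambda|\ge1$, then $M_k^tw=\lambda^tw\not\to0$. So at least one of $m_0=\mathrm{Re}\,w$ or $m_0=\mathrm{Im}\,w$ gives a real trajectory with $m_t\not\to0$, contradicting (1).

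This is precisely the paper's converse argument. It removes the need for both the Levinson step and the reachability question.
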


\vspace{-0.5em}

The same linear result also provides a direct local convergence guarantee
for the nonlinear manifold extension in Section~4.4.

\begin{theorem}[Local convergence on manifolds]
Under the standard local conditions for inexact Gauss--Newton, suppose
that at every linearization the tangent H-GBP system satisfies
$\rho(M_{k,t})<1$. If each inner solve is run to a sufficiently small
fixed relative-residual tolerance, then the nonlinear H-GBP iterates
converge locally Q-linearly to a stationary point. 
\end{theorem}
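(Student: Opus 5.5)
The plan is to reduce the statement to the classical local convergence theory for inexact Gauss--Newton (Dembo--Eisenstat--Steihaug type), with the forcing terms supplied by the linear result, Theorem~\ref{thm:main-iff}.

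\textbf{Step 1: the outer iteration.} At iterate \(\bar X_t\), the tangent normal equations are \(A_t\xi=b_t\), with \(A_t=\sum_f J_f^\top W_f^{-1}J_f\) and \(b_t=-\sum_f J_f^\top W_f^{-1}r_f(\bar X_t)\). The exact Gauss--Newton step is \(\xi_t^\star=A_t^{-1}b_t\). The H-GBP step \(\xi_t\) is whatever the inner solver returns. The update is \(\bar X_{t+1}=\bar X_t\operatorname{Exp}(\xi_t)\).

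The standard local conditions I will assume are:
\begin{itemize}
\item a stationary point \(X_\star\) exists;
\item \(A_t\) is uniformly positive definite near \(X_\star\), which follows from the priors in the second Assumption;
\item the residual and Jacobian are Lipschitz in the retraction chart;
\item the Gauss--Newton contraction constant at \(X_\star\) (small-residual condition) is below one, say \(\kappa<1\).
\end{itemize}

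\textbf{Step 2: the inner solve meets a relative-residual tolerance in finitely many cycles.} Here the linear theory is used. At a fixed linearization, the hypothesis \(\rho(M_{k,t})<1\) and Theorem~\ref{thm:main-iff} give
\[
\|v_s\|_{A_t}\le C_q q^s\|m_0\|,
\]
so the fine means converge to \(\xi_t^\star\). Norms are equivalent, and \(\|b_t-A_t\xi\|\le\|A_t\|\,\|\xi-\xi_t^\star\|\). Hence for any prescribed \(\eta\in(0,1)\) finitely many cycles give
\[
\|b_t-A_t\xi_t\|\le\eta\|b_t\|.
\]
\emph{Caveat.} The constant \(C_q\) depends on the linearization through the precision transient (Theorem~\ref{global converge exp}) and through \(\|m_0\|\). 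For this reason I only claim the tolerance is reached, not a uniform cycle count. Locality is not needed for this step.

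\textbf{Step 3: the inexact Gauss--Newton estimate.} Write \(e_t\) for the chart error \(\operatorname{Log}(X_\star^{-1}\bar X_t)\). The relative-residual condition gives
\[
\|\xi_t-\xi_t^\star\|\le \eta\,\|A_t^{-1}\|\,\|A_t\|\,\|b_t\|\le \eta\,\kappa_A L\|e_t\|,
\]
where \(\|b_t\|=O(\|e_t\|)\) because \(b(X_\star)=0\) and the gradient is Lipschitz. Combining this with the standard exact Gauss--Newton estimate \(\|e_{t+1}^{\rm GN}\|\le(\kappa+c\|e_t\|)\|e_t\|\), and with the Lipschitz property of \(\operatorname{Exp}\) and its BCH correction on a small ball, yields
\[
\|e_{t+1}\|\le(\kappa+\eta\,\kappa_A L'+c'\|e_t\|)\|e_t\|.
\]

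\textbf{Step 4: conclusion.} Choose the tolerance \(\eta\) small enough that \(\kappa+\eta\kappa_A L'<1\). Then choose the initial neighborhood small enough that the full contraction factor stays below some \(\bar\kappa<1\). Induction then keeps the iterates in the ball and gives Q-linear convergence to \(X_\star\).

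\textbf{Main obstacle.} The main obstacle is Step 3's uniformity on the manifold. One must control the constants \(\kappa_A=\operatorname{cond}(A_t)\), \(L'\) and the retraction distortion uniformly over the neighborhood. I would handle this by working in a single exponential chart at \(X_\star\), and using continuity of \(A(X)\) together with the positive-definite priors to bound \(\kappa_A\).
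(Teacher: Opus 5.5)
Your proposal is correct and follows the paper's route: the hypothesis $\rho(M_{k,t})<1$, via Theorem~\ref{thm:main-iff}, makes the inner solve reach the relative-residual tolerance in finitely many cycles at each linearization, and that tolerance is then used as the forcing term in standard local inexact Gauss--Newton theory. The only real difference is the norm: the paper measures the residual in $\|\cdot\|_{A_t^{-1}}$, so the exact identity $\|A_t\widehat\xi_t-b_t\|_{A_t^{-1}}/\|b_t\|_{A_t^{-1}}=\|\widehat\xi_t-\xi_t^\star\|_{A_t}/\|\xi_t^\star\|_{A_t}$ introduces no condition-number factor and gives the explicit threshold $\bar\eta<(1-\gamma_\star)/(1+\gamma_\star)$, whereas your Euclidean test needs the uniform bound on $\kappa_A$ that you correctly flag as the main obstacle (also, the middle term of your Step~3 bound should be $\eta\kappa_A\|\xi_t^\star\|$, not $\eta\|A_t^{-1}\|\|A_t\|\|b_t\|$).
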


\section{Conclusion}

We introduced Hierarchy-GBP, an abstraction-and-recovery framework to accelerate GBP on factor graphs by combining local message passing with a low-rank coarse correction.
We showed H-GBP's convergence by analyzing the spectral radius of a derived matrix operator for one cycle of linear H-GBP. 
We found H-GBP is competitive in both runtime and solution quality on challenging pose-graph and bundle adjustment benchmarks. 
The results highlight H-GBP as a promising design principle for scalable spatial-intelligence systems. 
Future work will explore broader sparse factor-graph problems with H-GBP, fully message-passing upper levels, and hierarchy-free distributed abstractions.

\newpage

\bibliographystyle{iclr2027_conference}
\bibliography{references}

@article{weiss1999correctness,
  title={Correctness of belief propagation in Gaussian graphical models of arbitrary topology},
  author={Weiss, Yair and Freeman, William},
  journal={Advances in neural information processing systems},
  volume={12},
  year={1999}
}

@article{davison2019futuremapping,
  title={FutureMapping 2: Gaussian belief propagation for spatial AI},
  author={Davison, Andrew J and Ortiz, Joseph},
  journal={arXiv preprint arXiv:1910.14139},
  year={2019}
}

@inproceedings{shental2008gaussian,
  title={Gaussian belief propagation solver for systems of linear equations},
  author={Shental, Ori and Siegel, Paul H and Wolf, Jack K and Bickson, Danny and Dolev, Danny},
  booktitle={2008 IEEE international symposium on information theory},
  pages={1863--1867},
  year={2008},
  organization={IEEE}
}

@article{johnson2005walksum,
  title={Walk-sum interpretation and analysis of Gaussian belief propagation},
  author={Malioutov, Dmitry and Willsky, Alan and Johnson, Jason},
  journal={Advances in neural information processing systems},
  volume={18},
  year={2005}
}

@article{malioutov2006walksums,
  title={Walk-sums and belief propagation in Gaussian graphical models},
  author={Malioutov, Dmitry M and Johnson, Jason K and Willsky, Alan S},
  journal={The Journal of Machine Learning Research},
  volume={7},
  pages={2031--2064},
  year={2006},
  publisher={JMLR. org}
}

@inproceedings{bickson2008multiuser,
  title={Gaussian belief propagation based multiuser detection},
  author={Bickson, Danny and Dolev, Danny and Shental, Ori and Siegel, Paul H and Wolf, Jack K},
  booktitle={2008 IEEE International Symposium on Information Theory},
  pages={1878--1882},
  year={2008},
  organization={IEEE}
}

@article{du2018vectorvalued,
  title={Convergence analysis of distributed inference with vector-valued Gaussian belief propagation},
  author={Du, Jian and Ma, Shaodan and Wu, Yik-Chung and Kar, Soummya and Moura, Jos{\'e} MF},
  journal={Journal of Machine Learning Research},
  volume={18},
  number={172},
  pages={1--38},
  year={2018}
}

@inproceedings{elidan2006residual,
  title={Residual belief propagation: Informed scheduling for asynchronous message passing},
  author={Elidan, Gal and McGraw, Ian and Koller, Daphne},
  booktitle={Proceedings of the Twenty-Second Conference on Uncertainty in Artificial Intelligence},
  pages={165--173},
  year={2006}
}

@inproceedings{elkurdi2012relaxed,
  title={Relaxed Gaussian belief propagation},
  author={El-Kurdi, Yousef and Giannacopoulos, Dennis and Gross, Warren J},
  booktitle={2012 IEEE International Symposium on Information Theory Proceedings},
  pages={2002--2006},
  year={2012},
  organization={IEEE}
}

@article{liu2012feedback,
  title={Feedback message passing for inference in Gaussian graphical models},
  author={Liu, Ying and Chandrasekaran, Venkat and Anandkumar, Animashree and Willsky, Alan S},
  journal={IEEE Transactions on Signal Processing},
  volume={60},
  number={8},
  pages={4135--4150},
  year={2012},
  publisher={IEEE}
}

@book{briggs2000multigrid,
  title={A multigrid tutorial},
  author={Briggs, William L and Henson, Van Emden and McCormick, Steve F},
  year={2000},
  publisher={SIAM}
}

@article{xu2017algebraic,
  title={Algebraic multigrid methods},
  author={Xu, Jinchao and Zikatanov, Ludmil},
  journal={Acta Numerica},
  volume={26},
  pages={591--721},
  year={2017},
  publisher={Cambridge University Press}
}

@article{nataf2011dtn,
  title={A coarse space construction based on local Dirichlet-to-Neumann maps},
  author={Nataf, Fr{\'e}d{\'e}ric and Xiang, Hua and Dolean, Victorita and Spillane, Nicole},
  journal={SIAM Journal on Scientific Computing},
  volume={33},
  number={4},
  pages={1623--1642},
  year={2011},
  publisher={SIAM}
}

@article{spillane2014geneo,
  title={Abstract robust coarse spaces for systems of PDEs via generalized eigenproblems in the overlaps},
  author={Spillane, Nicole and Dolean, Victorita and Hauret, Patrice and Nataf, Fr{\'e}d{\'e}ric and Pechstein, Clemens and Scheichl, Robert},
  journal={Numerische Mathematik},
  volume={126},
  number={4},
  pages={741--770},
  year={2014},
  publisher={Springer}
}

@article{elkurdi2014parallel,
  title={Parallel multigrid acceleration for the finite-element Gaussian belief propagation algorithm},
  author={El-Kurdi, Yousef and Gross, Warren J and Giannacopoulos, Dennis},
  journal={IEEE transactions on magnetics},
  volume={50},
  number={2},
  pages={581--584},
  year={2014},
  publisher={IEEE}
}

@article{yates2026belief,
  title={Belief Propagation Converges to Gaussian Distributions in Sparsely-Connected Factor Graphs},
  author={Yates, Tom and Cheng, Yuzhou and Alzugaray, Ignacio and Akarca, Danyal and Mediano, Pedro AM and Davison, Andrew J},
  journal={arXiv preprint arXiv:2601.21935},
  year={2026}
}

@article{elkurdi2015parallel,
  title={Parallel finite element technique using Gaussian belief propagation},
  author={El-Kurdi, Yousef and Dehnavi, Maryam Mehri and Gross, Warren J and Giannacopoulos, Dennis},
  journal={Computer Physics Communications},
  volume={193},
  pages={38--48},
  year={2015},
  publisher={Elsevier}
}

@software{ceres,
  author  = {Agarwal, Sameer and Mierle, Keir and The Ceres Solver Team},
  title   = {Ceres Solver},
  version = {2.2},
  year    = {2023},
  month   = {10},
  url     = {https://github.com/ceres-solver/ceres-solver}
}

@article{dellaert2017factorgraphs,
  title={Factor graphs for robot perception},
  author={Dellaert, Frank and Kaess, Michael},
  journal={Foundations and Trends{\textregistered} in Robotics},
  volume={6},
  number={1-2},
  pages={1--139},
  year={2017},
  publisher={Emerald Publishing Limited}
}

@article{sola2018micro,
  author       = {Joan Sol{\`a} and J{\'e}r{\'e}mie Deray and Dinesh Atchuthan},
  title        = {A Micro Lie Theory for State Estimation in Robotics},
  journal      = {arXiv preprint arXiv:1812.01537},
  year         = {2018},
  eprint       = {1812.01537},
  archivePrefix= {arXiv},
  primaryClass = {cs.RO}
}

@inproceedings{kuemmerle2011g2o,
  title={g 2 o: A general framework for graph optimization},
  author={K{\"u}mmerle, Rainer and Grisetti, Giorgio and Strasdat, Hauke and Konolige, Kurt and Burgard, Wolfram},
  booktitle={2011 IEEE international conference on robotics and automation},
  pages={3607--3613},
  year={2011},
  organization={IEEE}
}

@article{ortiz2021visual,
  title={A visual introduction to Gaussian belief propagation},
  author={Ortiz, Joseph and Evans, Talfan and Davison, Andrew J},
  journal={arXiv preprint arXiv:2107.02308},
  year={2021}
}

@inproceedings{ortiz2020bundle,
  title={Bundle adjustment on a graph processor},
  author={Ortiz, Joseph and Pupilli, Mark and Leutenegger, Stefan and Davison, Andrew J},
  booktitle={2020 IEEE/CVF Conference on Computer Vision and Pattern Recognition (CVPR)},
  pages={2413--2422},
  year={2020},
  organization={IEEE}
}

@article{murai2024distributed,
  title={Distributed simultaneous localisation and auto-calibration using gaussian belief propagation},
  author={Murai, Riku and Alzugaray, Ignacio and Kelly, Paul HJ and Davison, Andrew J},
  journal={IEEE Robotics and Automation Letters},
  volume={9},
  number={3},
  pages={2136--2143},
  year={2024},
  publisher={IEEE}
}

@inproceedings{hug2024hyperion,
  title={Hyperion--a fast, versatile symbolic gaussian belief propagation framework for continuous-time slam},
  author={Hug, David and Alzugaray, Ignacio and Chli, Margarita},
  booktitle={European Conference on Computer Vision},
  pages={215--231},
  year={2024},
  organization={Springer}
}

@incollection{pearl2022reverend,
  title={Reverend Bayes on inference engines: A distributed hierarchical approach},
  author={Pearl, Judea},
  booktitle={Probabilistic and causal inference: the works of Judea Pearl},
  pages={129--138},
  year={2022}
}

@article{frey1997revolution,
  title={A revolution: Belief propagation in graphs with cycles},
  author={Frey, Brendan J and MacKay, David},
  journal={Advances in neural information processing systems},
  volume={10},
  year={1997}
}

@article{tazaki2022spanning,
  title={A spanning tree-based multi-resolution approach for pose-graph optimization},
  author={Tazaki, Yuichi},
  journal={IEEE Robotics and Automation Letters},
  volume={7},
  number={4},
  pages={10033--10040},
  year={2022},
  publisher={IEEE}
}

@article{carlone2014fast,
  title={A fast and accurate approximation for planar pose graph optimization},
  author={Carlone, Luca and Aragues, Rosario and Castellanos, Jos{\'e} A and Bona, Basilio},
  journal={The International Journal of Robotics Research},
  volume={33},
  number={7},
  pages={965--987},
  year={2014},
  publisher={SAGE Publications Sage UK: London, England}
}

@inproceedings{olson2006fast,
  title={Fast iterative alignment of pose graphs with poor initial estimates},
  author={Olson, Edwin and Leonard, John and Teller, Seth},
  booktitle={Proceedings 2006 IEEE International Conference on Robotics and Automation, 2006. ICRA 2006.},
  pages={2262--2269},
  year={2006},
  organization={IEEE}
}

@inproceedings{carlone2015initialization,
  title={Initialization techniques for 3D SLAM: A survey on rotation estimation and its use in pose graph optimization},
  author={Carlone, Luca and Tron, Roberto and Daniilidis, Kostas and Dellaert, Frank},
  booktitle={2015 IEEE international conference on robotics and automation (ICRA)},
  pages={4597--4604},
  year={2015},
  organization={IEEE}
}

@book{higham2002accuracy,
  title={Accuracy and stability of numerical algorithms},
  author={Higham, Nicholas J},
  year={2002},
  publisher={SIAM}
}

@article{jiao2026opennavmap,
  title={OpenNavMap: Structure-Free Topometric Mapping via Large-Scale Collaborative Localization},
  author={Jiao, Jianhao and Liu, Changkun and Yu, Jingwen and Liu, Boyi and Zhang, Qianyi and Wang, Yue and Kanoulas, Dimitrios},
  journal={arXiv preprint arXiv:2601.12291},
  year={2026}
}

@article{davison2018futuremapping,
  title={FutureMapping: The computational structure of spatial AI systems},
  author={Davison, Andrew J},
  journal={arXiv preprint arXiv:1803.11288},
  year={2018}
}

@book{papadimitriou2025spatial,
  title={Spatial Artificial Intelligence},
  author={Papadimitriou, Fivos},
  year={2025},
  publisher={Springer}
}

@article{gratton2007approximate,
  title={Approximate {Gauss--Newton} Methods for Nonlinear Least Squares Problems},
  author={Gratton, S. and Lawless, A. S. and Nichols, N. K.},
  journal={SIAM Journal on Optimization},
  volume={18},
  number={1},
  pages={106--132},
  year={2007},
  doi={10.1137/050624935}
}

@article{dembo1982inexact,
  title={Inexact {Newton} Methods},
  author={Dembo, Ron S. and Eisenstat, Stanley C. and Steihaug, Trond},
  journal={SIAM Journal on Numerical Analysis},
  volume={19},
  number={2},
  pages={400--408},
  year={1982},
  doi={10.1137/0719025}
}

@book{ortega1970iterative,
  title={Iterative Solution of Nonlinear Equations in Several Variables},
  author={Ortega, J. M. and Rheinboldt, W. C.},
  publisher={Academic Press},
  year={1970}
}

@article{moallemi2009convergence,
  title={Convergence of min-sum message passing for quadratic optimization},
  author={Moallemi, Ciamac C and Van Roy, Benjamin},
  journal={IEEE Transactions on Information Theory},
  volume={55},
  number={5},
  pages={2413--2423},
  year={2009},
  publisher={IEEE}
}

@article{fan2024mm_pgo,
  title={Majorization Minimization Methods for Distributed Pose Graph Optimization},
  author={Fan, Taosha and Murphey, Todd D.},
  journal={IEEE Transactions on Robotics},
  volume={40},
  pages={22--42},
  year={2024}
}

@article{bai2021cycle,
  title={Sparse Pose Graph Optimization in Cycle Space},
  author={Bai, Fang and Vidal-Calleja, Teresa and Grisetti, Giorgio},
  journal={IEEE Transactions on Robotics},
  year={2021},
  doi={10.1109/TRO.2021.3050328}
}

@misc{tee2026schwarz,
  title={Overlapping Schwarz Preconditioners for Pose Graph {SLAM} in Robotics},
  author={Tee, Yue Xiang and K{\"o}hler, Stephan and Rheinbach, Oliver and Zug, Sebastian},
  howpublished={Chemnitz FE Symposium},
  year={2026},
  note={Conference presentation abstract},
  url={https://www.chemnitz-am.de/cfem2026/talks/tee_xiang_yue.pdf}
}

@inproceedings{demmel2021rootba,
  title={Square Root Bundle Adjustment for Large-Scale Reconstruction},
  author={Demmel, Nikolaus and Sommer, Christiane and Cremers, Daniel and Usenko, Vladyslav},
  booktitle={Proceedings of the IEEE/CVF Conference on Computer Vision and Pattern Recognition (CVPR)},
  year={2021}
}

@inproceedings{weber2023poba,
  title={Power Bundle Adjustment for Large-Scale {3D} Reconstruction},
  author={Weber, Simon and Demmel, Nikolaus and Chan, Tin Chon and Cremers, Daniel},
  booktitle={Proceedings of the IEEE/CVF Conference on Computer Vision and Pattern Recognition (CVPR)},
  pages={281--289},
  year={2023}
}

\clearpage
\appendix

\thispagestyle{empty}

\noindent{\LARGE\bfseries Supplementary Material}
\vspace{1.5em}

{\large\bfseries Contents}

\vspace{1em}

\appsectionline{app:example}{Examples}
\appsubsectionline{app:example_illustration}{A three-node four-factor example}
\appsubsectionline{app:gbp_failure}{Standard GBP failure case}

\appsectionline{app:theories}{Theory}
\appsubsectionline{app:precision-imported}{Convergence of precisions}
\appsubsectionline{app:canonical-reference}{MAP-centered message reference}
\appsubsectionline{app:linear-hgbp-clear}{Frozen linear H-GBP cycle}
\appsubsectionline{app:limiting-phase}{From the initial precision to the limiting cycle}
\appsubsectionline{app:necessary-sufficient}{Global convergence criterion and MAP target}
\appsubsectionline{app:hgbp-manifolds}{H-GBP on manifolds}

\appsectionline{app:algdetail}{Algorithm}

\appsectionline{app:additional}{Additional Details}
\appsubsectionline{app:impl}{Implementation Details}
\appsubsectionline{app:exp-details}{Linear Experiment Details}
\appsubsectionline{app:exp-details2}{SE2/SE3 Pose-Graph Experiment Details}
\appsubsectionline{app:exp-details-ba}{Bundle Adjustment Experiment Details}

\appsectionline{app:discussion}{Discussion}

\clearpage

\section{Examples}
\label{app:example}

\subsection{A three-node four-factor example}
\label{app:example_illustration}

We give a small scalar example (see Fig. \ref{fig:scalar_q_example}) illustrating how the limiting eta-transfer
matrix \(Q\) is assembled from the local Schur-complement coefficients.
Let \(x_0,x_1,x_2\in\mathbb R\). The graph has three binary factors
\(f_{01},f_{12},f_{02}\), and one unary factor \(f_0\) attached to
\(x_1\). 
\begin{figure}[ht]
    \centering
    \includegraphics[width=\linewidth]{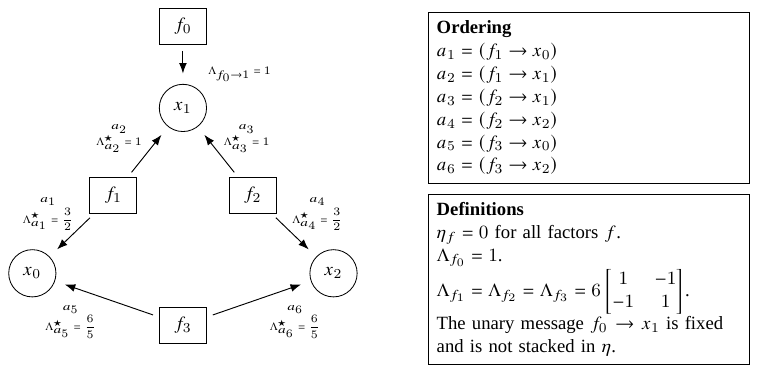}
    \vspace{-10pt}
    \caption{
    Scalar three-variable, four-factor example used to illustrate the construction of the limiting eta-message transfer matrix \(Q\). Circles denote variables, boxes denote factors, and arrows denote directed factor-to-variable messages.
}\label{fig:scalar_q_example}
\end{figure}

We set all information vectors to zero, since the affine term $s$ is irrelevant for the eta recursion. The unary precision is set as
\[
\Lambda_{f_0}=1,
\]
and each binary factor has precision
\[
\Lambda_{f_{1}}
= \Lambda_{f_{2}} = \Lambda_{f_{3}} =
6
\begin{bmatrix}
1 & -1\\
-1 & 1
\end{bmatrix}.
\]
Thus \(\Lambda_{ii}^f=6\), \(\Lambda_{ij}^f=-6\), and
\(\Lambda_{jj}^f=6\). The corresponding global information matrix is
\[
A=
\begin{bmatrix}
12 & -6 & -6\\
-6 & 13 & -6\\
-6 & -6 & 12
\end{bmatrix}.
\]

Order the directed binary messages as
\[
\begin{aligned}
a_1&=(f_{1}\to0),&
a_2&=(f_{1}\to1),&
a_3&=(f_{2}\to1),\\
a_4&=(f_{2}\to2),&
a_5&=(f_{3}\to0),&
a_6&=(f_{3}\to2).
\end{aligned}
\]
The unary factor sends a fixed message to \(x_1\) and is not included in
the stacked binary eta-message vector. Starting from zero binary factor-to-variable precision messages, i.e.,
\(\Lambda_{a_\ell}^{(0)}=0\) for all \(\ell=1,\dots,6\), the synchronous
precision update converges to a fixed point. The following table
lists the iterates.
\[
\begin{array}{c|cccccc}
t
& \Lambda_{a_1}^{(t)}
& \Lambda_{a_2}^{(t)}
& \Lambda_{a_3}^{(t)}
& \Lambda_{a_4}^{(t)}
& \Lambda_{a_5}^{(t)}
& \Lambda_{a_6}^{(t)}
\\
\hline
0  & 0.000 & 0.000 & 0.000 & 0.000 & 0.000 & 0.000\\
1  & 0.857 & 0.000 & 0.000 & 0.857 & 0.000 & 0.000\\
2  & 0.857 & 0.000 & 0.000 & 0.857 & 0.750 & 0.750\\
3  & 0.857 & 0.667 & 0.667 & 0.857 & 0.750 & 0.750\\
4  & 1.304 & 0.667 & 0.667 & 1.304 & 0.750 & 0.750\\
5  & 1.304 & 0.667 & 0.667 & 1.304 & 1.071 & 1.071\\
6  & 1.304 & 0.909 & 0.909 & 1.304 & 1.071 & 1.071\\
7  & 1.448 & 0.909 & 0.909 & 1.448 & 1.071 & 1.071\\
8  & 1.448 & 0.909 & 0.909 & 1.448 & 1.167 & 1.167\\
9  & 1.448 & 0.977 & 0.977 & 1.448 & 1.167 & 1.167\\
10 & 1.487 & 0.977 & 0.977 & 1.487 & 1.167 & 1.167\\
\vdots & \vdots & \vdots & \vdots & \vdots & \vdots & \vdots\\
19 & 1.500 & 0.999 & 0.999 & 1.500 & 1.199 & 1.199\\
20 & 1.500 & 1.000 & 1.000 & 1.500 & 1.200 & 1.200\\
21   & 1.500 & 1.000 & 1.000 & 1.500 & 1.200 & 1.200
\end{array}
\]
The exact fixed point for precision is therefore
\[
\Lambda_{a_1}^\star=\frac32,\quad
\Lambda_{a_2}^\star=1,\quad
\Lambda_{a_3}^\star=1,\quad
\Lambda_{a_4}^\star=\frac32,\quad
\Lambda_{a_5}^\star=\frac65,\quad
\Lambda_{a_6}^\star=\frac65.
\] 

Hence the limiting variable belief precisions are
\[
\widehat\Lambda_0^\star=\frac{27}{10},\qquad
\widehat\Lambda_1^\star=3,\qquad
\widehat\Lambda_2^\star=\frac{27}{10}.
\]

For a directed message \(a=(f\to i)\), let \(j(a)\) be the other endpoint
of \(f\), and let \(\bar a=(f\to j(a))\). Then
\[
D_a^\star
=
\Lambda_{j(a)j(a)}^f
+
\widehat\Lambda_{j(a)}^\star
-
\Lambda_{\bar a}^\star,
\qquad
B_a^\star
=
\Lambda_{i j(a)}^f(D_a^\star)^{-1}.
\]
For example,
\[
D_{a_1}^\star
=
6+\widehat\Lambda_1^\star-\Lambda_{a_2}^\star
=
6+3-1
=
8,
\qquad
B_{a_1}^\star
=
(-6)8^{-1}
=
-\frac34.
\]
The six Schur coefficients are
\[
\begin{array}{c|c|c|c}
a & D_a^\star & B_a^\star & -B_a^\star\\
\hline
a_1 & 8 & -\frac34 & \frac34\\
a_2 & \frac{36}{5} & -\frac56 & \frac56\\
a_3 & \frac{36}{5} & -\frac56 & \frac56\\
a_4 & 8 & -\frac34 & \frac34\\
a_5 & \frac{15}{2} & -\frac45 & \frac45\\
a_6 & \frac{15}{2} & -\frac45 & \frac45
\end{array}.
\]

Using the ordering \((a_1,\dots,a_6)\), the limiting eta-transfer matrix
is
\[
Q
=
\begin{bmatrix}
0 & 0 & \frac34 & 0 & 0 & 0\\
0 & 0 & 0 & 0 & \frac56 & 0\\
0 & 0 & 0 & 0 & 0 & \frac56\\
0 & \frac34 & 0 & 0 & 0 & 0\\
0 & 0 & 0 & \frac45 & 0 & 0\\
\frac45 & 0 & 0 & 0 & 0 & 0
\end{bmatrix}.
\]
For instance, the first row encodes
\(\eta_{a_1}^{+}=(3/4)\eta_{a_3}\): when sending \(f_{01}\to0\), the
factor looks through node \(1\), and the only binary incoming message at
node \(1\) excluding \(f_{1}\) is \(a_3=(f_{2}\to1)\). The unary factor
affects the denominator \(D_{a_1}^\star\) and the affine offset, but not
the linear message-to-message entry of \(Q\).

Finally, \(Q\) consists of two directed three-cycles, each with product
\[
\frac34\cdot\frac56\cdot\frac45=\frac12.
\]
Therefore
\[
\rho(Q)=\left(\frac12\right)^{1/3}\approx0.7937.
\]

\subsection{Standard GBP failure case}
\label{app:gbp_failure}
In Fig. \ref{fig:posegraph_example_annotated}, we construct an example with four variables (each two-dimensional) and multiple factors. Numerically, the dynamical index $\rho(Q)$ of the standard GBP is greater than 1. Hierarchy-GBP uses the low-rank operator, group \{0,1\} and \{2,3\} as two groups. By choosing coarse rank $r=3$ per group and performing $k=8$ GBP iterations per cycle, the Hierarchy-GBP index $\rho(M_8)$ is less than 1.

\[
\rho(Q)\approx 1.0019966,\qquad \rho(M_8)\approx 0.9205045.
\]
\begin{figure}[ht]
    \centering
    \includegraphics[width=\linewidth]{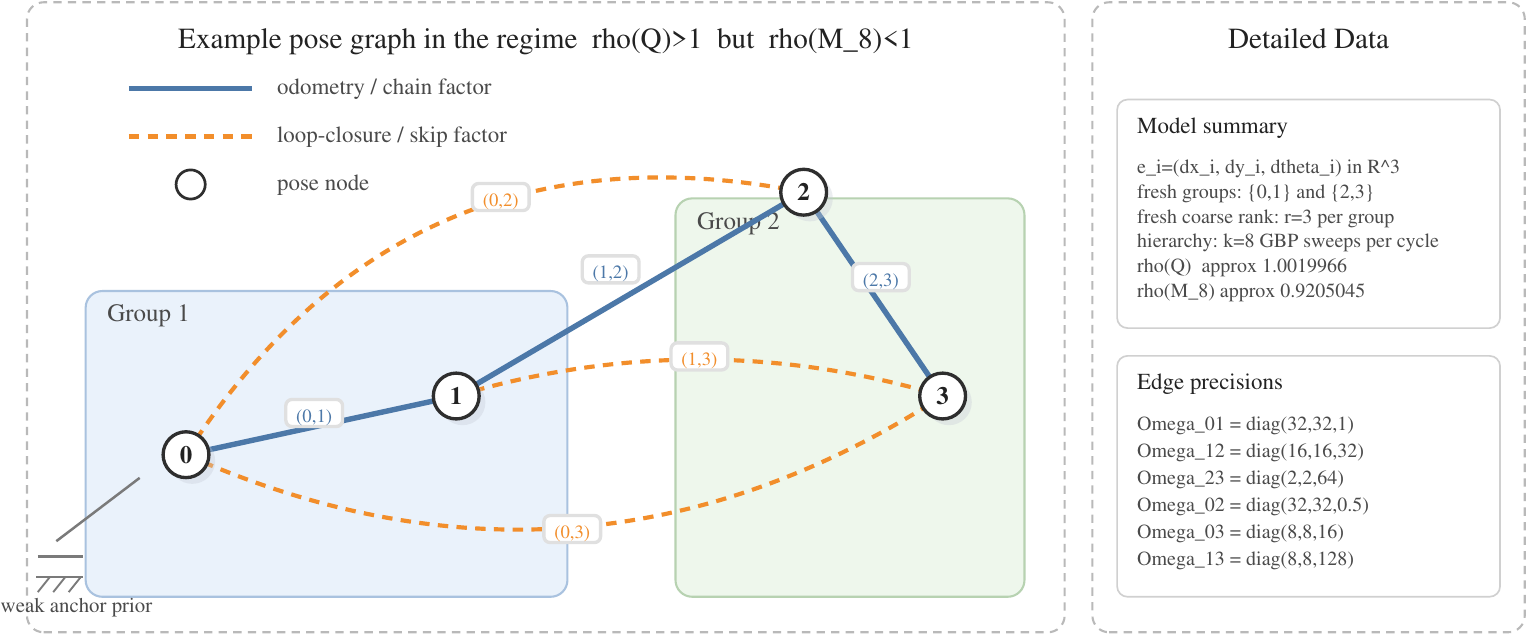}
        \vspace{-10pt}
    \caption{
    A four-node pose graph example, for which standard GBP does not converge, while H-GBP with $k=8$ sweeps per cycle will converge.}\label{fig:posegraph_example_annotated}
\end{figure}

\clearpage
\section{Theory}
\label{app:theories}

The purpose of this appendix is to prove the two convergence claims in the
main paper: first, for a fixed H-GBP cycle length \(k\), the full hierarchy
converges from every admissible initialization if and only if the reduced cycle
operator satisfies \(\rho(M_k)<1\) (equivalently \(\rho(T_{{\rm aug},k})<1\)).
Second, whenever this criterion holds, the fine-level mean converges to the exact MAP solution of the original linear Gaussian factor graph.

For readability, formulas are written for unary and binary factors.  Higher-order
factors use the same Schur-complement formulas \citep{ortiz2021visual} after replacing the single
``other endpoint'' by the block of all non-target variables.

We make the same assumptions (Assumption~\ref{ass:app-psd-init}, \ref{fair_assumption}) as in vector-valued standard GBP~\citep{du2018vectorvalued} and state their results as facts here, before delving into our results of H-GBP in the following subsections. 

Let \(N_1(i)\) and \(N_2(i)\) denote the unary and binary factors incident on
variable \(i\).  For a binary factor \(f=\{i,j\}\), write
\[
\Lambda_f=
\begin{bmatrix}
\Lambda_{ii}^f & \Lambda_{ij}^f\\
\Lambda_{ji}^f & \Lambda_{jj}^f
\end{bmatrix}.
\]
For a directed binary message \(a=(f\to i)\), with the other endpoint
\(j=j(a)\), define the cavity precision
\[
\mathcal C_a(\Lambda):=\Lambda_j^{\rm self}+S_a(\Lambda),
\qquad
\Lambda_j^{\rm self}:=\sum_{u\in N_1(j)}\Lambda_{u\to j},
\qquad
S_a(\Lambda):=
\sum_{h\in N_2(j)\setminus\{f\}}\Lambda_{h\to j}.
\]

\begin{assumption}
\label{ass:app-psd-init}
The initial factor-to-variable precision messages satisfy
\(
\Lambda_{f\to i}^{(0)}\succeq0
\)
for every directed factor-to-variable edge.
\end{assumption}

\begin{assumption}
\label{fair_assumption}
The following regularity conditions hold.
\begin{enumerate}[label=(\roman*)]
\item Each variable has positive self-precision:
\(
\Lambda_i^{\rm self}\succ0
\)
for all \(i\in\mathcal V\).
\item Each binary factor admits the measurement-form factorization
\[
\Lambda_f
=
\begin{bmatrix}A_{f,i}&A_{f,j}\end{bmatrix}^\top
W_f^{-1}
\begin{bmatrix}A_{f,i}&A_{f,j}\end{bmatrix},
\qquad
W_f\succ0,
\]
where \(A_{f,i}\) and \(A_{f,j}\) are full column rank.
\end{enumerate}
\end{assumption}

\begin{remark*}
Assumption \ref{fair_assumption} is consistent with standard Gaussian-BP implementations. In practice, the Schur-complement block in each factor-to-variable update is always stabilized
by adding a small positive diagonal term, for example
\[
D_{f\to i}\mapsto D_{f\to i}+\varepsilon I,
\qquad
\varepsilon\approx 10^{-10}.
\]
For the precision recursion, this is equivalent to adding an arbitrarily weak
unary regularizer with precision \(\varepsilon I\) to each variable. After
including this regularizer in \(N_1(i)\), we have
\(
\Lambda_i^{\rm self}
\succeq
\varepsilon I
\succ0 .
\)
Thus the condition \(\Lambda_i^{\rm self}\succ0\) should be understood as a
standard technical regularity condition ensuring well-defined Schur complements,
rather than as a restrictive modeling requirement.
\end{remark*}

Let $S_+$ and $S_{++}$ denote the blockwise positive semidefinite and positive definite cones. The precision update map is $F:S_+\to S_{++}$ such that
\begin{equation}
\label{eq:app-precision-map}
F_a(\Lambda)
=
\Lambda_{ii}^f-
\Lambda_{ij}^f
\bigl(\Lambda_{jj}^f+\mathcal C_a(\Lambda)\bigr)^{-1}
\Lambda_{ji}^f .
\end{equation}

Using the measurement factorization and the Woodbury identity, we get this fact from \citet{du2018vectorvalued}
\begin{equation}
\label{eq:app-measurement-form}
F_a(\Lambda)
=
A_{f,i}^\top
\left[
W_f+A_{f,j}\mathcal C_a(\Lambda)^{-1}A_{f,j}^\top
\right]^{-1}
A_{f,i}.
\end{equation}
This form makes positivity, monotonicity, and differentiability of $F$ transparent.

\begin{theorem}[standard GBP precision convergence \citep{du2018vectorvalued}]
\label{thm:app-precision-imported}
Under Assumptions~\ref{ass:app-psd-init}--\ref{fair_assumption}, the recursion
\(\Lambda^{(\ell+1)}=F(\Lambda^{(\ell)})\) has a unique positive-definite fixed
point \(\Lambda^\star\).  For every positive-semidefinite initialization,
\(\Lambda^{(\ell)}\to\Lambda^\star\).  Moreover, for every fixed neighborhood of
\(\Lambda^\star\), the trajectory approaches that neighborhood at an exponential
rate before the first entrance.
\end{theorem}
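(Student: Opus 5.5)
The plan is to treat $F$ as a monotone, strictly sublinear self-map of the cone of blockwise PSD matrices, in the spirit of Krause/Thompson-metric arguments. The measurement form \eqref{eq:app-measurement-form} is the main tool. Order all objects in the blockwise Loewner order.

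\textbf{Step 1 (monotonicity).} If $\Lambda\preceq\Lambda'$ then $S_a(\Lambda)\preceq S_a(\Lambda')$, so $\mathcal C_a(\Lambda)\preceq\mathcal C_a(\Lambda')$. Assumption~\ref{fair_assumption}(i) makes both positive definite, hence $\mathcal C_a(\Lambda')^{-1}\preceq \mathcal C_a(\Lambda)^{-1}$. The bracket $W_f+A_{f,j}\mathcal C_a^{-1}A_{f,j}^\top$ therefore decreases, and its inverse increases. It follows that $F_a(\Lambda)\preceq F_a(\Lambda')$.

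\textbf{Step 2 (invariant order interval).} Set $U_a:=A_{f,i}^\top W_f^{-1}A_{f,i}$. Set $L_a:=A_{f,i}^\top[W_f+A_{f,j}(\Lambda_j^{\rm self})^{-1}A_{f,j}^\top]^{-1}A_{f,i}$, which is the value $F_a(0)$. Full column rank of $A_{f,i}$ gives $0\prec L_a\preceq F_a(\Lambda)\preceq U_a$ for every $\Lambda\succeq0$. Hence after one step any PSD initialization lies in the compact interval $\mathcal K:=[L,U]\subset S_{++}$, and $F(\mathcal K)\subseteq\mathcal K$.

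Monotonicity then gives two bounding sequences. The iterates from $L$ increase to a fixed point $\Lambda_-$, and the iterates from $U$ decrease to a fixed point $\Lambda_+$. Every trajectory is sandwiched between them, $F^\ell(L)\preceq\Lambda^{(\ell+1)}\preceq F^\ell(U)$. So once uniqueness ($\Lambda_-=\Lambda_+$) is shown, convergence from every PSD initialization follows.

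\textbf{Step 3 (strict sublinearity and uniqueness).} For $\alpha\in(0,1)$, note that $\mathcal C_a(\alpha\Lambda)=\Lambda_j^{\rm self}+\alpha S_a(\Lambda)\succeq\alpha\,\mathcal C_a(\Lambda)$. This holds because the unary messages are fixed and are not scaled. Therefore
\[
F_a(\alpha\Lambda)\succeq A_{f,i}^\top\bigl[W_f+\alpha^{-1}A_{f,j}\mathcal C_a^{-1}A_{f,j}^\top\bigr]^{-1}A_{f,i}
=\alpha A_{f,i}^\top\bigl[\alpha W_f+A_{f,j}\mathcal C_a^{-1}A_{f,j}^\top\bigr]^{-1}A_{f,i}\succ\alpha F_a(\Lambda).
\]
The final inequality is strict because $W_f\succ0$ and $A_{f,i}$ has full column rank.

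Let $\alpha^\ast:=\sup\{\alpha:\alpha\Lambda_+\preceq\Lambda_-\}$, which is positive since both points lie in $\mathcal K$. If $\alpha^\ast<1$, apply $F$ and use monotonicity together with the strict inequality. This gives $\Lambda_-=F(\Lambda_-)\succeq F(\alpha^\ast\Lambda_+)\succ\alpha^\ast\Lambda_+$. The strict gap then lets $\alpha^\ast$ be enlarged slightly, contradicting maximality. Hence $\Lambda_+\preceq\Lambda_-\preceq\Lambda_+$, and the fixed point $\Lambda^\star$ is unique and positive definite.

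\textbf{Step 4 (exponential rate) — the main obstacle.} The difficulty is upgrading the pointwise strict inequality to a \emph{uniform} contraction factor. The tool is the Thompson part metric $d(X,Y)=\log\max\{M(X/Y),M(Y/X)\}$, where $M(X/Y)=\inf\{\beta:X\preceq\beta Y\}$. Monotonicity plus Step 3 give $d(F(X),F(Y))\le d(X,Y)$. I would try to strengthen this to $d(F(X),F(Y))\le c\,d(X,Y)$ with $c<1$ on $\mathcal K$.

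First approach: show there is a uniform $\gamma<1$ with $F_a(\alpha\Lambda)\succeq\alpha^{\gamma}F_a(\Lambda)$ for all $\Lambda\in\mathcal K$ and $\alpha\in(0,1]$. Differentiating at $\alpha=1$, this reduces to $DF_a(\Lambda)[\Lambda]\preceq\gamma F_a(\Lambda)$. The defect $F_a-DF_a[\Lambda]$ is bounded below by a quantity controlled by $W_f$ and $\Lambda_j^{\rm self}$. That quantity is positive definite and depends continuously on $\Lambda$, so compactness of $\mathcal K$ yields a uniform $\gamma$.

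With $c=\gamma$, we get $d(\Lambda^{(\ell)},\Lambda^\star)\le\gamma^{\ell-1}d(\Lambda^{(1)},\Lambda^\star)$. On $\mathcal K$ the Thompson metric and any matrix norm are equivalent up to constants. This gives the geometric bound with constant $C_\Lambda$ depending on the initialization, and in particular exponential entrance into any fixed neighborhood of $\Lambda^\star$.
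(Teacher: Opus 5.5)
The paper gives no proof of this statement. It is imported from \citet{du2018vectorvalued} as a black box. The paper's own work starts afterwards, where it combines the imported ``exponential entrance'' with a local contraction at $\Lambda^\star$ (the weighted norm $\|\cdot\|_\star$ together with $DF(\Lambda^\star)[\Lambda^\star]\preceq\alpha\Lambda^\star$) to obtain a global geometric rate.

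Your proposal is a correct, self-contained proof by a different route. Steps 1--3 are the standard monotone, strictly sublinear cone argument, and they check out: the invariant order interval $[L,U]\subset S_{++}$, the sandwich between the increasing orbit of $L$ and the decreasing orbit of $U$, and the $\alpha^\ast$ argument forcing $\Lambda_-=\Lambda_+$. Step 4 uses the same key inequality as the paper, $DF_a(Y)[Y]\prec F_a(Y)$, but at every point of the cone rather than only at $\Lambda^\star$, and turns it into a Thompson-metric contraction. This buys a lot. Uniqueness and convergence follow from it alone, so Step 3 becomes redundant. Since $\Lambda^{(1)}\in\mathcal K$ always, you also get a global geometric bound whose rate $\gamma$, and from the first iterate on also the prefactor, are independent of the initialization. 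That is the paper's later global geometric precision theorem, obtained directly, without the imported entrance claim or the mean-value linearization. In return, the paper's local analysis yields the sharper asymptotic factor dictated by $DF(\Lambda^\star)$, whereas your $\gamma$ is a worst-case constant over the cone.

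One step in Step 4 needs tightening. The bound $F_a(\alpha\Lambda)\succeq\alpha^{\gamma}F_a(\Lambda)$ for all $\alpha\in(0,1]$ does not reduce to the derivative condition at $\alpha=1$. With $h(s)=s^{-\gamma}F_a(s\Lambda)$ you need $h'(s)\preceq0$, i.e.\ $DF_a(s\Lambda)[s\Lambda]\preceq\gamma F_a(s\Lambda)$ at every $s\in[\alpha,1]$. For small $s$ the points $s\Lambda$ leave $\mathcal K$, so compactness of $\mathcal K$ is not the right tool. The fix is short. Write $\mathcal C_a=\mathcal C_a(Y)$ and $B=W_f+A_{f,j}\mathcal C_a^{-1}A_{f,j}^\top$. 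The defect is then $F_a(Y)-DF_a(Y)[Y]=A_{f,i}^\top B^{-1}\bigl(W_f+A_{f,j}\mathcal C_a^{-1}\Lambda_j^{\rm self}\mathcal C_a^{-1}A_{f,j}^\top\bigr)B^{-1}A_{f,i}\succeq A_{f,i}^\top B^{-1}W_fB^{-1}A_{f,i}$. For every $Y\succeq0$ we have $B\preceq W_f+A_{f,j}(\Lambda_j^{\rm self})^{-1}A_{f,j}^\top$ and $F_a(Y)\preceq U_a$, so this gives a single $\gamma<1$ valid on the whole PSD cone. Alternatively, use compactness of $[0,U]$, which contains all the rays $s\Lambda$.
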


\clearpage
\subsection{Convergence of precisions}
\label{app:precision-imported}

\begin{proposition}[Precision dynamics are unchanged by the hierarchy]
\label{prop:app-hierarchy-precision}
The H-GBP coarse correction changes only the variable belief information vectors
and means.  It does not alter factor-to-variable precision messages.  Therefore
H-GBP has exactly the same precision recursion,
\(
\Lambda^{(\ell+1)}=F(\Lambda^{(\ell)})
\),
as standard synchronous GBP.
\end{proposition}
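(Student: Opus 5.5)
The proof inspects which quantities each stage of an H-GBP cycle writes. It then checks that the precision update map $F$ in \eqref{eq:app-precision-map} reads only quantities that are never modified by the coarse stage.

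\emph{Step 1: list the cycle stages.} Each cycle consists of $k$ synchronous GBP sweeps followed by one abstraction-and-recovery step. A sweep applies \eqref{eq:variable-to-factor}, \eqref{eq:schur-message} and \eqref{eq:variable-update} in that order.

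\emph{Step 2: the recovery step writes only $x$ and $\widehat\eta$.} The coarse correction forms $\delta z$ from $P^\top A P$ and $P^\top(b-Ax)$. Here $P$ is built once from the fixed local matrices $H_g$ of \eqref{eq:fresh-local-operator}, and $A$ is the fixed global precision matrix. The step then sets $x^+=x+P\delta z$ and resets $\widehat\eta_i=\widehat\Lambda_i x_i^+$ via \eqref{inconsistent_reconver}. None of these operations assigns to any $\Lambda_{f\to i}$ or to any $\widehat\Lambda_i$. The factor-to-variable messages are explicitly preserved, which is the message-preserving rule; $\widehat\Lambda_i$ is only read, as a multiplier.

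\emph{Step 3: the precision half of a sweep ignores information vectors.} In \eqref{eq:variable-to-factor}, the quantity $\Lambda_{i\setminus f}$ depends only on $\widehat\Lambda_i$ and $\Lambda_{f\to i}$. In \eqref{eq:schur-message}, $\Lambda^+_{f\to i}$ depends only on the fixed factor blocks $\Lambda^f$ and on $\Lambda_{j\setminus f}$. In \eqref{eq:variable-update}, $\widehat\Lambda_i$ is the sum of the incoming precision messages. Composing these gives exactly $\Lambda^{(\ell+1)}=F(\Lambda^{(\ell)})$, with the cavity precision $\mathcal C_a$ defined above. No $\eta$, $\widehat\eta$ or $x$ enters this composition, so the precision recursion decouples from the information-vector recursion.

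\emph{Step 4: conclude by induction.} Induct over the sweep counter $\ell$, counted across cycles. Suppose that before sweep $\ell$ the precision messages of H-GBP coincide with those of standard GBP started from the same initialization. Any coarse step inserted before sweep $\ell$ leaves them unchanged by Step 2. Step 3 then shows that sweep $\ell$ produces $F(\Lambda^{(\ell)})$ in both algorithms.

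\emph{Main subtlety.} The delicate point is consistency of $\widehat\Lambda_i$ after a recovery. One must confirm that the next sweep still computes $\widehat\Lambda_i=\sum_{f\in N(i)}\Lambda_{f\to i}$, and that the belief precision is never overwritten by anything coarse-derived, such as a coarse-level marginal precision. Step 2 handles this: \eqref{inconsistent_reconver} modifies only $\widehat\eta_i$. The resulting belief/message inconsistency noted in Section~\ref{sec:first_gbp_operator} is therefore confined to the information vectors.

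It then follows that Theorem~\ref{thm:app-precision-imported} applies verbatim to H-GBP.
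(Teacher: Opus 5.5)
Your proposal is correct and is essentially the paper's argument: the message-preserving injection rewrites only $x$ and $\widehat\eta_i$, leaving $\widehat\Lambda_i$ and every $\Lambda_{f\to i}$ unchanged, so the next sweep sees exactly the precision state that ordinary GBP would. Your Step 3 (the precision half of a sweep never reads $\eta$, $\widehat\eta$ or $x$) and your induction over the global sweep counter make explicit what the paper's two-line proof leaves implicit.
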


\begin{proof}
The message-preserving injection rewrites \(\widehat\eta_i\) so that the belief
mean is shifted by the coarse correction, while leaving
\(\widehat\Lambda_i\), \(\Lambda_{f\to i}\), and \(\eta_{f\to i}\) unchanged.
The next precision update therefore sees the same precision state as
ordinary GBP.
\end{proof}

Now, we strengthen the convergence result to a global geometric convergence rate, which will be recursively used to control the early, nonstationary precision phase of H-GBP.

\begin{lemma}[Derivative of the precision map at the fixed point]
\label{lem:app-DF-formula}
The precision map \(F\) is Fr\'echet differentiable on a neighborhood of
\(\Lambda^\star\) inside \(\mathbb S_{++}\).  For a directed message
\(a=(f\to i)\), let \(j=j(a)\), and define
\[
\mathcal B_a(\Lambda):=
W_f+A_{f,j}\mathcal C_a(\Lambda)^{-1}A_{f,j}^\top .
\]
Then for every symmetric blockwise increment \(H\),
\[
DF_a(\Lambda)[H]
=
A_{f,i}^\top\mathcal B_a(\Lambda)^{-1}
A_{f,j}\mathcal C_a(\Lambda)^{-1}
\left(\sum_{h\in N_2(j)\setminus\{f\}}H_{h\to j}\right)
\mathcal C_a(\Lambda)^{-1}
A_{f,j}^\top\mathcal B_a(\Lambda)^{-1}A_{f,i}.
\]
In particular, \(H\succeq0\) implies \(DF(\Lambda^\star)[H]\succeq0\).
\end{lemma}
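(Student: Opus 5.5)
We obtain the formula by differentiating the measurement form \eqref{eq:app-measurement-form} via the chain rule, applied to three nested maps:
\[
\Lambda\mapsto \mathcal C_a(\Lambda),\qquad
\mathcal C\mapsto \mathcal B=W_f+A_{f,j}\mathcal C^{-1}A_{f,j}^\top,\qquad
\mathcal B\mapsto A_{f,i}^\top\mathcal B^{-1}A_{f,i}.
\]

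\textbf{Step 1: differentiability.} Since \(\Lambda^\star\in S_{++}\), there is a neighborhood of \(\Lambda^\star\) on which every message block is positive definite. By Assumption~\ref{fair_assumption}(i), \(\mathcal C_a(\Lambda)=\Lambda_j^{\rm self}+S_a(\Lambda)\succeq\Lambda_j^{\rm self}\succ0\) there, and then \(\mathcal B_a(\Lambda)\succeq W_f\succ0\). Matrix inversion is real-analytic on invertible matrices, so each \(F_a\) is a composition of smooth maps on this neighborhood and is therefore Fr\'echet differentiable. (The unary messages \(\Lambda_{u\to j}\) are fixed constants, so \(\Lambda_j^{\rm self}\) does not vary.)

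\textbf{Step 2: derivative of the cavity precision.} The map \(\Lambda\mapsto\mathcal C_a(\Lambda)\) is affine, so \(D\mathcal C_a[H]=\sum_{h\in N_2(j)\setminus\{f\}}H_{h\to j}=:\Delta\).

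\textbf{Step 3: apply the inverse derivative twice.} We use \(D(X^{-1})[E]=-X^{-1}EX^{-1}\) at each level. First,
\[
D\mathcal B_a[H]=-A_{f,j}\mathcal C_a^{-1}\Delta\,\mathcal C_a^{-1}A_{f,j}^\top .
\]
Second,
\[
DF_a[H]=-A_{f,i}^\top\mathcal B_a^{-1}\,D\mathcal B_a[H]\,\mathcal B_a^{-1}A_{f,i}.
\]
The two minus signs cancel, which yields exactly the stated formula. This sign bookkeeping is the only place requiring care, and we do not expect a real obstacle here. As a cross-check, one can differentiate the Schur form \eqref{eq:app-precision-map} directly: it gives \(\Lambda_{ij}^f(\Lambda_{jj}^f+\mathcal C_a)^{-1}\Delta(\Lambda_{jj}^f+\mathcal C_a)^{-1}\Lambda_{ji}^f\), which is consistent with the Woodbury identity.

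\textbf{Step 4: positivity.} If \(H\succeq0\) blockwise, then \(\Delta\succeq0\), being a sum of PSD blocks. Set \(K:=\mathcal C_a^{-1}A_{f,j}^\top\mathcal B_a^{-1}A_{f,i}\), using that \(\mathcal B_a\) and \(\mathcal C_a\) are symmetric. The derivative then has the congruence form \(DF_a[H]=K^\top\Delta K\), which is PSD. Evaluating at \(\Lambda=\Lambda^\star\) gives \(DF(\Lambda^\star)[H]\succeq0\).
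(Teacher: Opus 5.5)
Your proposal is correct and follows the same route as the paper. Both differentiate the measurement form $F_a=A_{f,i}^\top\mathcal B_a^{-1}A_{f,i}$ through the affine map $\Lambda\mapsto\mathcal C_a(\Lambda)$, apply $D(X^{-1})[E]=-X^{-1}EX^{-1}$ twice so the signs cancel, and read off positivity from the PSD middle block; your lower bound $\mathcal C_a(\Lambda)\succeq\Lambda_j^{\rm self}\succ0$ (instead of the paper's continuity argument) and the congruence form $K^\top\Delta K$ are minor tightenings of that argument.
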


\begin{proof}
Equation~\eqref{eq:app-measurement-form} gives
\(
F_a(\Lambda)
=
A_{f,i}^\top\mathcal B_a(\Lambda)^{-1}A_{f,i}.
\)

The map
\(
\mathcal C_a(\Lambda)
\)
is linear in the relevant components of \(\Lambda\). Hence, 
\(
\label{DC_a}
  D\mathcal C_a(\Lambda)[H]
=\sum_{h\in N_2(j)\setminus\{f\}}H_{h\to j}.  
\) Since \(\mathcal C_a(\Lambda^\star)\succ0\), by continuity
\(\mathcal C_a(\Lambda)\succ0\) in a sufficiently small neighborhood of
\(\Lambda^\star\). Thus the inverse is differentiable there. 

The Fr\'echet derivative formula for inversion~\citep{higham2002accuracy} is \(D(X^{-1})[H]=-X^{-1}HX^{-1}\), so
\[
D(\mathcal C_a^{-1})(\Lambda)[H]
=
-\mathcal C_a(\Lambda)^{-1}
D\mathcal C_a(\Lambda)[H]
\mathcal C_a(\Lambda)^{-1}.
\]

Next,
\(
\mathcal B_a(\Lambda)
=
W_f
+
A_{f,j}\mathcal C_a(\Lambda)^{-1}A_{f,j}^\top .
\)
Since \(W_f\), \(A_{f,j}\), and \(A_{f,j}^\top\) are fixed with respect to
\(\Lambda\), differentiating gives
\(
D\mathcal B_a(\Lambda)[H]
=
A_{f,j}
D(\mathcal C_a^{-1})(\Lambda)[H]
A_{f,j}^\top .
\)

Applying the same inversion derivative formula to
\(\mathcal B_a(\Lambda)^{-1}\), we have
\[
D(\mathcal B_a^{-1})(\Lambda)[H]
=
-\mathcal B_a(\Lambda)^{-1}
D\mathcal B_a(\Lambda)[H]
\mathcal B_a(\Lambda)^{-1}.
\]

Finally, since \(A_{f,i}\) is fixed with respect to \(\Lambda\),
\(
DF_a(\Lambda)[H]
=
A_{f,i}^\top
D(\mathcal B_a^{-1})(\Lambda)[H]
A_{f,i}.
\)
Hence, substituting all the equations above sequentially we get
\[
DF_a(\Lambda)[H]
=
A_{f,i}^\top
\mathcal B_a(\Lambda)^{-1}
A_{f,j}\mathcal C_a(\Lambda)^{-1}
\left(
\sum_{h\in N_2(j)\setminus\{f\}}H_{h\to j}
\right)
\mathcal C_a(\Lambda)^{-1}
A_{f,j}^\top
\mathcal B_a(\Lambda)^{-1}
A_{f,i}.
\]
This is the claimed derivative formula. If \(H\succeq0\), the middle summed block is
positive semidefinite, so every displayed block derivative is positive
semidefinite.
\end{proof}

\begin{proposition}[Local strict contraction near the precision fixed point]
\label{prop:app-local-precision-contraction}
There exist a monotone norm \(\|\cdot\|_\star\) on the blockwise precision
space, a radius \(\varepsilon_0>0\), and a constant \(q\in(0,1)\) such that
\[
\|\Lambda-\Lambda^\star\|_\star<\varepsilon_0
\quad\Longrightarrow\quad
\|F(\Lambda)-\Lambda^\star\|_\star
\le
q\|\Lambda-\Lambda^\star\|_\star .
\]
\end{proposition}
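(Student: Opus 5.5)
The plan is to linearize $F$ at $\Lambda^\star$, show that the linearization is a positive linear map that strictly shrinks a suitable positive-definite "order unit" $E$, and take $\|\cdot\|_\star$ to be the order-unit norm generated by $E$.

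\textbf{Step 1: linearization and positivity.} By Lemma~\ref{lem:app-DF-formula}, $L:=DF(\Lambda^\star)$ is a linear map on blockwise symmetric increments. Because of its congruence form, it preserves the blockwise Loewner order: $-tE\preceq H\preceq tE$ implies $-tL[E]\preceq L[H]\preceq tL[E]$.

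\textbf{Step 2: the order-unit norm.} Given blockwise $E\succ0$, define
\[
\|H\|_\star:=\inf\{t\ge0:\ -tE_a\preceq H_a\preceq tE_a\ \text{for all } a\}.
\]
This is a norm on the finite-dimensional space, and it is monotone with respect to the blockwise order. Step~1 gives $\|L[H]\|_\star\le q'\|H\|_\star$ whenever $L[E]\preceq q'E$. So the whole proof reduces to finding $E\succ0$ and $q'<1$ with $L[E]\preceq q'E$.

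\textbf{Step 3: the choice $E=\Lambda^\star$ (main obstacle).} Write $F_a(\Lambda)=\varphi_a(\mathcal C_a(\Lambda))$, where
\[
\varphi_a(X)=A_{f,i}^\top\bigl[W_f+A_{f,j}X^{-1}A_{f,j}^\top\bigr]^{-1}A_{f,i}.
\]
On $X\succ0$, $\varphi_a$ is operator monotone and operator concave. This holds because $X\mapsto[W+AX^{-1}A^\top]^{-1}$ is a parallel-sum type map, i.e.\ a Schur complement of a matrix affine in $X$. It also satisfies $\varphi_a(X)\to0$ as $X\downarrow0$.

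By Lemma~\ref{lem:app-DF-formula}, $L_a[\Lambda^\star]=D\varphi_a(\mathcal C_a^\star)[S_a(\Lambda^\star)]$. Since $S_a=\mathcal C_a^\star-\Lambda_j^{\rm self}$, this equals
\[
D\varphi_a(\mathcal C_a^\star)[\mathcal C_a^\star]\;-\;D\varphi_a(\mathcal C_a^\star)[\Lambda^{\rm self}_j].
\]
Concavity together with $\varphi_a(0^+)=0$ gives $D\varphi_a(X)[X]\preceq\varphi_a(X)$, so
\[
L_a[\Lambda^\star]\preceq\Lambda^\star_a-D\varphi_a(\mathcal C_a^\star)[\Lambda^{\rm self}_j].
\]

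The delicate point is strictness. The subtracted term is PSD, but it need not be strictly positive definite when $A_{f,j}^\top\mathcal B_a^{-1}A_{f,i}$ is rank deficient. I would first try to extract strictness from $W_f\succ0$, via the strict gap in the concavity inequality; equivalently, compare with the scalar curve $t\mapsto\varphi_a(t\mathcal C_a^\star)$, which is strictly sub-homogeneous because of $W_f$. This should give $L_a[\Lambda^\star]\prec\Lambda^\star_a$ for each of the finitely many $a$.

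If that fails, I would fall back on a Perron--Frobenius argument for the cone-preserving map $L$. The preceding inequality gives $\rho(L)\le1$. Assumption~\ref{fair_assumption} should rule out a PSD eigenvector with eigenvalue $1$. Then a perturbation $E=\sum_{n\ge0}\gamma^{-n}L^n[\Lambda^\star]$ with $\rho(L)<\gamma<1$ gives an order unit satisfying $L[E]\preceq\gamma E$. Either way, finite dimensionality yields some $q'<1$ with $L[E]\preceq q'E$.

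\textbf{Step 4: from the linearization to $F$.} By Fr\'echet differentiability on a neighbourhood of $\Lambda^\star$ (Lemma~\ref{lem:app-DF-formula}),
\[
F(\Lambda)-\Lambda^\star=L[\Lambda-\Lambda^\star]+o(\|\Lambda-\Lambda^\star\|_\star).
\]
Choose $\varepsilon_0$ small enough that the remainder is at most $\tfrac{1-q'}{2}\|\Lambda-\Lambda^\star\|_\star$ on the ball of radius $\varepsilon_0$. Then
\[
\|F(\Lambda)-\Lambda^\star\|_\star\le q\,\|\Lambda-\Lambda^\star\|_\star,\qquad q:=\tfrac{1+q'}{2}<1,
\]
which is the claim.
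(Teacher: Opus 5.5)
Your skeleton is the paper's. Its norm $\|H\|_\star=\max_a\|(\Lambda_a^\star)^{-1/2}H_a(\Lambda_a^\star)^{-1/2}\|_2$ is exactly your order-unit norm with $E=\Lambda^\star$, and positivity of $DF(\Lambda^\star)$ is used the same way. Everything then reduces to $DF(\Lambda^\star)[\Lambda^\star]\preceq\alpha\Lambda^\star$ with $\alpha<1$; your Step~4 is fine and needs only differentiability at $\Lambda^\star$.

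The gap is the step you flagged yourself. The strict inequality $L_a[\Lambda^\star]\prec\Lambda_a^\star$ is the whole content of the proposition, and you only say the concavity gap ``should'' give it. Strict sub-homogeneity $\varphi_a(tX)\succ t\varphi_a(X)$ does not by itself give a strict inequality for the derivative, because strictness can be lost in the limit $t\to1$. The Perron--Frobenius fallback rests on an equally unproved claim, namely that there is no PSD eigenvector for eigenvalue $1$.

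The concavity route also uses a false ingredient: $\varphi_a(X)\to0$ as $X\downarrow0$ fails in general. In fact $\varphi_a(0^+)=\Lambda_{ii}^f-\Lambda_{ij}^f(\Lambda_{jj}^f)^{-1}\Lambda_{ji}^f$; for $A_{f,i}=(1,0)^\top$, $A_{f,j}=(1,1)^\top$, $W_f=I$ this equals $\tfrac12$. Since $\varphi_a(0^+)\succeq0$, the non-strict bound $D\varphi_a(X)[X]\preceq\varphi_a(X)$ survives, but it yields no strictness.

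What is missing is a one-line identity that needs neither concavity nor a limit at $0$. Write $\mathcal B_a(X)=W_f+A_{f,j}X^{-1}A_{f,j}^\top$. Then the formula of Lemma~\ref{lem:app-DF-formula} gives
\[
D\varphi_a(X)[X]=A_{f,i}^\top\mathcal B_a(X)^{-1}\bigl(\mathcal B_a(X)-W_f\bigr)\mathcal B_a(X)^{-1}A_{f,i}=\varphi_a(X)-A_{f,i}^\top\mathcal B_a(X)^{-1}W_f\,\mathcal B_a(X)^{-1}A_{f,i}.
\]
The subtracted term is positive definite because $W_f\succ0$ and $A_{f,i}$ has full column rank. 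Your scalar-curve idea lands on the same identity if you actually differentiate $t^{-1}\varphi_a(tX)=A_{f,i}^\top(tW_f+A_{f,j}X^{-1}A_{f,j}^\top)^{-1}A_{f,i}$ at $t=1$.

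Now use $S_a^\star\preceq\mathcal C_a^\star$ and positivity of $D\varphi_a(\mathcal C_a^\star)$. This gives $L_a[\Lambda^\star]\preceq D\varphi_a(\mathcal C_a^\star)[\mathcal C_a^\star]\prec\varphi_a(\mathcal C_a^\star)=\Lambda_a^\star$. Since there are finitely many messages, you get a uniform $\alpha<1$.

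This is precisely the paper's argument. The paper phrases it as $A_{f,j}(\mathcal C_a^\star)^{-1}A_{f,j}^\top\prec\mathcal B_a^\star$, followed by congruence with $(\mathcal B_a^\star)^{-1}$ and then with $A_{f,i}$. Your remark that the $\Lambda_j^{\rm self}$ term need not be strictly positive is correct. The strictness genuinely comes from $W_f$, and that is the one piece your proposal leaves unverified.
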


\begin{proof}
For a blockwise symmetric perturbation \(H=\{H_a\}_{a\in\mathcal D}\), define
\[
\|H\|_\star
:=
\max_{a\in\mathcal D}
\left\|
(\Lambda_a^\star)^{-1/2}H_a(\Lambda_a^\star)^{-1/2}
\right\|_2 .
\]
This norm is monotone on the positive semi-definite cone: if
\(0\preceq H\preceq\widetilde H\), then each normalized block is ordered in the
Loewner sense, and the spectral norm of positive semidefinite blocks is
monotone.  The definition also implies the blockwise order bound
\begin{equation}
\label{eq:app-star-order-bound-new}
-\|H\|_\star\Lambda^\star
\preceq
H
\preceq
\|H\|_\star\Lambda^\star .
\end{equation}

We show first that \(DF(\Lambda^\star)\) is strictly contractive in this norm.
Fix \(a=(f\to i)\), let \(j=j(a)\), and write
\[
S_a^\star:=\sum_{h\in N_2(j)\setminus\{f\}}\Lambda_{h\to j}^\star,
\quad
\mathcal C_a^\star:=\Lambda_j^{\rm self}+S_a^\star,
\quad
\mathcal B_a^\star:=W_f+A_{f,j}(\mathcal C_a^\star)^{-1}A_{f,j}^\top .
\]
Substituting \(H=\Lambda^\star\) into Lemma~\ref{lem:app-DF-formula} gives
\[
DF_a(\Lambda^\star)[\Lambda^\star]
=
A_{f,i}^\top(\mathcal B_a^\star)^{-1}
A_{f,j}(\mathcal C_a^\star)^{-1}
S_a^\star
(\mathcal C_a^\star)^{-1}
A_{f,j}^\top(\mathcal B_a^\star)^{-1}A_{f,i}.
\]
Because \(\Lambda_j^{\rm self}\succ0\),
\(0\preceq S_a^\star\prec\mathcal C_a^\star\), and hence
\(
(\mathcal C_a^\star)^{-1}S_a^\star(\mathcal C_a^\star)^{-1}
\prec
(\mathcal C_a^\star)^{-1}.
\)

Also \(W_f\succ0\), so
\(
A_{f,j}(\mathcal C_a^\star)^{-1}A_{f,j}^\top
\prec
W_f+A_{f,j}(\mathcal C_a^\star)^{-1}A_{f,j}^\top
=
\mathcal B_a^\star .
\)

Combining these inequalities gives
\[
DF_a(\Lambda^\star)[\Lambda^\star]
\prec
A_{f,i}^\top(\mathcal B_a^\star)^{-1}A_{f,i}
=
F_a(\Lambda^\star)
=
\Lambda_a^\star .
\]
Since there are finitely many directed messages, there exists
\(\alpha\in(0,1)\) such that
\begin{equation}
\label{eq:app-DF-star-alpha-new}
DF(\Lambda^\star)[\Lambda^\star]
\preceq
\alpha\Lambda^\star .
\end{equation}

For arbitrary \(H\), let \(t=\|H\|_\star\).  By
\eqref{eq:app-star-order-bound-new},
\(
-t\Lambda^\star\preceq H\preceq t\Lambda^\star.
\) Since \(DF(\Lambda^\star)\) is positive and linear, it is order-preserving.
Therefore
\(
DF(\Lambda^\star)[-t\Lambda^\star]
\preceq
DF(\Lambda^\star)[H]
\preceq
DF(\Lambda^\star)[t\Lambda^\star].
\)
Linearity gives
\[
-tDF(\Lambda^\star)[\Lambda^\star]
\preceq
DF(\Lambda^\star)[H]
\preceq
tDF(\Lambda^\star)[\Lambda^\star].
\]
By \eqref{eq:app-DF-star-alpha-new},
\(
DF(\Lambda^\star)[\Lambda^\star]\preceq \alpha\Lambda^\star.
\) Hence
\[
-\alpha t\Lambda^\star
\preceq
DF(\Lambda^\star)[H]
\preceq
\alpha t\Lambda^\star .
\]
Substituting \(t=\|H\|_\star\) yields
\(
-\alpha\|H\|_\star\Lambda^\star
\preceq
DF(\Lambda^\star)[H]
\preceq
\alpha\|H\|_\star\Lambda^\star .
\)
By the definition of \(\|\cdot\|_\star\), this implies
\[
\|DF(\Lambda^\star)[H]\|_\star
\le
\alpha\|H\|_\star .
\]
Thus 
$\|DF(\Lambda^\star)\|_{\star\to\star}
:=
\sup_{H\neq0}
\frac{\|DF(\Lambda^\star)[H]\|_\star}{\|H\|_\star}$$ \le\alpha<1\).

Since \(F\) is \(C^1\) near \(\Lambda^\star\), choose
\(q\in(\alpha,1)\) and then \(\varepsilon_0>0\) so that the closed
\(\|\cdot\|_\star\)-ball of radius \(\varepsilon_0\) around \(\Lambda^\star\) lies
inside \(\mathbb S_{++}\) and
\(\|DF(\Lambda)\|_{\star\to\star}\le q\) throughout that ball.  For any
\(\Lambda\) in the ball, the segment between \(\Lambda^\star\) and \(\Lambda\)
remains in the ball, and the finite-dimensional mean-value formula gives
\[
F(\Lambda)-F(\Lambda^\star)
=
\int_0^1
DF\bigl(\Lambda^\star+s(\Lambda-\Lambda^\star)\bigr)
[\Lambda-\Lambda^\star] \; ds .
\]
Taking the \(\|\cdot\|_\star\)-norm proves the claimed local contraction.
\end{proof}

\begin{theorem}[Global geometric precision convergence]
\label{cor:app-global-precision-exp}
For every admissible precision initialization, there exist constants
\(C_\Lambda<\infty\) and \(\theta_\Lambda\in(0,1)\) such that
\[
\|\Lambda^{(\ell)}-\Lambda^\star\|_\star
\le C_\Lambda\theta_\Lambda^\ell,
\qquad
\ell\ge0 .
\]
\end{theorem}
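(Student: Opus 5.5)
The plan is to glue the imported global convergence (Theorem~\ref{thm:app-precision-imported}) to the local contraction of Proposition~\ref{prop:app-local-precision-contraction}. The imported result covers the transient phase, and the local contraction supplies the geometric tail. By Proposition~\ref{prop:app-hierarchy-precision}, the H-GBP precision trajectory is exactly the standard recursion $\Lambda^{(\ell+1)}=F(\Lambda^{(\ell)})$, so it suffices to argue for that recursion.

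\textbf{Step 1 (finite entrance time).} Fix an admissible initialization $\Lambda^{(0)}\succeq0$. By Theorem~\ref{thm:app-precision-imported}, $\Lambda^{(\ell)}\to\Lambda^\star$. Take the ball $\mathcal B_0:=\{\Lambda:\|\Lambda-\Lambda^\star\|_\star<\varepsilon_0\}$ from Proposition~\ref{prop:app-local-precision-contraction}. Some finite index $L$ then satisfies $\Lambda^{(L)}\in\mathcal B_0$; let $L$ be the first such index. It depends only on the initialization.

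\textbf{Step 2 (invariance and tail contraction).} Proposition~\ref{prop:app-local-precision-contraction} gives $\|F(\Lambda)-\Lambda^\star\|_\star\le q\|\Lambda-\Lambda^\star\|_\star<\varepsilon_0$ for $\Lambda\in\mathcal B_0$. Hence $\mathcal B_0$ is forward invariant, and induction yields
\[
\|\Lambda^{(\ell)}-\Lambda^\star\|_\star\le q^{\ell-L}\varepsilon_0 \quad \text{for } \ell\ge L.
\]

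\textbf{Step 3 (constants).} Set $\theta_\Lambda:=q$ and
\[
C_\Lambda:=\max\Bigl\{\max_{0\le\ell< L}\|\Lambda^{(\ell)}-\Lambda^\star\|_\star\, q^{-\ell},\; \varepsilon_0 q^{-L}\Bigr\}.
\]
This is finite because it is a maximum over finitely many terms. For $\ell<L$, the bound holds trivially by the definition of $C_\Lambda$. For $\ell\ge L$, it follows from Step 2, since $q^{\ell-L}\varepsilon_0=\varepsilon_0q^{-L}q^\ell$. Finally, all norms on the finite-dimensional blockwise space are equivalent, which gives the statement for any fixed matrix norm (Theorem~\ref{global converge exp}) after rescaling $C_\Lambda$.

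\textbf{Main obstacle.} The only delicate point is that the transient length $L$ is not uniform over initializations. That is why the constants are allowed to depend on the initialization, and this dependence is what makes the argument legitimate. Beyond that, we only need $\Lambda^{(\ell)}$ to remain in the domain where $\|\cdot\|_\star$ and $F$ are defined. This holds because $F$ maps $S_+$ into $S_{++}$ by the measurement form~\eqref{eq:app-measurement-form}, so every iterate is well defined and the finite maximum in $C_\Lambda$ makes sense. If one wanted the sharper claim that the approach before entrance is itself exponential, one could instead use the last clause of Theorem~\ref{thm:app-precision-imported}. That is unnecessary here, because only finitely many pre-entrance steps occur.
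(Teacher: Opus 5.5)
Your proof is correct and follows the same route as the paper. You get a finite entrance into the $\varepsilon_0$-ball from Theorem~\ref{thm:app-precision-imported}, a geometric tail from Proposition~\ref{prop:app-local-precision-contraction}, and an enlarged prefactor that absorbs the finite prefix. Your version is slightly more careful than the paper's three-line argument: you make the forward invariance of the ball explicit, and you observe that plain convergence, rather than the exponential-entrance clause, already suffices for a finite entrance time.
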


\begin{proof}
The imported exponential-entrance result \ref{thm:app-precision-imported} implies a finite entrance into the
\(\varepsilon_0\)-ball.
After that entrance time, Proposition~\ref{prop:app-local-precision-contraction} gives
a geometric tail.  Enlarging the prefactor to dominate the finite prefix yields the displayed all-time bound.
\end{proof}

\clearpage
\subsection{MAP-centered message reference}
\label{app:canonical-reference}

Let \(x_\star=A^{-1}b\).  For any precision state \(\Lambda\), define
the canonical eta reference by
\begin{equation}
\label{eq:app-canonical-eta}
\eta_{f\to i}^\star(\Lambda)
:=
\eta_i^f
-
(\Lambda_f x_{N(f)}^\star)_i
+
\Lambda_{f\to i}x_i^\star.
\end{equation}
Here \(x_{N(f)}^\star\) is the stacked vector of MAP blocks adjacent to factor \(f\), $\Lambda_f$ is the precision matrix of factor \(f\), and \((\Lambda_f x_{N(f)}^\star)_i\) denotes the block corresponding to
variable \(i\).  For unary messages, \(\eta_{u\to i}^\star=\eta_i^u\). For a binary factor \(f\) with endpoints \(i\) and \(j\),
\[
\eta_{f\to i}^\star(\Lambda)
=
\eta_i^f
-
\Lambda_{ii}^f x_i^\star
-
\Lambda_{ij}^f x_j^\star
+
\Lambda_{f\to i}x_i^\star .
\]

Let
\begin{equation}
\label{hat_lambda_i}
\widehat\Lambda_i(\Lambda):=\sum_{f\in N(i)}\Lambda_{f\to i},
\qquad
\widehat\eta_i^\star(\Lambda):=\sum_{f\in N(i)}\eta_{f\to i}^\star(\Lambda).
\end{equation}

\begin{lemma}[Canonical reference has MAP readout and propagates with precision]
\label{lem:app-canonical-reference}
For every precision state \(\Lambda\),
\[
\widehat\eta_i^\star(\Lambda)
=
\widehat\Lambda_i(\Lambda)x_i^\star,
\qquad \forall i.
\]
Moreover, if one synchronous precision sweep maps \(\Lambda\) to \(F(\Lambda)\),
then the corresponding synchronous eta Schur update maps
\(\eta^\star(\Lambda)\) to \(\eta^\star(F(\Lambda))\).  In particular,
\(\eta^\star(\Lambda^\star)\) is a fixed point of the frozen eta-message
equations at \(\Lambda^\star\), and its belief mean is \(x_\star\).
\end{lemma}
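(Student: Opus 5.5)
The plan is a direct computation from the definition \eqref{eq:app-canonical-eta} and the factor-level identity that $A x_\star=b$, followed by substituting $\eta^\star(\Lambda)$ into the Schur update \eqref{eq:schur-message}.

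\emph{Step 1 (MAP readout).} Sum \eqref{eq:app-canonical-eta} over $f\in N(i)$:
\[
\widehat\eta_i^\star(\Lambda)=\sum_{f\in N(i)}\eta_i^f-\sum_{f\in N(i)}(\Lambda_f x^\star_{N(f)})_i+\Big(\sum_{f\in N(i)}\Lambda_{f\to i}\Big)x_i^\star .
\]
By assembly of the global system, $\sum_{f\ni i}\eta_i^f=b_i$ and $\sum_{f\ni i}(\Lambda_f x^\star_{N(f)})_i=(Ax_\star)_i$. Since $Ax_\star=b$, the first two sums cancel. The last term is $\widehat\Lambda_i(\Lambda)x_i^\star$ by \eqref{hat_lambda_i}. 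For unary factors this is consistent: the formula gives $\eta_i^u-\Lambda_u x_i^\star+\Lambda_{u\to i}x_i^\star=\eta_i^u$, because the unary message precision equals $\Lambda_u$.

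\emph{Step 2 (propagation).} Fix a binary $f=\{i,j\}$. Step 1 gives the cavity quantities
\[
\Lambda_{j\setminus f}=\widehat\Lambda_j-\Lambda_{f\to j},\qquad
\eta_{j\setminus f}=\widehat\eta^\star_j-\eta^\star_{f\to j}=\Lambda_{j\setminus f}x_j^\star-\eta_j^f+\Lambda^f_{ji}x_i^\star+\Lambda^f_{jj}x_j^\star,
\]
where the $\Lambda_{f\to j}x_j^\star$ terms cancel. Hence
\[
\eta_j^f+\eta_{j\setminus f}=(\Lambda_{jj}^f+\Lambda_{j\setminus f})x_j^\star+\Lambda^f_{ji}x_i^\star .
\]
Write $D=\Lambda_{jj}^f+\Lambda_{j\setminus f}$. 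Plugging into \eqref{eq:schur-message},
\[
\eta^+_{f\to i}=\eta_i^f-\Lambda^f_{ij}x_j^\star-\Lambda_{ij}^fD^{-1}\Lambda^f_{ji}x_i^\star .
\]
This should be compared with $\eta^\star_{f\to i}(F(\Lambda))=\eta_i^f-\Lambda^f_{ii}x_i^\star-\Lambda^f_{ij}x_j^\star+F_a(\Lambda)x_i^\star$. Since $F_a(\Lambda)=\Lambda^f_{ii}-\Lambda^f_{ij}D^{-1}\Lambda^f_{ji}$ by \eqref{eq:app-precision-map}, the two expressions agree. Here I use that the cavity precision in \eqref{eq:schur-message} is exactly $\mathcal C_a(\Lambda)$, so that $D=\Lambda^f_{jj}+\mathcal C_a(\Lambda)$. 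For higher-order factors the same computation goes through with $j$ replaced by the block of non-target variables.

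\emph{Step 3 (fixed point).} At $\Lambda=\Lambda^\star=F(\Lambda^\star)$ (Theorem~\ref{thm:app-precision-imported}), Step 2 says the frozen eta update maps $\eta^\star(\Lambda^\star)$ to itself. Step 1 gives the belief mean $\widehat\Lambda_i^{\star-1}\widehat\eta_i^\star=x_i^\star$, where $\widehat\Lambda_i^\star\succ0$ because $\Lambda_i^{\rm self}\succ0$.

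\emph{Main obstacle.} The only delicate point is the bookkeeping in Step 2. It must be checked that the message subtracted in the cavity is the canonical one, and that invertibility of $D$ holds for an arbitrary $\Lambda$ in the admissible (PSD) set. Invertibility follows because $\mathcal C_a(\Lambda)\succeq\Lambda_j^{\rm self}\succ0$.
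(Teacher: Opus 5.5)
Your proposal is correct and follows the paper's proof essentially step for step. Both arguments prove the readout identity by summing the canonical messages and using $b=Ax_\star$ factorwise. Both then compute the cavity term $\eta_j^f+\eta_{j\setminus f}=D\,x_j^\star+\Lambda^f_{ji}x_i^\star$, substitute it into the Schur update, match the result against $F_a(\Lambda)$, and finally specialize to $\Lambda^\star$. Your explicit check that $D$ is invertible on the PSD precision states (via $\mathcal C_a(\Lambda)\succeq\Lambda_j^{\rm self}\succ0$) is a small point the paper leaves implicit.
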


\begin{proof}
We first prove the readout identity.  
By the assembly of the global linear system,
\[
b_i=\sum_{f\in N(i)}\eta_i^f,
\qquad
(Ax_\star)_i
=
\sum_{f\in N(i)}(\Lambda_f x_{N(f)}^\star)_i,
\qquad
b_i - (A_x^{\star})_i=0.
\]
By definition~\ref{hat_lambda_i},
\[
\begin{aligned}
\widehat\eta_i^\star(\Lambda)
-
\widehat\Lambda_i(\Lambda)x_i^\star
=
\sum_{f\in N(i)}
\left[
\eta_{f\to i}^\star(\Lambda)
-
\Lambda_{f\to i}x_i^\star
\right]  
=
\sum_{f\in N(i)}
\left[
\eta_i^f
-
(\Lambda_f x_{N(f)}^\star)_i
\right].
\end{aligned}
\]

Hence
\(
\widehat\eta_i^\star(\Lambda)
-
\widehat\Lambda_i(\Lambda)x_i^\star
=
b_i-(Ax_\star)_i
=
0.
\)
Therefore
\(
\widehat\eta_i^\star(\Lambda)
=
\widehat\Lambda_i(\Lambda)x_i^\star.
\)

We now prove the propagation identity.  Consider a binary message
\(a=(f\to i)\), and let \(j\) be the other endpoint of \(f\).  Define
\[
D_a(\Lambda)
:=
\Lambda_{jj}^f
+
\sum_{h\in N(j)\setminus\{f\}}\Lambda_{h\to j}.
\]
The corresponding precision update is
\begin{equation}
\label{Fa_lambda}
F_a(\Lambda)
=
\Lambda_{ii}^f
-
\Lambda_{ij}^fD_a(\Lambda)^{-1}\Lambda_{ji}^f .
\end{equation}

The eta Schur update uses the cavity information at \(j\),
\[
r_{a,j}^\star(\Lambda)
:=
\eta_j^f+
\sum_{h\in N(j)\setminus\{f\}}\eta_{h\to j}^\star(\Lambda).
\]
We claim that
\[
r_{a,j}^\star(\Lambda)
=
D_a(\Lambda)x_j^\star+\Lambda_{ji}^f x_i^\star .
\]
Indeed, by the readout identity just proved,
\[
\sum_{h\in N(j)}\eta_{h\to j}^\star(\Lambda)
=
\left(\sum_{h\in N(j)}\Lambda_{h\to j}\right)x_j^\star .
\]
Therefore
\[
\begin{aligned}
r_{a,j}^\star(\Lambda)
&=
\eta_j^f
+
\widehat\eta_j^\star(\Lambda)
-
\eta_{f\to j}^\star(\Lambda) \\
&=
\eta_j^f
+
\widehat\Lambda_j(\Lambda)x_j^\star
-
\left[
\eta_j^f
-
\Lambda_{jj}^f x_j^\star
-
\Lambda_{ji}^f x_i^\star
+
\Lambda_{f\to j}x_j^\star
\right] \\
&=
\left(\widehat\Lambda_j(\Lambda)-\Lambda_{f\to j}+\Lambda_{jj}^f\right)x_j^\star
+
\Lambda_{ji}^f x_i^\star \\
&=
D_a(\Lambda)x_j^\star+\Lambda_{ji}^f x_i^\star .
\end{aligned}
\]

Substituting this into the eta Schur update gives
\[
\begin{aligned}
\eta_{f\to i}^{+}
&=
\eta_i^f
-
\Lambda_{ij}^fD_a(\Lambda)^{-1}r_{a,j}^\star(\Lambda) \\
&=
\eta_i^f
-
\Lambda_{ij}^fD_a(\Lambda)^{-1}
\left[
D_a(\Lambda)x_j^\star+\Lambda_{ji}^f x_i^\star
\right] \\
&=
\eta_i^f
-
\Lambda_{ij}^f x_j^\star
-
\Lambda_{ij}^fD_a(\Lambda)^{-1}\Lambda_{ji}^f x_i^\star .
\end{aligned}
\]
Using~\ref{Fa_lambda}, we obtain
\[
\begin{aligned}
\eta_{f\to i}^{+}
&=
\eta_i^f
-
\Lambda_{ij}^f x_j^\star
-
\bigl(\Lambda_{ii}^f-F_a(\Lambda)\bigr)x_i^\star \\
&=
\eta_i^f
-
\Lambda_{ii}^f x_i^\star
-
\Lambda_{ij}^f x_j^\star
+
F_a(\Lambda)x_i^\star .
\end{aligned}
\]
This is exactly the canonical message
\[
\eta_{f\to i}^\star(F(\Lambda)).
\]
Unary eta messages are unchanged, and their canonical references are also
unchanged.  Hence the full synchronous eta update maps
\[
\eta^\star(\Lambda)
\quad\text{to}\quad
\eta^\star(F(\Lambda)).
\]

Finally, setting \(\Lambda=\Lambda^\star\) and using
\(F(\Lambda^\star)=\Lambda^\star\), we get
\(
\eta^\star(\Lambda^\star)
\mapsto
\eta^\star(F(\Lambda^\star))
=
\eta^\star(\Lambda^\star).
\) 
Thus \(\eta^\star(\Lambda^\star)\) is a fixed point of the frozen eta-message equations. 

The readout identity at \(\Lambda^\star\) gives
\(
\widehat\Lambda_i(\Lambda^\star)^{-1}
\widehat\eta_i^\star(\Lambda^\star)
=
x_i^\star
\)
for every variable \(i\), so the belief mean is \(x_\star\).
\end{proof}

From now on, eta-message errors are always defined relative to the current
canonical reference:
\[
    m:=\eta-\eta^\star(\Lambda),
    \qquad
    v:=x-x_\star .
\]
At the limiting precision, this reduces to the usual frozen error
\(m=\eta-\eta^\star(\Lambda^\star)\).

\clearpage
\subsection{Frozen linear H-GBP cycle}
\label{app:linear-hgbp-clear}

Let \(P\) be the grouped prolongation matrix, and define
\begin{equation}
\label{eq:app-C-def}
C:=I-P(P^\top AP)^{-1}P^\top A .
\end{equation}
The coarse residual correction
\(
(P^\top AP)\delta z=P^\top(b-Ax),~x^+=x+P\delta z
\)
therefore maps the mean error by
\begin{equation}
\label{eq:app-galerkin-error}
    v^+=Cv,
    \qquad
    P^\top A v^+=0 .
\end{equation}
This is the Galerkin \(A\)-orthogonal projection onto
\(\operatorname{range}(P)^{\perp_A}\)~\citep{briggs2000multigrid}.

Now, let's fix the precision at \(\Lambda^\star\) -- analyze the system when all variances converge. 

Recall in the main text, we have the following matrix operators: $(J, \mathcal U)$ for repair sweep, $Q$ for \(k-1\) additional GBP sweeps,  and $\mathcal R$ for the readout.

Together with the Galerkin correction $C$ gives the limiting augmented cycle

\begin{equation}
\label{eq:app-Taug}
\begin{bmatrix}v^+\\m^+\end{bmatrix}
=
T_{{\rm aug},k}
\begin{bmatrix}v\\m\end{bmatrix},
\qquad
T_{{\rm aug},k}
=
\begin{bmatrix}
C\mathcal RQ^{k-1}J & C\mathcal RQ^{k-1}\mathcal U\\[1mm]
Q^{k-1}J & Q^{k-1}\mathcal U
\end{bmatrix}.
\end{equation}

At a post-injection cycle boundary, messages are preserved and the mean has just
been Galerkin-corrected.  Hence boundary states satisfy
\[
    v=C\mathcal Rm .
\]
Define
\[
\mathcal Y_{\rm cyc}:=\{(C\mathcal Rm,m):m\in\mathcal M\},
\qquad
\iota_{\rm cyc}(m):=(C\mathcal Rm,m),
\qquad
\pi_{\rm cyc}(v,m):=m .
\]

\begin{proposition}[Exact cycle-boundary reduction]
\label{lem:app-cycle-boundary}
\label{prop:app-exact-reduction}
The subspace \(\mathcal Y_{\rm cyc}\) is invariant under
\(T_{{\rm aug},k}\), and the reduced cycle operator is
\begin{equation}
\label{eq:app-Mk}
M_k:=\pi_{\rm cyc}T_{{\rm aug},k}\iota_{\rm cyc}
=Q^{k-1}(JC\mathcal R+\mathcal U).
\end{equation}
Moreover,
\[
T_{{\rm aug},k}\iota_{\rm cyc}=\iota_{\rm cyc}M_k .
\]
\end{proposition}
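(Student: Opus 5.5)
The proof is a direct block-matrix computation: apply \(T_{{\rm aug},k}\) from \eqref{eq:app-Taug} to an arbitrary boundary state \(\iota_{\rm cyc}(m)=(C\mathcal Rm,m)\) and read off both components. No spectral or analytic input is needed. The only structural fact used is that \(T_{{\rm aug},k}\) has the factored form
\[
T_{{\rm aug},k}=\begin{bmatrix} C\mathcal R\\ I\end{bmatrix}\,Q^{k-1}\begin{bmatrix} J & \mathcal U\end{bmatrix},
\]
which holds by inspection of \eqref{eq:app-Taug}. Its top block row is \(C\mathcal R\) times its bottom block row. I will state this factorization first; everything else follows from it.

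\textbf{Step 1.} Compute the bottom component. For \((v,m)=(C\mathcal Rm,m)\) we get
\[
m^+=Q^{k-1}J\,C\mathcal Rm+Q^{k-1}\mathcal U m=Q^{k-1}(JC\mathcal R+\mathcal U)m.
\]
Since \(\pi_{\rm cyc}\) extracts exactly this component, \(\pi_{\rm cyc}T_{{\rm aug},k}\iota_{\rm cyc}(m)=Q^{k-1}(JC\mathcal R+\mathcal U)m\). This is \eqref{eq:app-Mk}.

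\textbf{Step 2.} Compute the top component. It is
\[
v^+=C\mathcal RQ^{k-1}J\,C\mathcal Rm+C\mathcal RQ^{k-1}\mathcal Um=C\mathcal R\bigl(Q^{k-1}(JC\mathcal R+\mathcal U)m\bigr)=C\mathcal R\,m^+.
\]
Hence \(T_{{\rm aug},k}\iota_{\rm cyc}(m)=(C\mathcal R m^+,m^+)=\iota_{\rm cyc}(M_km)\), which is the intertwining identity. Invariance of \(\mathcal Y_{\rm cyc}\) follows immediately, because the image lies in \(\{(C\mathcal R m',m')\}\) with \(m'=M_km\in\mathcal M\). Note that \(\iota_{\rm cyc}\) is injective with left inverse \(\pi_{\rm cyc}\), so \(M_k\) is well defined as the restriction of \(T_{{\rm aug},k}\) to \(\mathcal Y_{\rm cyc}\), expressed in message coordinates.

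The only point requiring care is bookkeeping rather than analysis. The operators must have compatible domains: \(\mathcal R:\mathcal M\to\mathbb R^n\), \(C:\mathbb R^n\to\mathbb R^n\), \(J:\mathbb R^n\to\mathcal M\), \(\mathcal U,Q:\mathcal M\to\mathcal M\). The readout in the top row must also be the same frozen \(\mathcal R\) (at \(\Lambda^\star\)) that defines \(\mathcal Y_{\rm cyc}\). I would make both explicit before doing the two-line computation above.
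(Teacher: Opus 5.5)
Your proof is correct and is essentially the paper's argument: the paper likewise substitutes \(v=C\mathcal Rm\) into \(T_{{\rm aug},k}\) and reads off \((C\mathcal R\,Q^{k-1}(JC\mathcal R+\mathcal U)m,\;Q^{k-1}(JC\mathcal R+\mathcal U)m)=\iota_{\rm cyc}(M_km)\). Your factorization \(T_{{\rm aug},k}=\begin{bmatrix} C\mathcal R\\ I\end{bmatrix}Q^{k-1}\begin{bmatrix} J & \mathcal U\end{bmatrix}\) also gives directly the stronger fact, used later in the paper's spectral-equivalence lemma, that \(T_{{\rm aug},k}\) maps the whole augmented space into \(\mathcal Y_{\rm cyc}\).
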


\begin{proof}
Substituting \(v=C\mathcal Rm\) into \eqref{eq:app-Taug} gives
\[
T_{{\rm aug},k}\iota_{\rm cyc}(m)
=
\left(
C\mathcal RQ^{k-1}(JC\mathcal R+\mathcal U)m,
Q^{k-1}(JC\mathcal R+\mathcal U)m
\right)
=
\iota_{\rm cyc}(M_km).
\]
\end{proof}

\begin{lemma}[Spectral equivalence]
\label{lem:app-rho-equality}
\[
    \rho(M_k)=\rho(T_{{\rm aug},k}).
\]
\end{lemma}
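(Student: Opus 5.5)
The plan is to exhibit $T_{{\rm aug},k}$ as a product of two maps whose reverse product is $M_k$, and then use the fact that $XY$ and $YX$ have the same nonzero eigenvalues. Looking at the block form of $T_{{\rm aug},k}$, both block rows are built from the two maps $Jv+\mathcal U m$ and $Q^{k-1}$. Define
\[
Y:\ (v,m)\mapsto Q^{k-1}(Jv+\mathcal U m)\in\mathcal M,
\qquad
X:\ \mathcal M\to(\text{mean space})\times\mathcal M,\quad X(w)=(C\mathcal R w,\,w)=\iota_{\rm cyc}(w).
\]
Then $XY(v,m)=\bigl(C\mathcal R Q^{k-1}(Jv+\mathcal U m),\,Q^{k-1}(Jv+\mathcal Um)\bigr)$, which is exactly $T_{{\rm aug},k}(v,m)$ by the block formula for $T_{{\rm aug},k}$ given above. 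Thus $T_{{\rm aug},k}=\iota_{\rm cyc}Y$.

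The reverse product is
\[
YX(w)=Q^{k-1}(JC\mathcal R w+\mathcal U w)=M_k w,
\]
which agrees with the definition of $M_k$ in Proposition~\ref{prop:app-exact-reduction}.

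Next we invoke the standard linear-algebra fact that for $X\in\mathbb R^{p\times q}$ and $Y\in\mathbb R^{q\times p}$, the matrices $XY$ and $YX$ have the same nonzero eigenvalues with the same algebraic multiplicities. This follows from $\det(\lambda I_p-XY)=\lambda^{p-q}\det(\lambda I_q-YX)$, or equivalently from the similarity of the block matrices $\begin{bmatrix}XY&0\\Y&0\end{bmatrix}$ and $\begin{bmatrix}0&0\\Y&YX\end{bmatrix}$. Every remaining eigenvalue of either product is $0$, so $\rho(T_{{\rm aug},k})=\rho(XY)=\rho(YX)=\rho(M_k)$.

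The argument has only one point that needs checking, namely that the factorization reproduces both block rows of $T_{{\rm aug},k}$ exactly. This holds by inspection of the block formula. The invariance statement in Proposition~\ref{prop:app-exact-reduction} is then a direct consequence: the range of $T_{{\rm aug},k}=\iota_{\rm cyc}Y$ lies in $\mathcal Y_{\rm cyc}$.

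An alternative route avoids the $XY$/$YX$ fact. One would change coordinates via $(v,m)\mapsto(v-C\mathcal Rm,\,m)$ to make $T_{{\rm aug},k}$ block triangular with diagonal blocks $0$ and $M_k$. Both routes give the same conclusion, and I would present the product argument because it is shorter.
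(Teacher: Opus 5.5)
Your proof is correct, and it reaches the result by a somewhat different route from the paper's.

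The paper works with the invariant subspace directly. Since $\iota_{\rm cyc}$ is an isomorphism onto $\mathcal Y_{\rm cyc}$ and $T_{{\rm aug},k}\iota_{\rm cyc}=\iota_{\rm cyc}M_k$, the restriction $T_{{\rm aug},k}|_{\mathcal Y_{\rm cyc}}$ is similar to $M_k$. Because $T_{{\rm aug},k}$ maps the whole augmented space into $\mathcal Y_{\rm cyc}$, every eigenvector with a nonzero eigenvalue lies in $\mathcal Y_{\rm cyc}$. Hence the nonzero spectra coincide.

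You encode the same two structural facts as a factorization: $T_{{\rm aug},k}=\iota_{\rm cyc}Y$ is the range inclusion, and $Y\iota_{\rm cyc}=M_k$ is the intertwining. You then cite the $XY$/$YX$ theorem in place of the paper's inline eigenvector argument.

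Your packaging buys a little more. Through Sylvester's identity, $\det(\lambda I-T_{{\rm aug},k})=\lambda^{n}\det(\lambda I-M_k)$, where $n$ is the dimension of the mean-error space. So algebraic multiplicities also agree, and complex eigenvalues need no separate comment. Your block-triangular change of coordinates gives the same characteristic-polynomial statement explicitly. The invariance of $\mathcal Y_{\rm cyc}$ then falls out as a byproduct.

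The paper's version only establishes equality of the sets of nonzero eigenvalues, which is all that $\rho$ requires. Its advantage is that it reuses Proposition~\ref{prop:app-exact-reduction} as already stated, and it keeps the cycle-boundary interpretation that the later convergence theorem relies on.
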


\begin{proof}
The map \(\iota_{\rm cyc}\) is an isomorphism from \(\mathcal M\) to
\(\mathcal Y_{\rm cyc}\), so \(M_k\) is similar to
\(T_{{\rm aug},k}|_{\mathcal Y_{\rm cyc}}\).  Also,
\(T_{{\rm aug},k}\) maps the whole augmented space into
\(\mathcal Y_{\rm cyc}\); hence every nonzero eigenvector of
\(T_{{\rm aug},k}\) lies in \(\mathcal Y_{\rm cyc}\).  The two operators therefore have the same nonzero eigenvalues.
\end{proof}

\clearpage
\subsection{From the initial precision to the limiting cycle}
\label{app:limiting-phase}

For cycle \(t\), let \(\Lambda_t^{[r]}:=\Lambda^{(tk+r)}\),
\(r=0,\ldots,k\).  The operators
\(J_t,\mathcal U_t,\mathcal R_t\) are evaluated at
\(\Lambda_t^{[0]}\), while
\(Q_{t,r}\) is evaluated at \(\Lambda_t^{[r]}\).  Because the canonical reference
\(\eta^\star(\Lambda)\) propagates with the precision update
(Lemma~\ref{lem:app-canonical-reference}), the eta-error dynamics are exactly
homogeneous at every precision state.

\begin{definition}[Exact cocycle and limiting operator]
\label{def:app-exact-cocycle}
The exact cycle-boundary cocycle is
\begin{equation}
\label{eq:app-exact-cocycle}
\widetilde M_{k,t}
:=
Q_{t,k-1}\cdots Q_{t,1}
\bigl(J_tC\mathcal R_t+\mathcal U_t\bigr).
\end{equation}
The limiting operator is
\[
M_k=Q^{k-1}(JC\mathcal R+\mathcal U),
\]
where \(Q,J,\mathcal U,\mathcal R\) are evaluated at \(\Lambda^\star\).
\end{definition}

\begin{proposition}[Exact cycle-boundary dynamics]
\label{prop:app-exact-cycle}
Let \(m_t\) be the directed eta-message error at the post-injection boundary of
cycle \(t\), defined relative to \(\eta^\star(\Lambda_t^{[0]})\).  Then
\[
    v_t=C\mathcal R_t m_t,
    \qquad
    m_{t+1}=\widetilde M_{k,t}m_t .
\]
\end{proposition}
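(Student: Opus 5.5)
The argument is an induction over one cycle. I will track the eta-message error and the mean error relative to the moving canonical reference $\eta^\star(\Lambda)$ from Lemma~\ref{lem:app-canonical-reference}.

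\textbf{Step 1: the boundary identity $v_t=C\mathcal R_t m_t$.} Readout at any precision state is linear in the error. By the readout identity of Lemma~\ref{lem:app-canonical-reference}, $\widehat\Lambda_i(\Lambda)^{-1}\widehat\eta_i = x_i^\star + \widehat\Lambda_i^{-1}\sum_{f}m_{f\to i}$. Hence the pre-correction mean error is $\mathcal R_t m_t$, where $\mathcal R_t$ is evaluated at $\Lambda_t^{[0]}$. The Galerkin step is exactly $v\mapsto Cv$ by \eqref{eq:app-galerkin-error}, and it does not depend on precisions. Messages are preserved by injection, as in Proposition~\ref{prop:app-hierarchy-precision}. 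Therefore the post-injection state is $(C\mathcal R_t m_t, m_t)$.

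\textbf{Step 2: the repair sweep.} Only the belief information vectors have been overwritten: $\widehat\eta_j=\widehat\Lambda_j x_j^+$. The variable-to-factor message is therefore $\widehat\Lambda_j x_j^+-\eta_{f\to j}$ rather than $\sum_{h\neq f}\eta_{h\to j}$. I subtract the canonical version of the same Schur update. That version reproduces $\eta^\star(F(\Lambda))$ by the propagation part of Lemma~\ref{lem:app-canonical-reference}, because canonical beliefs satisfy $\widehat\eta^\star_j=\widehat\Lambda_j x_j^\star$. The difference of the cavity vectors is then $\widehat\Lambda_j v_j - m_{f\to j}$.

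Multiplying by $-\Lambda^f_{ij}D_a^{-1}$ gives $m^{+}=J_t v+\mathcal U_t m$, with $J_t,\mathcal U_t$ as in \eqref{eq:first-sweep-repair} evaluated at $\Lambda_t^{[0]}$. Two points need care here. First, the sign in $\mathcal U$ must match the stated formula; I will recheck it against the paper's convention. Second, the new error is measured against $\eta^\star(\Lambda_t^{[1]})$, and this is exactly what the propagation identity provides.

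\textbf{Step 3: the remaining sweeps.} For the remaining $k-1$ consistent sweeps, beliefs equal the sum of messages. The same subtraction argument then gives the homogeneous update $m\mapsto Q_{t,r}m$ with $Q$ from \eqref{eq:q-transfer} evaluated at $\Lambda_t^{[r]}$. Again the reference shifts from $\eta^\star(\Lambda_t^{[r]})$ to $\eta^\star(\Lambda_t^{[r+1]})$. Composing the sweeps yields $m_{t+1}=Q_{t,k-1}\cdots Q_{t,1}(J_tC\mathcal R_t+\mathcal U_t)m_t=\widetilde M_{k,t}m_t$. This error is measured relative to $\eta^\star(\Lambda_{t+1}^{[0]})$ at the next boundary. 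Step 1 applied at cycle $t+1$ then closes the induction.

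\textbf{Main obstacle.} The main obstacle is the index bookkeeping in Step 2. Each affine offset must cancel exactly, which relies on the canonical reference propagating with the precision update; this is what makes the dynamics homogeneous rather than affine during the transient precision phase. I must also confirm that the sweep count matches the stated product. Counting the repair sweep as sweep $0$ with precision $\Lambda_t^{[0]}$ gives $k$ sweeps per cycle: the repair sweep plus $Q_{t,1},\dots,Q_{t,k-1}$.
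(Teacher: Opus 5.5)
Your proposal is correct and follows the same route as the paper's proof. Readout plus Galerkin correction gives $v_t=C\mathcal R_t m_t$, the repair sweep contributes $(J_tC\mathcal R_t+\mathcal U_t)m_t$, and the remaining $k-1$ consistent sweeps multiply by $Q_{t,1},\dots,Q_{t,k-1}$, with homogeneity coming from the propagation property of the canonical reference in Lemma~\ref{lem:app-canonical-reference}; you only make explicit the subtraction the paper leaves implicit, and your cavity difference $\widehat\Lambda_j v_j-m_{f\to j}$ multiplied by $-\Lambda^f_{ij}D_a^{-1}$ already reproduces the paper's signs for both $J$ and $\mathcal U$, so the sign check you flag resolves in your favor.
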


\begin{proof}
Before injection, the readout error is \(\mathcal R_t m_t\); the Galerkin step
maps it to \(C\mathcal R_t m_t\), proving the boundary relation.  The first
sweep gives \((J_tC\mathcal R_t+\mathcal U_t)m_t\), and the remaining
\(k-1\) message-consistent sweeps multiply successively by
\(Q_{t,1},\ldots,Q_{t,k-1}\).  This is exactly
\eqref{eq:app-exact-cocycle}.
\end{proof}

\begin{lemma}[Local Lipschitz dependence of frozen operators]
\label{lem:app-local-lipschitz-operators}
There is a compact neighborhood \(\mathcal K_0\subset\mathbb S_{++}\) of
\(\Lambda^\star\) and a constant \(L_{\rm op}<\infty\) such that, for all
\(\Lambda,\widetilde\Lambda\in\mathcal K_0\),
\[
\|J(\Lambda)-J(\widetilde\Lambda)\|
+
\|\mathcal U(\Lambda)-\mathcal U(\widetilde\Lambda)\|
+
\|\mathcal R(\Lambda)-\mathcal R(\widetilde\Lambda)\|
+
\|Q(\Lambda)-Q(\widetilde\Lambda)\|
\le
L_{\rm op}\|\Lambda-\widetilde\Lambda\|_\star .
\]
The same operator families are uniformly bounded on \(\mathcal K_0\).
\end{lemma}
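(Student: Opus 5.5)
The four operator families $J,\mathcal U,\mathcal R,Q$ are explicit rational functions of the precision blocks $\Lambda=\{\Lambda_a\}$. Each one involves only sums of message blocks, the fixed factor blocks $\Lambda^f_{ij}$, and matrix inverses. The plan is to choose $\mathcal K_0$ as a closed $\|\cdot\|_\star$-ball around $\Lambda^\star$ on which every matrix that gets inverted stays uniformly positive definite. On that ball the maps are $C^1$, so they are Lipschitz by the mean-value inequality. Concretely, these are the inverted matrices:
\[
D_a(\Lambda)=\Lambda^f_{jj}+\widehat\Lambda_j(\Lambda)-\Lambda_{f\to j}
\]
for $Q$, $J$, $\mathcal U$ (see \eqref{eq:q-transfer} and \eqref{eq:first-sweep-repair}), and $\widehat\Lambda_i(\Lambda)=\sum_{f\in N(i)}\Lambda_{f\to i}$ for $\mathcal R$ and $J$ (see \eqref{readout}).

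First I fix the neighbourhood. By Theorem~\ref{thm:app-precision-imported}, $\Lambda^\star\succ0$ blockwise. Assumption~\ref{fair_assumption}(i) then gives $\widehat\Lambda_i(\Lambda^\star)\succeq\Lambda_i^{\rm self}\succ0$ and $D_a(\Lambda^\star)\succeq\Lambda^f_{jj}+\Lambda_j^{\rm self}\succ0$. By the order bound \eqref{eq:app-star-order-bound-new}, $\|\Lambda-\Lambda^\star\|_\star\le\varepsilon$ implies $(1-\varepsilon)\Lambda_a^\star\preceq\Lambda_a\preceq(1+\varepsilon)\Lambda_a^\star$. Hence for $\varepsilon\le\tfrac12$ every message block is $\succeq\tfrac12\Lambda^\star_a\succ0$. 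It follows that $\widehat\Lambda_i(\Lambda)\succeq\tfrac12\widehat\Lambda_i(\Lambda^\star)$ and $D_a(\Lambda)\succeq\Lambda^f_{jj}+\tfrac12 S_a(\Lambda^\star)+\tfrac12\Lambda^{\rm self}_j$, using that unary messages are fixed. Both lower bounds are uniform. I take $\mathcal K_0$ to be this closed ball with $\varepsilon_0'=\min(\tfrac12,\varepsilon_0)$. It is compact (finite dimensional, closed, bounded) and, by the above, it lies in $\mathbb S_{++}$. The ball is also convex, which is what the mean-value step needs.

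Next I bound the inverses. On $\mathcal K_0$ there is a constant $c>0$ with $\lambda_{\min}(D_a(\Lambda))\ge c$ and $\lambda_{\min}(\widehat\Lambda_i(\Lambda))\ge c$. So $\|D_a^{-1}\|,\|\widehat\Lambda_i^{-1}\|\le c^{-1}$, and all the maps $\Lambda\mapsto D_a(\Lambda),\widehat\Lambda_i(\Lambda)$ are affine. The resolvent identity $X^{-1}-Y^{-1}=X^{-1}(Y-X)Y^{-1}$ then gives
\[
\|D_a(\Lambda)^{-1}-D_a(\widetilde\Lambda)^{-1}\|\le c^{-2}\|D_a(\Lambda)-D_a(\widetilde\Lambda)\|\le c^{-2}\,\kappa\,\|\Lambda-\widetilde\Lambda\|_\star .
\]
Here $\kappa$ is the equivalence constant between $\|\cdot\|_\star$ and the operator norm on the finitely many blocks, times the vertex degree. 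The same bound holds for $\widehat\Lambda_i^{-1}$.

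Now I assemble the operators. Each block of $Q$ and $\mathcal U$ is the fixed factor block $\Lambda^f_{ij}$ times $D_a^{-1}$, so these blocks are Lipschitz and bounded by $\|\Lambda^f_{ij}\|c^{-1}$. Each block of $\mathcal R$ is $\widehat\Lambda_i^{-1}$, which is Lipschitz and bounded. Each block of $J$ is $-\Lambda^f_{ij}D_a^{-1}\widehat\Lambda_j$, a product of bounded Lipschitz factors. The second factor is bounded on $\mathcal K_0$ by $(1+\varepsilon)\|\widehat\Lambda_j(\Lambda^\star)\|$, and the product rule $\|XY-\widetilde X\widetilde Y\|\le\|X\|\|Y-\widetilde Y\|+\|X-\widetilde X\|\|\widetilde Y\|$ makes the product Lipschitz. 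The operators have finitely many blocks, so block Lipschitz constants add up to an $L_{\rm op}$ in any fixed operator norm. Uniform boundedness comes from the same estimates.

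The main obstacle is the first step. I have to make sure that $\mathcal K_0$ keeps $D_a$ and $\widehat\Lambda_i$ uniformly invertible, and that $\|\cdot\|_\star$ controls every block. Both follow from the Loewner sandwich \eqref{eq:app-star-order-bound-new} together with the positivity of $\Lambda^\star$ and $\Lambda^{\rm self}$. Everything after that is routine bookkeeping.
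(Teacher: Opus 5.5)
Your proposal is correct and follows the same route as the paper. The paper's proof only notes that every block of $J,\mathcal U,\mathcal R,Q$ is a rational expression in the precision blocks whose inverted denominators stay uniformly nonsingular on a compact positive-definite neighborhood of $\Lambda^\star$, so the maps are $C^1$, Lipschitz and bounded there; you supply the details it leaves implicit (a closed $\|\cdot\|_\star$-ball chosen via the blockwise Loewner sandwich, uniform lower bounds on $D_a(\Lambda)$ and $\widehat\Lambda_i(\Lambda)$, and the resolvent and product-rule estimates), and these all check out.
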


\begin{proof}
Every block of these operators is a rational expression in the precision blocks,
formed from sums, products, Schur denominators, and inverses.  On a compact
positive-definite neighborhood of \(\Lambda^\star\), all denominators are
uniformly nonsingular.  

The maps are therefore \(C^1\), Lipschitz, and uniformly
bounded on \(\mathcal K_0\).
\end{proof}

\begin{lemma}[Exponential convergence of the exact cocycle]
\label{lem:app-exact-cocycle-exp}
For every admissible precision initialization and fixed \(k\), there exist
constants \(C_{M,k}<\infty\) and \(\theta_{M,k}\in(0,1)\) such that
\[
\|\widetilde M_{k,t}-M_k\|
\le C_{M,k}\theta_{M,k}^t,
\qquad t\ge0 .
\]
Consequently, with \(E_t:=\widetilde M_{k,t}-M_k\),
\[
    \sum_{t=0}^\infty \|E_t\|<\infty .
\]
\end{lemma}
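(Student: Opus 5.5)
The plan is to combine the global geometric precision bound of Theorem~\ref{cor:app-global-precision-exp} with the local Lipschitz bounds of Lemma~\ref{lem:app-local-lipschitz-operators}, using a telescoping product estimate. There are three steps: handle the transient before the precision enters $\mathcal K_0$, control the factor-wise operator errors in the tail, and then telescope.

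\textbf{Step 1: entrance time and the prefix.} Since $\|\Lambda^{(\ell)}-\Lambda^\star\|_\star\le C_\Lambda\theta_\Lambda^\ell$, there is a finite index $\ell_0$ beyond which every $\Lambda^{(\ell)}$ lies in $\mathcal K_0$. Shrinking $\mathcal K_0$ to a $\|\cdot\|_\star$-ball if needed, let $t_0:=\lceil \ell_0/k\rceil$. For $t\ge t_0$, every precision state used in cycle $t$ lies in $\mathcal K_0$: these are $\Lambda_t^{[0]}$ for $J_t,\mathcal U_t,\mathcal R_t$ and $\Lambda_t^{[r]}$ for $Q_{t,r}$.

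For the finitely many cycles $t<t_0$, I only need $\widetilde M_{k,t}$ to be finite. Each operator is a rational function of positive-definite precision blocks, and by Assumption~\ref{fair_assumption} and the positivity of $F$ all Schur denominators $\Lambda_{jj}^f+\mathcal C_a$ are invertible along the trajectory. So $\max_{t<t_0}\|\widetilde M_{k,t}-M_k\|$ is finite, and it can be absorbed into the constant $C_{M,k}$ at the end.

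\textbf{Step 2: factor-wise errors for $t\ge t_0$.} Apply Lemma~\ref{lem:app-local-lipschitz-operators} to each factor:
\[
\|J_t-J\|+\|\mathcal U_t-\mathcal U\|+\|\mathcal R_t-\mathcal R\|\le L_{\rm op}C_\Lambda\theta_\Lambda^{tk},\qquad \|Q_{t,r}-Q\|\le L_{\rm op}C_\Lambda\theta_\Lambda^{tk+r}.
\]
The matrix $C$ is fixed and does not depend on precision. Uniform boundedness on $\mathcal K_0$ gives a single constant $B$ that bounds every frozen and perturbed factor in norm, including $\|J_tC\mathcal R_t+\mathcal U_t\|$.

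\textbf{Step 3: telescoping.} First write
\[
J_tC\mathcal R_t-JC\mathcal R=(J_t-J)C\mathcal R_t+JC(\mathcal R_t-\mathcal R),
\]
which is bounded by $2B^2\|C\|L_{\rm op}C_\Lambda\theta_\Lambda^{tk}$. Next, expand the product difference
\[
\prod_{r} Q_{t,r}\,X_t-\prod_r Q\,X
\]
into $k$ terms. Each term contains exactly one difference factor and at most $k$ factors bounded by $B$. This gives
\[
\|\widetilde M_{k,t}-M_k\|\le c_k\,\theta_\Lambda^{tk}
\]
with $c_k$ depending only on $k$, $B$, $\|C\|$, $L_{\rm op}$ and $C_\Lambda$.

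Setting $\theta_{M,k}:=\theta_\Lambda^k\in(0,1)$ and enlarging $C_{M,k}$ to cover the prefix $t<t_0$ gives the claimed bound for all $t\ge0$. Summability of $\|E_t\|$ then follows from the geometric series.

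\textbf{Main obstacle.} The delicate point is the prefix in Step 1: the operators must be well-defined and bounded before the precision reaches $\mathcal K_0$. I would handle it with the monotonicity and positivity of $F$ in the measurement form~\eqref{eq:app-measurement-form}. This gives $\Lambda^{(\ell)}\succeq0$ and $\mathcal C_a\succeq\Lambda_j^{\rm self}\succ0$, so every inverse that appears exists. Since only finitely many cycles are involved, no uniformity is needed there.
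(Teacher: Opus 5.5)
Your proposal is correct and follows essentially the same route as the paper's proof. Both combine the geometric precision bound of Theorem~\ref{cor:app-global-precision-exp} with the Lipschitz and uniform-boundedness bounds of Lemma~\ref{lem:app-local-lipschitz-operators} on $\mathcal K_0$ after a finite entrance time, telescope the product against $Q^{k-1}(JC\mathcal R+\mathcal U)$ to get rate $\theta_\Lambda^k$, and absorb the finite prefix into $C_{M,k}$. You are in fact more explicit than the paper on two points: you handle the repair factor $J_tC\mathcal R_t+\mathcal U_t$ separately, and you justify finiteness of the prefix operators via $\Lambda_j^{\rm self}\succ0$. The only slip is cosmetic: the correct constant for the repair-factor difference is $2B\|C\|$, so take $B\ge1$ for your $2B^2\|C\|$ to dominate it.
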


\begin{proof}
By Corollary~\ref{cor:app-global-precision-exp},
\(\|\Lambda^{(\ell)}-\Lambda^\star\|_\star\le
C_\Lambda\theta_\Lambda^\ell\).  After a finite prefix, all
within-cycle precision states lie in \(\mathcal K_0\).  Lemma~\ref{lem:app-local-lipschitz-operators}
then gives
\[
\|J_t-J\|+
\|\mathcal U_t-\mathcal U\|+
\|\mathcal R_t-\mathcal R\|+
\sum_{r=1}^{k-1}\|Q_{t,r}-Q\|
\le
C_k(\theta_\Lambda^k)^t .
\]
A telescoping expansion of the product
\(Q_{t,k-1}\cdots Q_{t,1}\) against \(Q^{k-1}\), together with uniform
boundedness of all factors, yields the displayed bound.  The finite prefix is
absorbed into \(C_{M,k}\), and the geometric bound is summable.
\end{proof}

\begin{proposition}[Stability under a vanishing perturbation]
\label{prop:app-summable-perturbation}
Let \(z_{t+1}=(A+E_t)z_t\) in finite dimension and \(E_t\to0\).  If
\(\rho(A)<1\), then \(z_t\to0\) for every \(z_0\).  Moreover, for every
\(q\in(\rho(A),1)\), there is \(C_q<\infty\) such that
\[
\|z_t\|\le C_q q^t\|z_0\| .
\]
In particular, the conclusion holds if \(\sum_t\|E_t\|<\infty\).
\end{proposition}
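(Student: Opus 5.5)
The plan is the standard adapted-norm argument for perturbed linear recursions. Fix $q\in(\rho(A),1)$ and pick $q_0\in(\rho(A),q)$. Since $\rho(A)<q_0$, there is a vector norm $\|\cdot\|_{q_0}$ on the finite-dimensional space whose induced operator norm satisfies $\|A\|_{q_0}\le q_0$. Two constructions work: the Jordan-form rescaling, or more cleanly
\[
\|z\|_{q_0}:=\sum_{n\ge0} q_0^{-n}\|A^n z\| ,
\]
which converges because Gelfand's formula gives $\|A^n\|\le c\,q_1^n$ for some $q_1\in(\rho(A),q_0)$. With this definition, $\|Az\|_{q_0}\le q_0\|z\|_{q_0}$ follows by shifting the index of the series. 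All norms on the space are equivalent, so $c_1\|z\|\le\|z\|_{q_0}\le c_2\|z\|$ for constants $c_1,c_2>0$.

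Next, since $E_t\to0$, there is a finite time $T$ such that $\|E_t\|_{q_0}\le q-q_0$ for all $t\ge T$. For $t\ge T$ the triangle inequality gives
\[
\|z_{t+1}\|_{q_0}\le(\|A\|_{q_0}+\|E_t\|_{q_0})\|z_t\|_{q_0}\le q\|z_t\|_{q_0},
\]
so $\|z_t\|_{q_0}\le q^{t-T}\|z_T\|_{q_0}$ for every $t\ge T$.

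The finite prefix $t<T$ is handled separately. The sequence $E_t$ is convergent, hence bounded, say $\|A+E_t\|\le K$ for all $t$. This gives $\|z_t\|\le K^t\|z_0\|\le \max(1,K)^T\|z_0\|$ for $t\le T$. Combining the prefix bound with the tail bound and the norm-equivalence constants yields
\[
\|z_t\|\le \frac{c_2}{c_1}\,q^{-T}\max(1,K)^T q^t\|z_0\| .
\]
Setting $C_q:=\frac{c_2}{c_1}\max(1,K)^T q^{-T}$ gives a constant valid for all $t\ge0$, which also covers $t<T$ since $q^{t-T}\ge1$ there. The constant depends on the perturbation sequence $\{E_t\}$ but not on $z_0$. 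Convergence $z_t\to0$ follows immediately from the bound.

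Finally, summability $\sum_t\|E_t\|<\infty$ implies $E_t\to0$, so the last clause of the statement is a special case. The only nontrivial ingredient is constructing the adapted norm with $\|A\|_{q_0}\le q_0$, and I do not expect any real obstacle there. The one point needing care is the order of choices: $q_0$ must lie strictly between $\rho(A)$ and $q$, so that a margin $q-q_0>0$ remains to absorb the perturbation $E_t$.
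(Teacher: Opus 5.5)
Your proof is correct and follows the same route as the paper: choose an adapted induced norm in which $\|A\|$ lies strictly below $q$, absorb $E_t$ into the remaining margin for all $t$ beyond some finite time, and fold the finite prefix into $C_q$. The only difference is that you make explicit what the paper calls ``standard'': the series construction of the norm, the margin $q-q_0$, and the formula for $C_q$.
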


\begin{proof}
Choose an induced norm with \(\|A\|<q\), possible by the standard adapted-norm
construction for finite-dimensional matrices.  Since \(E_t\to0\), the tail
satisfies \(\|A+E_t\|\le q\).  The finite prefix is absorbed into the constant.
\end{proof}

\clearpage
\subsection{Global convergence criterion and MAP target}
\label{app:necessary-sufficient}

\begin{definition}[Admissible initialization]
\label{def:app-admissible-init}
An admissible initialization is a post-injection cycle-boundary state with
\(\Lambda^{(0)}\succeq0\) and finite \(x^{(0)}\), \(\eta^{(0)}\), for which all
GBP Schur complements and H-GBP updates are well defined.
\end{definition}

\begin{theorem}[Global necessary and sufficient criterion for H-GBP]
\label{thm:app-global-criterion}
\label{thm:app-Mk-iff}
Fix the cycle length \(k\).  The following statements are equivalent:
\begin{enumerate}[label=(\alph*)]
\item For every admissible initialization, the full H-GBP trajectory converges to
      the MAP-centered fixed point.
\item For every admissible precision initialization, the exact cocycle
      \(m_{t+1}=\widetilde M_{k,t}m_t\) satisfies \(m_t\to0\) for every
      \(m_0\in\mathcal M\).
\item \(\rho(M_k)<1\).
\item \(\rho(T_{{\rm aug},k})<1\).
\end{enumerate}
When these conditions hold, for every \(q\in(\rho(M_k),1)\) there exists
\(C_q<\infty\) such that
\[
\|m_t\|\le C_q q^t\|m_0\|,
\qquad
\|v_t\|\le C_q q^t\|m_0\|,
\qquad t\ge0,
\]
after enlarging \(C_q\) to absorb the finite prefix.
\end{theorem}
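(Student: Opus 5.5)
The plan is to prove the cycle of implications (c)$\Leftrightarrow$(d), (c)$\Rightarrow$(b)$\Rightarrow$(a), and (a)$\Rightarrow$(c), and then read off the rate bounds from the sufficiency direction.

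The equivalence (c)$\Leftrightarrow$(d) is exactly Lemma~\ref{lem:app-rho-equality}, since $\rho(M_k)=\rho(T_{{\rm aug},k})$.

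For (c)$\Rightarrow$(b), we fix an admissible precision initialization.
\begin{itemize}
\item By Proposition~\ref{prop:app-exact-cycle}, the boundary errors obey $m_{t+1}=\widetilde M_{k,t}m_t=(M_k+E_t)m_t$.
\item By Lemma~\ref{lem:app-exact-cocycle-exp}, $\|E_t\|\le C_{M,k}\theta_{M,k}^t\to0$.
\item Proposition~\ref{prop:app-summable-perturbation} with $A=M_k$ then gives, for every $q\in(\rho(M_k),1)$, the bound $\|m_t\|\le C_q q^t\|m_0\|$.
\end{itemize}
This yields (b) together with the first rate bound.

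For (b)$\Rightarrow$(a), and for the $v_t$ bound:
\begin{itemize}
\item Proposition~\ref{prop:app-exact-cycle} gives $v_t=C\mathcal R_t m_t$.
\item By Lemma~\ref{lem:app-local-lipschitz-operators}, $\mathcal R_t$ is uniformly bounded once precisions enter $\mathcal K_0$, which happens after a finite prefix that is absorbed into the constant. $C$ is fixed. Hence $\|v_t\|\le \|C\|\sup_t\|\mathcal R_t\|\,C_q q^t\|m_0\|$.
\item The precisions converge to $\Lambda^\star$ by Proposition~\ref{prop:app-hierarchy-precision} and Theorem~\ref{cor:app-global-precision-exp}.
\item By Lemma~\ref{lem:app-canonical-reference}, $\eta=\eta^\star(\Lambda_t)+m_t$, and $\eta^\star(\Lambda_t)\to\eta^\star(\Lambda^\star)$ by continuity of the formula~\eqref{eq:app-canonical-eta} in $\Lambda$.
\item It follows that the state converges to the MAP-centered fixed point, whose readout is $x_\star$.
\end{itemize}

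For the necessity direction (a)$\Rightarrow$(c), I argue by contrapositive: suppose $\rho(M_k)\ge1$ and exhibit an admissible initialization that fails to converge. The simplest choice is to initialize the precisions \emph{at} $\Lambda^\star$. This is admissible, since $\Lambda^\star\succ0$ and $F(\Lambda^\star)=\Lambda^\star$. Then $E_t\equiv0$, and the dynamics are exactly $m_{t}=M_k^t m_0$.
\begin{itemize}
\item Pick an eigenvector of $M_k$ with $|\lambda|\ge1$. If $\lambda$ is complex, use its real or imaginary part: $M_k$ is real, so at least one of these yields a real $m_0$ whose orbit does not tend to $0$.
\item Realize this $m_0$ by setting $\eta^{(0)}=\eta^\star(\Lambda^\star)+m_0$, with $x^{(0)}$ given by the boundary relation. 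Since the message space $\mathcal M$ is unconstrained, any $m_0$ can be realized this way.
\item Then $m_t\not\to0$, so $\eta^{(t)}$ does not converge to $\eta^\star(\Lambda^\star)$, contradicting (a) and hence also (b).
\end{itemize}

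The main obstacle I expect is the admissibility and realizability bookkeeping in the necessity direction: ensuring that the precision initialization at $\Lambda^\star$ is permitted by Definition~\ref{def:app-admissible-init}, and that every $m_0$ corresponds to a genuine H-GBP state. It is also worth noting that (a) quantifies over all precision initializations, so this single frozen counterexample suffices, and no analysis of the transient phase is required for necessity.
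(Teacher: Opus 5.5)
Your proposal is correct and follows essentially the same route as the paper. Sufficiency goes through Proposition~\ref{prop:app-exact-cycle}, Lemma~\ref{lem:app-exact-cocycle-exp} and Proposition~\ref{prop:app-summable-perturbation}, with the $v_t$ bound coming from uniform boundedness of $C\mathcal R_t$. Necessity comes from freezing the precision at $\Lambda^\star$, so that $m_{t+1}=M_km_t$. Lemma~\ref{lem:app-rho-equality} gives (c)$\Leftrightarrow$(d).

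You spell out the eigenvector real/imaginary-part argument and the realizability of $m_0$, which the paper compresses into one sentence. The paper also remarks that within-cycle states are images of the boundary state under finitely many uniformly bounded maps, so boundary convergence gives convergence of the full trajectory required in (a); you should state this too.
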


\begin{proof}
By Proposition~\ref{prop:app-exact-cycle}, the cycle-boundary dynamics are
\[
    m_{t+1}=\widetilde M_{k,t}m_t,
    \qquad
    v_t=C\mathcal R_t m_t .
\]
The readout operators are uniformly bounded along the compact precision
trajectory, so \(m_t\to0\) is equivalent to \((v_t,m_t)\to(0,0)\) at cycle
boundaries.  Within-cycle states are obtained by finitely many uniformly bounded
maps, so cycle-boundary convergence is equivalent to full-trajectory convergence.

If \(\rho(M_k)<1\), Lemma~\ref{lem:app-exact-cocycle-exp} gives
\(\widetilde M_{k,t}=M_k+E_t\) with \(E_t\to0\), in fact summably.  Applying
Proposition~\ref{prop:app-summable-perturbation} gives geometric convergence of
\(m_t\), and uniform boundedness of \(C\mathcal R_t\) gives the same rate for
\(v_t\).

Conversely, suppose the full hierarchy converges for every admissible
initialization.  Choose \(\Lambda^{(0)}=\Lambda^\star\).  Then the precision is
frozen, \(\widetilde M_{k,t}=M_k\), and the message recursion is
\(m_{t+1}=M_km_t\).  Convergence for every \(m_0\) in a finite-dimensional
linear system is equivalent to \(\rho(M_k)<1\).  Finally,
Lemma~\ref{lem:app-rho-equality} gives
\(\rho(M_k)=\rho(T_{{\rm aug},k})\).
\end{proof}

\begin{theorem}[Exact MAP target]
\label{thm:app-exact-map-target}
If the equivalent conditions of Theorem~\ref{thm:app-global-criterion} hold,
then for every admissible initialization,
\[
    x_t\to x_\star=A^{-1}b,
    \qquad
    \eta_t\to\eta^\star(\Lambda^\star),
    \qquad
    \Lambda_t\to\Lambda^\star .
\]
Thus the fine mean produced by H-GBP converges to the exact MAP solution of the
base linear Gaussian factor graph.
\end{theorem}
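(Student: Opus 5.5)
The proof has three pieces: precision convergence, message convergence, and mean convergence. Each one combines an earlier result with the MAP-centered reference of Lemma~\ref{lem:app-canonical-reference}.

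\textbf{Precisions.} By Proposition~\ref{prop:app-hierarchy-precision}, the hierarchy does not touch the precision recursion. Theorem~\ref{cor:app-global-precision-exp} therefore gives $\Lambda_t\to\Lambda^\star$ geometrically, for every admissible initialization.

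\textbf{Messages.} Theorem~\ref{thm:app-global-criterion} gives $\|m_t\|\le C_q q^t\|m_0\|\to0$, where $m_t=\eta_t-\eta^\star(\Lambda_t)$ is measured against the moving canonical reference. I then write
\[
\eta_t-\eta^\star(\Lambda^\star)=m_t+\bigl(\eta^\star(\Lambda_t)-\eta^\star(\Lambda^\star)\bigr).
\]
By definition~\eqref{eq:app-canonical-eta}, $\eta^\star(\Lambda)$ is affine in $\Lambda$: each block equals a fixed term plus $\Lambda_{f\to i}x_i^\star$. Hence the second summand is bounded by a constant times $\|\Lambda_t-\Lambda^\star\|$, which tends to zero. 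This gives $\eta_t\to\eta^\star(\Lambda^\star)$.

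\textbf{Means.} At a cycle boundary, Proposition~\ref{prop:app-exact-cycle} gives $v_t=C\mathcal R_t m_t$. The readout operators $\mathcal R_t$ are uniformly bounded. This holds on $\mathcal K_0$ by Lemma~\ref{lem:app-local-lipschitz-operators}, and on the finite prefix because each $\widehat\Lambda_i(\Lambda^{(\ell)})\succeq\Lambda_i^{\rm self}\succ0$, so the inverses are bounded. Consequently $v_t=x_t-x_\star\to0$.

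\textbf{Within-cycle iterates.} These are obtained from boundary states by at most $k$ uniformly bounded linear maps. The readout of a message error again gives the mean error, again via Lemma~\ref{lem:app-canonical-reference}, since the reference has readout $x_\star$. So the whole trajectory converges, not only the cycle-boundary subsequence.

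\textbf{Main obstacle.} The delicate point is that errors are defined relative to the moving reference $\eta^\star(\Lambda_t)$, not the fixed one. Convergence of $m_t$ alone therefore does not immediately give convergence of $\eta_t$ or $x_t$ to the limiting quantities. I close this gap with two facts already established: the reference has the exact MAP readout $x_\star$ for every $\Lambda$, and it depends continuously (indeed affinely) on $\Lambda$. Together with the uniform bounds on the operators along the trajectory, these reduce the statement to the already proved $m_t\to0$ and $\Lambda_t\to\Lambda^\star$.
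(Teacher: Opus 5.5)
Your proposal is correct and follows essentially the same route as the paper. Precision convergence comes from Proposition~\ref{prop:app-hierarchy-precision}, $m_t\to0$ and $v_t\to0$ come from Theorem~\ref{thm:app-global-criterion}, and $\eta_t=\eta^\star(\Lambda_t)+m_t\to\eta^\star(\Lambda^\star)$ follows from continuity of the canonical reference. Your additions (the affine/Lipschitz bound on $\eta^\star(\Lambda)$, the uniform readout bound via $\widehat\Lambda_i\succeq\Lambda_i^{\rm self}\succ0$, and the within-cycle argument) only spell out steps that the paper cites from Theorem~\ref{thm:app-global-criterion} or leaves implicit.
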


\begin{proof}
Precision convergence follows from Theorem~\ref{thm:app-precision-imported} and
Proposition~\ref{prop:app-hierarchy-precision}.  Theorem~\ref{thm:app-global-criterion}
gives \(m_t\to0\) and \(v_t\to0\).  Since
\(v_t=x_t-x_\star\), this proves \(x_t\to x_\star\).  

Since the canonical
reference \(\eta^\star(\Lambda)\) is continuous in \(\Lambda\) and
\(\Lambda_t\to\Lambda^\star\), we also have
\(\eta_t=\eta^\star(\Lambda_t)+m_t\to\eta^\star(\Lambda^\star)\).
\end{proof}

\clearpage
\subsection{H-GBP on manifolds}
\label{app:hgbp-manifolds}

Using an iterative inner solver for each linearized least-squares problem is
the truncated/inexact Gauss--Newton construction
\citep{gratton2007approximate,dembo1982inexact}.
H-GBP fits this construction: at outer iteration \(t\), it approximately solves
\begin{equation}
\label{eq:app-manifold-normal}
A_t\xi=b_t,\qquad
A_t=J_t^\top J_t,\qquad
b_t=-J_t^\top r_t,
\end{equation}
and retracts \(X_{t+1}=X_t\oplus\widehat\xi_t\).
Here \(F(X)=\tfrac12\lVert r(X)\rVert_2^2\), and derivatives and tangent norms
are expressed in gauge-fixed local coordinates.
We now give a direct H-GBP result for local nonlinear convergence and its
outer rate. Because H-GBP approaches the tangent solution iteratively, the
only added algorithmic condition is the standard relative-residual stopping
rule
\begin{equation}
\label{eq:app-manifold-stopping}
\eta_t:=
\frac{\lVert A_t\widehat\xi_t-b_t\rVert_{A_t^{-1}}}
     {\lVert b_t\rVert_{A_t^{-1}}}
\le\bar\eta,
\qquad
0<\bar\eta<\frac{1-\gamma_\star}{1+\gamma_\star}.
\end{equation}
If \(b_t=0\), the algorithm terminates.
No additional structural assumption is introduced: the outer conditions are
the standard local Gauss--Newton conditions, and the inner conditions are
exactly those of our linear H-GBP theorem.

\begin{lemma}[Standard inexact Gauss--Newton]
\label{lem:app-inexact-gn}
Let \(X_\star\) be a stationary point. Assume that the whitened residual
\(r\) is twice continuously differentiable, the gauge-fixed matrix
\(A_\star=J_\star^\top J_\star\) is positive definite, and the retraction
is smooth. Define
\[
S_\star:=\nabla^2F(X_\star)-A_\star,
\qquad
\gamma_\star:=
\lVert A_\star^{-1/2}S_\star A_\star^{-1/2}\rVert_2<1.
\]
Let \(\xi_t^\star=A_t^{-1}b_t\) be the exact Gauss--Newton step. If
\[
\frac{\lVert\widehat\xi_t-\xi_t^\star\rVert_{A_t}}
     {\lVert\xi_t^\star\rVert_{A_t}}
\le\eta_t\le\bar\eta,
\]
with \(\bar\eta\) as in~\eqref{eq:app-manifold-stopping}, then, for
\(X_0\) sufficiently close to \(X_\star\),
\begin{equation}
\label{eq:app-manifold-limsup-rate}
\limsup_{t\to\infty}
\frac{\lVert X_{t+1}\ominus X_\star\rVert_{A_\star}}
     {\lVert X_t\ominus X_\star\rVert_{A_\star}}
\le
\gamma_\star+(1+\gamma_\star)\limsup_{t\to\infty}\eta_t.
\end{equation}
Hence \(\eta_t\le\bar\eta<(1-\gamma_\star)/(1+\gamma_\star)\) gives
local \(Q\)-linear convergence.
\end{lemma}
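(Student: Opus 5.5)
The statement is the standard local convergence result for inexact Gauss--Newton. I would prove it by linearizing the exact Gauss--Newton map at $X_\star$ and treating the inexactness as a relative perturbation measured in the $A_\star$-norm.

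\textbf{Step 1: the exact Gauss--Newton step.} Work in gauge-fixed local coordinates $e_t:=X_t\ominus X_\star$. Since the retraction is smooth and $r$ is $C^2$, Taylor expansion of the gradient at $X_\star$ gives
\[
b_t=-\nabla F(X_t)=-\nabla^2F(X_\star)e_t+o(\|e_t\|).
\]
Continuity of $J$ gives $A_t=A_\star+o(1)$. Hence
\[
\xi_t^\star=-A_t^{-1}\nabla^2F(X_\star)e_t+o(\|e_t\|)=-(I+A_\star^{-1}S_\star)e_t+o(\|e_t\|),
\]
and therefore
\[
e_t+\xi_t^\star=-A_\star^{-1}S_\star e_t+o(\|e_t\|).
\]
Since $\|A_\star^{-1}S_\star e\|_{A_\star}=\|A_\star^{-1/2}S_\star A_\star^{-1/2}(A_\star^{1/2}e)\|_2$, we get
\[
\|e_t+\xi_t^\star\|_{A_\star}\le\gamma_\star\|e_t\|_{A_\star}+o(\|e_t\|).
\]
The triangle inequality then also gives $\|\xi_t^\star\|_{A_\star}\le(1+\gamma_\star)\|e_t\|_{A_\star}+o(\|e_t\|)$.

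\textbf{Step 2: the inexact step.} Smoothness of the retraction gives $e_{t+1}=e_t+\widehat\xi_t+O(\|e_t\|^2+\|\widehat\xi_t\|^2)$. Write
\[
e_t+\widehat\xi_t=(e_t+\xi_t^\star)+(\widehat\xi_t-\xi_t^\star).
\]
The hypothesis bounds the second term by $\eta_t\|\xi_t^\star\|_{A_t}$. Because $A_t\to A_\star$, the $A_t$- and $A_\star$-norms agree up to factors $1+o(1)$. Combining with Step 1 yields
\[
\|e_{t+1}\|_{A_\star}\le\bigl(\gamma_\star+(1+\gamma_\star)\eta_t+\omega(\|e_t\|)\bigr)\|e_t\|_{A_\star},
\]
with $\omega(s)\to0$ as $s\to0$.

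\textbf{Step 3: local contraction and the rate.} Set $q:=\gamma_\star+(1+\gamma_\star)\bar\eta<1$ and choose $q<q'<1$. Pick a radius $\delta$ small enough that $\omega\le q'-q$ on the $\delta$-ball. Induction then shows that iterates starting in this ball stay in it and contract by $q'$. In particular $e_t\to0$, so $\omega(\|e_t\|)\to0$, and taking $\limsup$ of the ratio gives \eqref{eq:app-manifold-limsup-rate}, hence Q-linear convergence.

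\textbf{Step 4: link to the stopping rule.} The identity $\|A_t\widehat\xi_t-b_t\|_{A_t^{-1}}=\|\widehat\xi_t-\xi_t^\star\|_{A_t}$ together with $\|b_t\|_{A_t^{-1}}=\|\xi_t^\star\|_{A_t}$ shows that the residual rule \eqref{eq:app-manifold-stopping} is exactly the hypothesis of the lemma. For the manifold theorem, $\rho(M_{k,t})<1$ and Theorem~\ref{thm:app-global-criterion} make the inner error decay geometrically, so a finite number of H-GBP cycles achieves the rule.

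\textbf{Main obstacle.} The delicate step is Step 1, specifically controlling the remainders uniformly in the mixed norms. This needs $A_t^{-1}$ to stay bounded near $X_\star$, which follows from $A_\star\succ0$ by continuity. It also needs the gauge-fixed coordinates to make the Taylor expansion of the retraction valid, with the second-order retraction terms absorbed into $\omega$.
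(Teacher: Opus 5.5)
Your proposal is correct and takes essentially the same route as the paper. The paper's proof simply cites the Ortega--Rheinboldt perturbation theory for locally contractive fixed-point iterations, using your two ingredients: the exact Gauss--Newton map contracts asymptotically by $\gamma_\star$ in the $A_\star$-norm, and a relative step error $\eta_t$ adds at most $(1+\gamma_\star)\eta_t$. Your Steps 1--3 make these estimates explicit, while Step 4 belongs to the subsequent residual--error identity and nonlinear H-GBP theorem rather than to this lemma.
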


\begin{proof}
This follows from the standard perturbation theory of locally contractive
fixed-point iterations \citep[Chapter~10]{ortega1970iterative}:
the exact Gauss--Newton map has local asymptotic contraction factor
\(\gamma_\star\), while a relative step perturbation \(\eta_t\)
contributes at most \((1+\gamma_\star)\eta_t\) asymptotically.
\end{proof}

\begin{lemma}[Residual--error identity]
\label{lem:app-manifold-residual-error}
For \(A_t\succ0\) and \(b_t\ne0\),
\begin{equation}
\label{eq:app-manifold-residual-error}
\frac{\lVert A_t\widehat\xi_t-b_t\rVert_{A_t^{-1}}}
     {\lVert b_t\rVert_{A_t^{-1}}}
=
\frac{\lVert\widehat\xi_t-\xi_t^\star\rVert_{A_t}}
     {\lVert\xi_t^\star\rVert_{A_t}}.
\end{equation}
\end{lemma}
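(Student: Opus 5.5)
This is the standard identity between the dual-norm residual and the energy-norm error. The plan is to rewrite the residual in terms of the error and then compare norms directly. No earlier result of the paper is needed beyond the definitions $\xi_t^\star=A_t^{-1}b_t$ and $\|y\|_{M}:=\sqrt{y^\top M y}$ for symmetric positive definite $M$.

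\emph{Numerator.} Since $b_t=A_t\xi_t^\star$, we have $A_t\widehat\xi_t-b_t=A_t(\widehat\xi_t-\xi_t^\star)$. Writing $e:=\widehat\xi_t-\xi_t^\star$ and using the symmetry of $A_t$,
\[
\lVert A_t e\rVert_{A_t^{-1}}^2=e^\top A_t A_t^{-1}A_t e=e^\top A_t e=\lVert e\rVert_{A_t}^2 .
\]

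\emph{Denominator.} In the same way, $\lVert b_t\rVert_{A_t^{-1}}^2=(A_t\xi_t^\star)^\top A_t^{-1}(A_t\xi_t^\star)=\lVert\xi_t^\star\rVert_{A_t}^2$. This quantity is nonzero because $b_t\neq0$ and $A_t\succ0$.

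Taking square roots and dividing the two identities gives \eqref{eq:app-manifold-residual-error}.

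The argument has no real obstacle. The only points to check are that $A_t^{-1}$ is symmetric positive definite, so $\lVert\cdot\rVert_{A_t^{-1}}$ is a genuine norm, and that $A_t\succ0$ holds in the gauge-fixed coordinates of \eqref{eq:app-manifold-normal}. Both follow from the assumption $A_t\succ0$. Combined with Lemma~\ref{lem:app-inexact-gn}, the identity lets the computable residual stopping rule \eqref{eq:app-manifold-stopping} serve as the step-error condition required there.
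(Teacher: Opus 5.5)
Your proposal is correct and follows the same route as the paper: substitute $b_t=A_t\xi_t^\star$ to write the residual as $A_t(\widehat\xi_t-\xi_t^\star)$, then unwind the definitions of the $A_t^{-1}$- and $A_t$-norms. You simply write out the numerator and denominator computations that the paper summarizes as following ``directly from the definitions.''
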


\begin{proof}
Since \(A_t\xi_t^\star=b_t\), we have
\(A_t\widehat\xi_t-b_t=A_t(\widehat\xi_t-\xi_t^\star)\).
The identity follows directly from the definitions of the
\(A_t\)- and \(A_t^{-1}\)-norms.
\end{proof}

\begin{theorem}[Local convergence of nonlinear H-GBP]
\label{thm:app-nonlinear-hgbp}
Assume the conditions of Lemma~\ref{lem:app-inexact-gn} hold.
At every outer iteration before termination, suppose the tangent graph
satisfies our existing linear H-GBP assumptions and
\[
\rho(M_{k,t})<1,
\]
where \(M_{k,t}\) is the H-GBP cycle operator for the current linearization.
Run H-GBP until the stopping rule~\eqref{eq:app-manifold-stopping} holds.
Then \(X_{t+1}=X_t\oplus\widehat\xi_t\) converges locally to \(X_\star\).
\end{theorem}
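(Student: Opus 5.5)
The plan is to reduce the statement to Lemma~\ref{lem:app-inexact-gn}. That lemma needs only that the relative $A_t$-norm step error is at most $\bar\eta<(1-\gamma_\star)/(1+\gamma_\star)$ at every outer iteration. The argument has three steps.

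\textbf{Step 1: the inner solve terminates.} Fix an outer iteration $t$ with $b_t\neq0$. The tangent system $A_t\xi=b_t$ is a linear Gaussian factor graph that, by hypothesis, satisfies Assumptions~\ref{ass:app-psd-init}--\ref{fair_assumption}, and $\rho(M_{k,t})<1$. Theorem~\ref{thm:app-global-criterion}, applied to this frozen linearization, gives geometric convergence: $\|v_s\|\le C_q q^s\|m_0\|$ for any $q\in(\rho(M_{k,t}),1)$. Theorem~\ref{thm:app-exact-map-target} identifies the limit of the inner iterates $\xi^{(s)}$ as $\xi_t^\star=A_t^{-1}b_t$. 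By equivalence of norms in finite dimension, $\|\xi^{(s)}-\xi_t^\star\|_{A_t}\to0$. Since $b_t\neq0$ and $A_t\succ0$, we have $\|\xi_t^\star\|_{A_t}>0$, so the relative error tends to zero. Hence the stopping rule~\eqref{eq:app-manifold-stopping} is met after finitely many H-GBP cycles, and $\widehat\xi_t$ is well defined.

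\textbf{Step 2: translate the stopping rule.} Lemma~\ref{lem:app-manifold-residual-error} converts the residual tolerance into
\[
\frac{\|\widehat\xi_t-\xi_t^\star\|_{A_t}}{\|\xi_t^\star\|_{A_t}}=\eta_t\le\bar\eta .
\]
This is exactly the inexactness hypothesis of Lemma~\ref{lem:app-inexact-gn}.

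\textbf{Step 3: conclude.} Invoke Lemma~\ref{lem:app-inexact-gn}. For $X_0$ sufficiently close to $X_\star$, the rate bound~\eqref{eq:app-manifold-limsup-rate} gives
\[
\limsup_{t\to\infty}\frac{\|X_{t+1}\ominus X_\star\|_{A_\star}}{\|X_t\ominus X_\star\|_{A_\star}}\le\gamma_\star+(1+\gamma_\star)\bar\eta<1,
\]
which is local Q-linear convergence to $X_\star$. If $b_t=0$ at some iteration, the algorithm terminates at a stationary point and there is nothing to prove.

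\textbf{Main obstacle.} The delicate point is Step~1's uniformity. Lemma~\ref{lem:app-inexact-gn} requires the iterates to stay inside the local basin, so that $A_t$ remains positive definite and close to $A_\star$. Within the local argument of that lemma this holds inductively: each step is a bounded perturbation of a contractive Gauss--Newton step, so $X_{t+1}$ stays in the neighbourhood. I would therefore run the induction jointly. Assume $X_t$ lies in the neighbourhood. Then $A_t\succ0$, and the hypothesis $\rho(M_{k,t})<1$ applies to the current linearization. Step~1 then yields an admissible $\widehat\xi_t$, and the contraction estimate keeps $X_{t+1}$ in the neighbourhood.

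The number of inner cycles may vary with $t$, but the theorem only requires finite termination, not a uniform bound. Gauge fixing makes $A_t$ nonsingular and keeps $\|\cdot\|_{A_t}$ a genuine norm; I would make this explicit by working throughout in the local coordinates of Lemma~\ref{lem:app-inexact-gn}.
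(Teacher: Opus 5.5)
Your proposal is correct and follows the paper's own proof. The paper argues the same way: geometric convergence of the inner H-GBP solve (Theorems~\ref{thm:app-global-criterion} and~\ref{thm:app-exact-map-target}) means the stopping rule is met in finitely many cycles, Lemma~\ref{lem:app-manifold-residual-error} turns it into the forcing condition, and Lemma~\ref{lem:app-inexact-gn} concludes. Your extra paragraph runs the neighbourhood induction jointly with the inner solve, which makes explicit a point the paper's short proof leaves implicit.
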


\begin{proof}
Theorems~\ref{thm:app-global-criterion} and~\ref{thm:app-exact-map-target}
imply that \(\rho(M_{k,t})<1\) makes H-GBP converge geometrically to
\(\xi_t^\star=A_t^{-1}b_t\), so the stopping tolerance is reached in
finitely many cycles. Lemma~\ref{lem:app-manifold-residual-error} converts
that tolerance exactly into the forcing condition of
Lemma~\ref{lem:app-inexact-gn}, which proves the claim.
\end{proof}

\begin{corollary}[H-GBP-controlled outer rate]
\label{cor:app-hgbp-outer-rate}
Under the assumptions of Theorem~\ref{thm:app-nonlinear-hgbp}, for any
\(q_t\in(\rho(M_{k,t}),1)\), our linear theorem gives a finite \(K_t\)
such that after \(T\) H-GBP cycles,
\[
\eta_t^{(T)}\le K_tq_t^T.
\]
Hence the stopping rule is reached after finitely many cycles.
For example, it is sufficient to take
\[
T_t\ge
\left\lceil
\frac{\log(K_t/\bar\eta)}{-\log q_t}
\right\rceil
\qquad\text{whenever }K_t>\bar\eta.
\]
The resulting nonlinear outer rate satisfies
\begin{equation}
\label{eq:app-hgbp-outer-rate}
q_{\mathrm{outer}}
\le\gamma_\star+(1+\gamma_\star)\bar\eta<1.
\end{equation}
Thus H-GBP's geometric inner convergence guarantees finite attainment of the
prescribed forcing tolerance, while that tolerance explicitly controls the
nonlinear outer convergence rate.
\end{corollary}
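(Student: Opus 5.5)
The bound in Corollary~\ref{cor:app-hgbp-outer-rate} follows by chaining the linear inner-solver results for a fixed linearization with the nonlinear outer analysis of Lemma~\ref{lem:app-inexact-gn}. Fix an outer index \(t\) and freeze \(A_t,b_t\). Theorems~\ref{thm:app-global-criterion} and~\ref{thm:app-exact-map-target}, applied to the tangent graph, give for every \(q_t\in(\rho(M_{k,t}),1)\) a constant \(C_{q_t}\) with
\(\|v^{(T)}\|\le C_{q_t}q_t^T\|m_0\|\), where \(v^{(T)}=\widehat\xi_t^{(T)}-\xi_t^\star\) is the mean error after \(T\) cycles. The stated bound is given in a plain norm and in terms of \(m_0\), so two conversions are needed:
\begin{itemize}
\item first, to the \(A_t\)-norm, using norm equivalence in finite dimension: \(\|\cdot\|_{A_t}\le\|A_t\|_2^{1/2}\|\cdot\|_2\);
\item second, to a relative error, by dividing by \(\|\xi_t^\star\|_{A_t}\), which is nonzero because \(b_t\neq0\) before termination and \(A_t\succ0\).
\end{itemize}
Lemma~\ref{lem:app-manifold-residual-error} then turns this relative error into the residual quantity \(\eta_t^{(T)}\). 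Collecting the factors gives
\[
K_t:=C_{q_t}\|A_t\|_2^{1/2}\|m_0\|/\|\xi_t^\star\|_{A_t}<\infty,
\]
and hence \(\eta_t^{(T)}\le K_tq_t^T\).

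The cycle count follows directly. If \(K_t\le\bar\eta\), the stopping rule already holds at \(T=0\). Otherwise, taking logarithms of \(K_tq_t^T\le\bar\eta\) and using \(-\log q_t>0\) gives exactly the stated ceiling \(T_t\). This shows the stopping rule~\eqref{eq:app-manifold-stopping} is reached after finitely many cycles at every outer iteration.

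For the outer rate, the stopping rule together with Lemma~\ref{lem:app-manifold-residual-error} gives the forcing condition \(\|\widehat\xi_t-\xi_t^\star\|_{A_t}/\|\xi_t^\star\|_{A_t}\le\eta_t\le\bar\eta\) for all \(t\). This is exactly the hypothesis of Lemma~\ref{lem:app-inexact-gn}. Its conclusion~\eqref{eq:app-manifold-limsup-rate}, together with \(\limsup_t\eta_t\le\bar\eta\), yields \(q_{\mathrm{outer}}\le\gamma_\star+(1+\gamma_\star)\bar\eta\). Strict inequality below \(1\) follows from \(\bar\eta<(1-\gamma_\star)/(1+\gamma_\star)\).

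The main obstacle is the claim that \(K_t\) is finite and well defined. This depends on \(m_0\) and on the precision initialization used in the inner solve at step \(t\), and the global criterion only gives a constant per initialization. I would handle it by fixing the inner initialization at each outer step (for instance, warm-started messages or zero messages), which gives a definite finite \(m_0\) and a definite \(C_{q_t}\). The corollary only asks for a finite \(K_t\) at each \(t\), not a uniform bound over \(t\), so no uniformity is needed and the argument stays per-iteration.
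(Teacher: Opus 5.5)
Your proposal is correct and follows the same route as the paper's proof. The paper invokes the linear criterion (Theorem~\ref{thm:app-global-criterion}) to get geometric inner convergence, hence finite attainment of the stopping rule, and then substitutes $\eta_t\le\bar\eta$ into Lemma~\ref{lem:app-inexact-gn}; you additionally make explicit the $A_t$-norm conversion, the use of Lemma~\ref{lem:app-manifold-residual-error}, the construction of $K_t$, and the trivial case $K_t\le\bar\eta$, all of which the paper's two-line proof leaves implicit.
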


\begin{proof}
The finite-cycle bound follows from the linear H-GBP theorem.
Substituting \(\eta_t\le\bar\eta\) into Lemma~\ref{lem:app-inexact-gn}
gives the outer-rate bound.
\end{proof}

\clearpage
\section{Algorithm}
\label{app:algdetail}

\begin{algorithm}[h]
\caption{Hierarchical Gaussian BP}
\label{alg:fresh-hgbp}
\begin{algorithmic}[1]
\Require Fixed linear Gaussian factor graph $\mathcal G$ with factor parameters $\{(\eta_f,\Lambda_f)\}_{f\in\mathcal F}$; groups $\{G\}$; ranks $\{r_G\}$; sweeps $k$; cycles $T$.
\State Assemble $A$ and $b$; initialize all factor-to-variable messages to zero.
\For{each group $G$}
    \State Build $H_G$ from factors satisfying $N(f)\subseteq G$ using~\eqref{eq:fresh-local-operator}.
    \State Store the $r_G$ lowest eigenvectors of $H_G$ as the local basis $P_G$.
\EndFor
\State Assemble $P=\operatorname{blkassemble}\{P_G\}_G$ and factorize $A_c=P^\top AP$.
\For{$t=0,\ldots,T-1$}
    \For{$s=1,\ldots,k$}
        \State Compute variable-to-factor messages by~\eqref{eq:variable-to-factor}.
        \State Compute factor-to-variable messages $\Lambda_{f\to i}^{+}$,
$\eta_{f\to i}^{+}$ by the Schur update~\eqref{eq:schur-message}.
        \State Update all variable beliefs by~\eqref{eq:variable-update} (and means $x_i$ optionally).
    \EndFor
    \State Stack $x=(x_1^{\top},\ldots,x_N^{\top})^{\top}$ and compute $r_c=P^\top(b-Ax)$.
    \State Solve $A_c\delta z=r_c$ and prolong $\delta x=P\delta z$.
    \State Inject $x_i\leftarrow x_i+\delta x_i$ and $\widehat\eta_i\leftarrow\widehat\Lambda_i x_i$ for all variables. Keep all $(\eta_{f\to i},\Lambda_{f\to i})$ unchanged.
\EndFor
\State \Return $x=(x_1^{\top},\ldots,x_N^{\top})^{\top}$.
\end{algorithmic}
\end{algorithm}

\clearpage
\section{Additional Details}
\label{app:additional}

\subsection{Implementation Details}
\label{app:impl}

All timings are CPU-only, on an Intel Core i9-13900HX system (24 physical cores, 32 logical processors, 64\,GB RAM). A 16-thread run specifies a worker budget, not 16 physical cores. The released configurations specify one policy for SE(2), one for SE(3), and one for BA; none selects parameters by dataset name.

\paragraph{Precision reuse and packed messages.}
Within a fixed Gaussian model, precision updates are independent of the canonical vectors. Once the precision stopping rule is met, we cache the Schur gain
\[
 B_{f\to i}=\Lambda_{ij}^{f}
 (\Lambda_{jj}^{f}+\Lambda_{j\setminus f})^{-1},
 \qquad
 \eta_{f\to i}^{+}=\eta_i^f-B_{f\to i}(\eta_j^f+\eta_{j\setminus f}).
\]
Subsequent eta-only sweeps reuse both this gain and the belief precisions, replacing repeated small factorizations by matrix--vector products. This changes the per-message arithmetic from \(O(d^3)\) to \(O(d^2)\) for block dimension \(d\). Caches are refreshed when the linearized model changes. Numerical freezing uses the task-specific tolerances below, rather than assuming an exact fixed point after a prescribed number of sweeps.

\paragraph{Local basis construction.}
For PGO, a group \(G\) assembles \(H_G\in\mathbb{R}^{d|G|\times d|G|}\) from its internal factors and priors, and retains the \(r_G\) lowest eigenmodes as \(P_G\). Here \(d=3\) for SE(2) and \(d=6\) for SE(3), with ranks \(4\) and \(12\), respectively (capped by the group dimension). SE(2) uses block inverse iteration with Rayleigh--Ritz extraction and an eigen-residual acceptance test, warm-started from the preceding basis when available. SE(3) first exploits exact band structure with a selected-eigenpair solver when the local block is sufficiently narrow-banded, and otherwise attempts the partial solve. Unaccepted partial solves fall back to a full symmetric eigendecomposition. These backends target the same low-mode space; local bases are rebuilt at each relinearization.

\paragraph{Parallel and distributed structure.}
Fine-level factor and belief passes, as well as per-group basis construction, admit parallel execution with local data access. Contiguous message slots, fixed-size block kernels, preallocated workspaces, and persistent worker regions reduce allocation and synchronization overhead; restriction accumulates group contributions and prolongation applies local basis blocks. The one-thread PGO path avoids launching parallel worker teams. Our implementation is shared-memory: reductions and the reduced solve still provide global coupling. In preliminary experiments, GBP on the induced second-level factor graph was slower than a direct solve. A likely reason is the much smaller coarse-system dimension, which makes a factorization reused across corrections cheaper than repeated coarse message sweeps. We therefore use sparse Cholesky for the PGO coarse system and dense \(LDL^\top\) for the small BA coarse system, while retaining loopy GBP on the fine message graph.

\subsection{Linear Experiment Details}
\label{app:exp-details}

The convergence studies use fixed Gaussian systems, with a direct solution as the MAP reference. Figure~\ref{fig:linear_hgbp_vs_gbp} varies the smoothing budget on a synthetic translation graph; Fig.~\ref{fig:all}(a,b) examines coarse spaces and spectral contraction. The companion fill-in timing study in Fig.~\ref{fig:all}(c) is described separately below.

\subsubsection{Two-lap translation graph}
\label{sec:supp_two_lap}

The graph has \(200\) two-dimensional position variables, arranged on two circular laps of radius \(10\), with \(100\) positions per lap and random seed \(11\). Odometry initialization integrates consecutive relative displacements with Gaussian noise \(\mathcal N(0,0.10^2 I_2)\). Within-group chain factors use those measurements; connectors, skips, and cross-lap closures use ground-truth displacements perturbed by \(\mathcal N(0,0.01^2 I_2)\). Twenty groups contain ten consecutive variables each. A soft prior of weight \(10^{-6}\) anchors the first position. Each group retains four modes (two scalar modes in each coordinate), reducing dimension \(400\) to \(80\). Table~\ref{tab:supp_two_lap_edges} specifies the edge counts and information weights; the displayed realization selects \(135\) within-group skips from \(160\) eligible candidates.

\begin{table}[t]
\centering
\caption{Edge construction for Fig.~\ref{fig:linear_hgbp_vs_gbp}.}
\label{tab:supp_two_lap_edges}
\resizebox{\linewidth}{!}{%
\begin{tabular}{lccc}
\toprule
Edge type & Construction & Count & Weight \\
\midrule
Local chain & adjacent poses within each group & 180 & 1.0 \\
Inter-group connector & last pose of a group to first pose of next group & 20 & 1.0 \\
Short-range skip & selected within-group skips, distance \(2\)--\(6\) & 135 & 0.8 \\
Long-range loop closure & cross-lap edges with phase jitter \(\delta\in[-3,3]\) nodes & 16 & 0.2 \\
\bottomrule
\end{tabular}%
}

\end{table}

We vary \(k\in\{20,30,\ldots,100\}\) and \(c\in\{0,1,\ldots,20\}\). One GBP cycle contains \(k\) synchronous sweeps; H-GBP adds a coarse correction to the same fine-sweep budget. Both start at the same odometry estimate. The surface plots
\[
 \log_{10}\!\left(\frac{\|e_c-e_\star\|_A}{\|e_0-e_\star\|_A}\right).
\]
Thus the comparison isolates the benefit of abstraction and recovery at equal numbers of fine sweeps, not equal wall time. The energy norm measures the quadratic-system error; because the anchor is weak, it should not be interpreted as an unweighted trajectory MSE.

\subsubsection{Coarse-space ablation: Fig.~\ref{fig:all}(a)}
\label{sec:supp_abstraction}

This panel freezes the first SE(3) linearization of Sphere, \(He=\eta\), and uses \(e_{\rm MAP}=H^{-1}\eta\) as the reference. Ordered groups contain \(20\) poses. Low-rank abstraction uses the lowest modes of the block assembled from priors and factors fully contained in a group. Message-\(\lambda\) abstraction additionally includes incoming boundary-message precision blocks. The algebraic multigrid comparator (labelled AMD in the figure) instead uses block strength of connection to select coarse pose nodes and interpolate neighboring fine nodes~\citep{xu2017algebraic}.

Starting from \(e=0\), all three methods apply a coarse correction, \(50\) synchronous GBP sweeps, and a second coarse correction. Each correction solves \((P^\top HP)\delta z=P^\top(\eta-He)\) and injects \(P\delta z\). Table~\ref{tab:supp_abstraction_error} reproduces the final relative Euclidean errors shown in the panel. The low-rank space yields a substantially smaller remaining error on this system.

\begin{table}[t]
\centering
\caption{Sphere final relative error \(\|e-e_{\rm MAP}\|_2/\|e_{\rm MAP}\|_2\), as displayed in Fig.~\ref{fig:all}(a).}
\label{tab:supp_abstraction_error}
\begin{tabular}{lc}
\toprule
Coarse abstraction & Relative error \\
\midrule
AMD & 0.928 \\
Message-\(\lambda\) & 0.899 \\
Low-rank & 0.335 \\
\bottomrule
\end{tabular}
\end{table}

\subsubsection{Spectral contraction: Fig.~\ref{fig:all}(b)}
\label{sec:supp_spectral_radius}

The spectral experiment again uses \(200\) translation variables, group size \(10\), rank \(4\), and \(k=50\). Only the short- and long-range connectivity varies (Table~\ref{tab:supp_spectral_radius_variants}). We plot successive-cycle energy-error ratios \(\|e_c-e_\star\|_A/\|e_{c-1}-e_\star\|_A\), for \(c=1,\ldots,20\), against the estimated fixed-precision spectral radius \(\rho(T_{{\rm aug},50})=\rho(M_{50})\). The ratios approach the predicted asymptotic rates as precision messages stabilize. Faster initial contraction is compatible with the theory: a spectral radius describes the asymptotic rate, not a universal one-step norm bound.

\begin{table}[t]
\centering
\caption{Graph variants used in the spectral-radius contraction test.}
\label{tab:supp_spectral_radius_variants}
\resizebox{\linewidth}{!}{%
\begin{tabular}{lccccccc}
\toprule
Case 
& \(p_{\rm skip}\) 
& \(n_{\rm loop}\) 
& \(w_{\rm skip}\) 
& \(w_{\rm loop}\) 
& Short-range 
& Long-loop 
& \(\rho(T_{{\rm aug},50})\) \\
\midrule
Dense local / few loops 
& 1.0 & 8  & 1.2 & 0.05 & 160 & 8  & 0.393 \\
Moderate local / moderate loops 
& 0.8 & 16 & 0.8 & 0.20 & 135 & 16 & 0.428 \\
Sparse local / more loops 
& 0.3 & 24 & 0.4 & 1.00 & 44  & 24 & 0.657 \\
\bottomrule
\end{tabular}%
}

\end{table}

\subsubsection{Fill-in sensitivity: Fig.~\ref{fig:all}(c)}
\label{sec:supp_fill_in}

The fill-in diagnostic concerns the linear system \(A_0\delta=b_0\) obtained by the first linearization at the initial odometry estimate. The Globe1k variants all have matrix dimension \(6144\) and \(98{,}544\) lower-triangular nonzeros, but different elimination fill-in. The accompanying recorded runtimes cover ten nonlinear outer iterations, rather than only this first linear solve. H-GBP uses five cycles, fifty pre-sweeps, group size \(20\), rank \(12\), and one thread throughout. Each method's runtime is normalized by its own smallest-fill case (Table~\ref{tab:supp_fill_in}). Across these variants, Cholesky time grows by \(1.883\times\), versus \(1.075\times\) for H-GBP. This is an empirical sensitivity comparison at fixed problem size and sparsity, not an asymptotic complexity claim.

\begin{table}[t]
\centering
\caption{Runtime scaling under increasing sparse-matrix fill-in.}
\label{tab:supp_fill_in}
\resizebox{\linewidth}{!}{%
\begin{tabular}{lccccc}
\toprule
Mode 
& Fill-in ratio 
& H-GBP time (s) 
& Cholesky time (s) 
& H-GBP growth 
& Cholesky growth \\
\midrule
local\_diag          & 5.066 & 1.459 & 0.475 & 1.000 & 1.000 \\
medium\_chords      & 6.141 & 1.504 & 0.603 & 1.031 & 1.269 \\
separator\_weave    & 6.781 & 1.500 & 0.682 & 1.028 & 1.436 \\
mixed\_chords       & 6.864 & 1.549 & 0.720 & 1.061 & 1.516 \\
hemisphere\_bridges & 8.102 & 1.568 & 0.894 & 1.075 & 1.883 \\
\bottomrule
\end{tabular}%
}

\end{table}

\subsection{SE2/SE3 Pose-Graph Experiment Details}
\label{app:exp-details2}

\paragraph{Baselines and protocol.}
The Cholesky reference uses the official g2o \texttt{LinearSolverCholmod} kernel~\citep{kuemmerle2011g2o} inside our matched analytic Lie-log front end, with block ordering and symbolic reuse. The archived CHOLMOD build is simplicial: its dominant numerical factorization is serial, so additional front-end threads do not accelerate that stage. g2o-PCG uses block-preconditioned conjugate gradients (relative tolerance \(10^{-6}\), iteration cap equal to the scalar system dimension). Both use twenty Gauss--Newton outer iterations.

AMM-PGO~\citep{fan2024mm_pgo} combines distributed majorization--minimization with Nesterov acceleration and adaptive restart; we use the official implementation with a parallel-node adapter and sixteen partitions. Cycle-PGO~\citep{bai2021cycle} optimizes relative poses using a minimum cycle basis and cycle-consistency constraints. Schwarz-PGO uses sixteen subdomains, one-hop overlap, a GDSW coarse space, and PCG tolerance \(10^{-8}\)~\citet{tee2026schwarz}. Cycle-PGO and Schwarz-PGO allow up to twenty nonlinear iterations. AMM-PGO retains its recorded native budgets: \(100\) iterations for FR079 and Globe100k, \(20\) for Grid, and \(300\) otherwise.

All methods receive the same released graph files, including the offline PSD-conditioned Cubicle graph and the generated Globe graphs. H-GBP, Cholesky, PCG, and Schwarz use Huber threshold \(5\). AMM-PGO retains its native scalar-weighted chordal robust objective and initialization; Cycle-PGO retains its native non-robust objective. For comparison, final poses are rescored using the same full-information, half-quadratic Lie-log cost, so displayed costs are not the Huber objectives used internally. Cycle estimates are reconstructed into vertex poses before scoring. Timings are measured wall times with native boundaries retained; in particular, Schwarz totals include parsing. Sixteen threads denotes the available worker budget, not guaranteed parallelism in every stage.

\paragraph{Unified H-GBP settings.}
SE(2) uses \(20\) outer iterations, \(c=3\), \(k=100\), group size \(20\), and \(r=4\); SE(3) uses \(20\), \(c=5\), \(k=50\), group size \(20\), and \(r=12\). Analytical Jacobians produce each tangent system. Loopy GBP pre-smoothing and a reduced sparse direct solve alternate within every outer iteration, followed by retraction. Precision freezing requires three consecutive checks below relative tolerance \(10^{-8}\). Checks initially occur every five sweeps (the SE(2) interval may back off to twenty-five); frozen precisions are rechecked every twenty-five sweeps and reactivated if necessary. These rules are shared across datasets within each space.

\paragraph{Objective trajectories.}
Figure~\ref{fig:nine} retains the archived objective-versus-outer comparison with the direct Gauss--Newton reference as a qualitative convergence illustration, separate from the current timing runs. H-GBP consistently reduces the objective across the nine graphs and avoids the pronounced early spikes seen in the direct reference on Cubicle and Grid.

\begin{figure}[t]
    \centering
    \includegraphics[width=\linewidth]{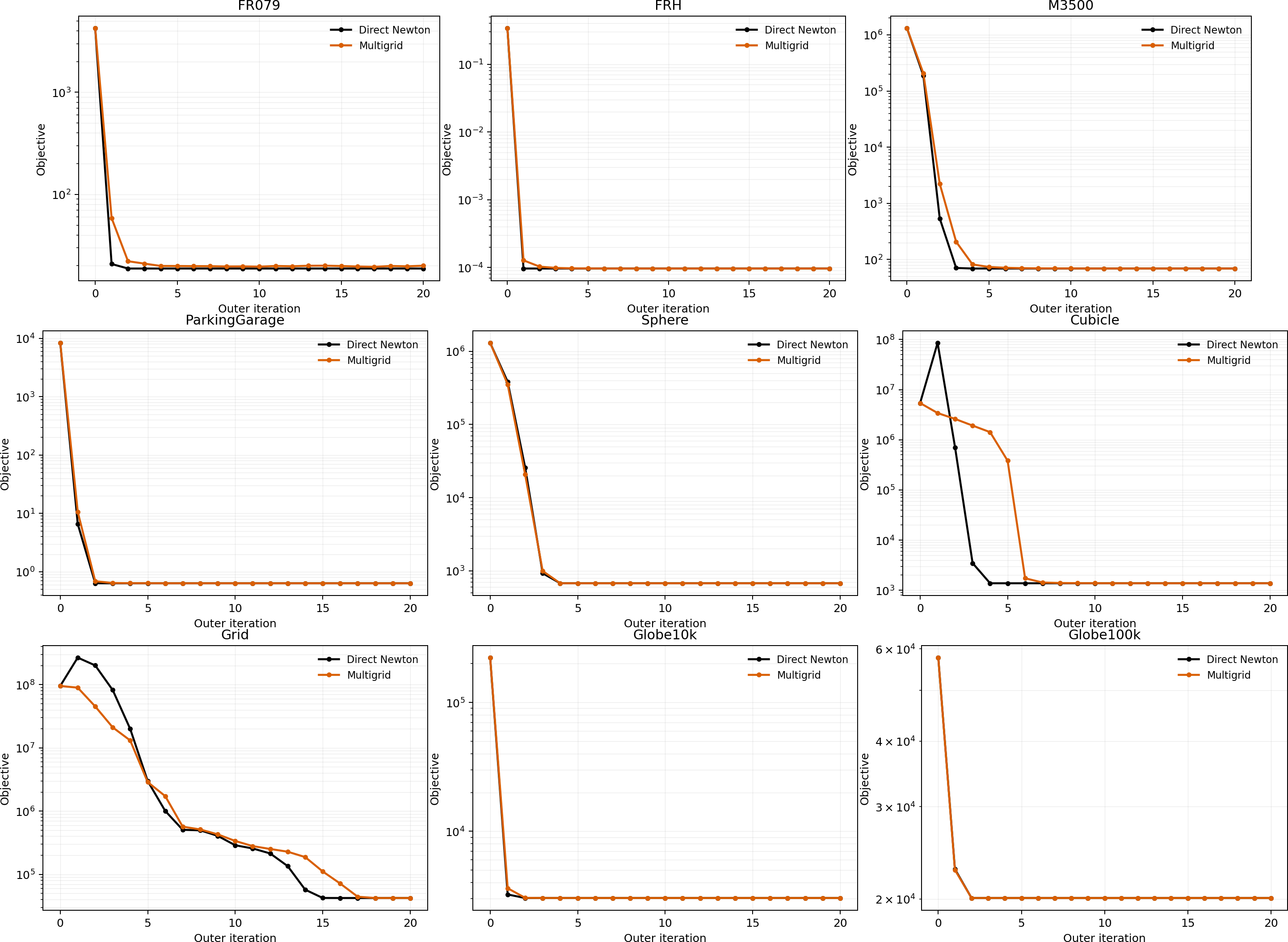}
    \caption{Archived objective trajectories of H-GBP and the direct Gauss--Newton reference over nonlinear outer iterations.}
    \label{fig:nine}
\end{figure}

\subsection{Bundle Adjustment Experiment Details}
\label{app:exp-details-ba}

\paragraph{Baselines and common objective.}
Ceres Sparse uses sparse Cholesky on the reduced camera system (\texttt{SPARSE\_SCHUR}, SuiteSparse/CHOLMOD); Ceres Iterative uses implicit Schur PCG with a Schur--Jacobi preconditioner~\citep{ceres}. RootBA-QR eliminates landmarks by Householder QR in square-root form and iteratively solves the reduced camera problem~\citep{demmel2021rootba}. PowerBA approximates the inverse Schur complement by a truncated power series, here of order ten~\citep{weber2023poba}. We use their released implementations through the common BAL/RootBA adapters, with double precision, identical input files, camera model, and normalization. All receive a budget of twenty nonlinear iterations and retain native stopping. Cost is the sum of squared two-dimensional reprojection residuals. Table timings exclude loading and common input normalization; Ceres reports \texttt{Solve} time, excluding wrapper problem construction, whereas H-GBP, RootBA, and PowerBA include their internal setup.

\begin{figure}[t]
    \centering
    \includegraphics[width=\linewidth,height=0.83\textheight,keepaspectratio]{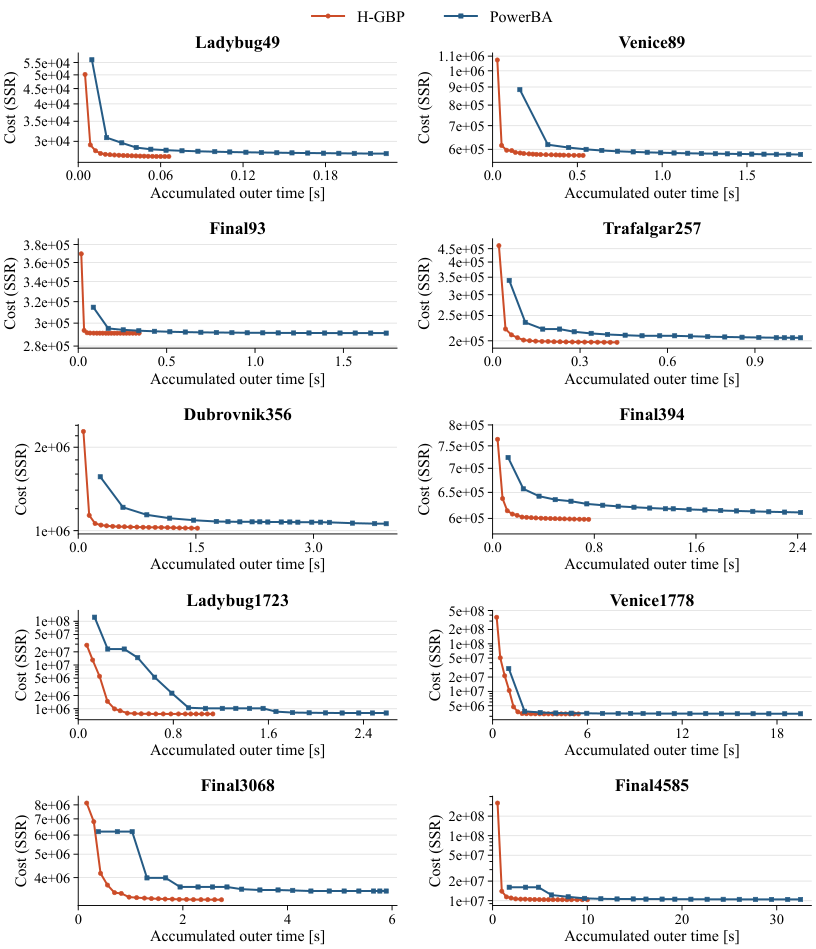}
    \caption{H-GBP versus PowerBA on BA, both with sixteen threads. Markers show outer attempts \(1\)--\(20\); vertical axes show reprojection cost (SSR) on logarithmic scales. Time is accumulated inside outer iterations, excluding one-time setup. Curves use recorded durations without rescaling.}
    \label{fig:ba_cost_time}
\end{figure}

\paragraph{A single policy across scenes.}
Cameras are aggregated into connected groups of target size \(\max(2,\lceil n_c/24\rceil)\), and each group retains nine modes when abstracted. The damping coefficient for the fine graph is one. Precision messages are computed for each camera system until relative change is below \(10^{-6}\). We then reuse the cached eta maps across H-GBP applications. Grouping and sparsity follow the same graph rules for every scene.

\paragraph{Cost versus iteration time.}
Figure~\ref{fig:ba_cost_time} compares the actual per-outer histories of H-GBP and PowerBA on all ten BA benchmarks. H-GBP reaches lower final cost on these runs while spending less time in the nonlinear iterations. To avoid mixing timing boundaries, the horizontal axis sums the recorded outer durations for both methods, excluding one-time setup, loading, and normalization; the main table separately reports solver totals. Each point is one outer attempt, and rejected PowerBA trials retain the previous accepted cost while their elapsed time is still counted. Each H-GBP curve is a complete run with median total runtime among three repeats, not a curve assembled from per-iteration minima.

\vspace{5em}
\section{Discussion}
\label{app:discussion}
\paragraph{Toward multiple levels.}
Algorithmically, extending the current two-level construction to a fully multilevel hierarchy is nontrivial. The upper-level system
\[
A_c\delta z=r_c
\]
is a Galerkin algebraic system, and different coarse factor-graph realizations may induce the same matrix equation. Thus a multilevel GBP construction must decide not only the dimension of each coarse variable, but also how unary and binary factors should be assigned among abstracted coarse nodes. A natural attempt is to assign one coarse variable to each fine group, with dimension equal to the retained local eigenbasis size, project within-group factors into unary coarse factors, and project cross-group factors into between-group coarse factors. In our preliminary experiments, however, GBP on this induced coarse factor graph typically required many iterations to solve the coarse system and was slower than a direct sparse solve at that level. This does not diminish the two-level result; rather, it highlights a compelling open direction: designing purely message-passing upper levels that retain the scalability and locality needed for fully distributed spatial-intelligence systems.

\paragraph{Broader applications.}
Finally, the abstraction-and-recovery principle behind H-GBP is not specific to pose graphs or bundle adjustment's cameras. Many sparse factor-graph problems have natural intermediate structures that could define useful coarse abstractions. In multi-robot SLAM and navigation, each robot's local subgraph could form a group, with higher-level messages coordinating inter-robot constraints. In visual odometry, pixels, patches, or object-level regions could be grouped into progressively coarser variables to accelerate GBP-based inference. Exploring such applications may reveal when hierarchical message passing can provide both distributed computation and strong global correction in broader sparse graphical models.

\end{document}